\let\Ginclude@graphics\@org@Ginclude@graphics
\newcommand{\bm}[1]{\boldsymbol{#1}}
\newcommand{\vp}{{\mathbf{p}}}
\newcommand{\vv}{{\mathbf{v}}}
\newcommand{\vw}{{\mathbf{w}}}
\newcommand{\vx}{{\mathbf{x}}}
\newcommand{\vlam}{{\bm{\lambda}}}
\newcommand{\vxi}{{\bm{\xi}}}
\newcommand{\cX}{{\mathcal{X}}}
\newcommand{\Proj}{{\mathrm{Proj}}} 
\DeclareMathOperator*{\argmin}{arg\,min} 
\newcommand{\bc}{\begin{center}}
\newcommand{\ec}{\end{center}}
\newcommand{\bdm}{\begin{displaymath}}
\newcommand{\edm}{\end{displaymath}}
\newcommand{\beq}{\begin{equation}}
\newcommand{\eeq}{\end{equation}}
\newcommand{\bfl}{\begin{flushleft}}
\newcommand{\efl}{\end{flushleft}}
\newcommand{\bt}{\begin{tabbing}}
\newcommand{\et}{\end{tabbing}}
\newcommand{\beqn}{\begin{eqnarray}}
\newcommand{\eeqn}{\end{eqnarray}}
\newcommand{\beqs}{\begin{align*}} 
\newcommand{\eeqs}{\end{align*}}  
\newtheorem{assumption}{Assumption}
\begin{document}
	
	\title{Large-scale Optimization of Partial AUC in a Range of False Positive Rates}
	
	\author{\name Yao Yao \email yao-yao-2@uiowa.edu\\
		\addr Department of Mathematics\\
		The University of Iowa
		\AND
		\name Qihang Lin \email qihang-lin@uiowa.edu \\
		\addr Tippie College of Business\\
		The University of Iowa
        \AND 
        \name Tianbao Yang \email tianbao-yang@uiowa.edu\\
        \addr Department of Computer Science \& Engineering\\ Texas A\&M University
        }
	
\editor{}
	
\maketitle

\begin{abstract}
The area under the ROC curve (AUC) is one of the most widely used performance measures for classification models in machine learning. However, it summarizes the true positive rates (TPRs) over all false positive rates (FPRs) in the ROC space, which may include the FPRs with no practical relevance in some applications. The partial AUC, as a generalization of the AUC, summarizes only the TPRs over a specific range of the FPRs and is thus a more suitable performance measure in many real-world situations. Although partial AUC optimization in a range of FPRs had been studied, existing algorithms are not scalable to big data and not applicable to deep learning. To address this challenge, we cast the problem into a non-smooth difference-of-convex (DC) program for any smooth predictive functions (e.g., deep neural networks), which allowed us to develop an efficient approximated gradient descent method based on the Moreau envelope smoothing technique, inspired by recent advances in non-smooth DC optimization. To increase the efficiency of large data processing, we used an efficient stochastic block coordinate update in our algorithm. Our proposed algorithm can also be used to minimize the sum of ranked range loss, which also lacks efficient solvers. We established a complexity of $\tilde O(1/\epsilon^6)$ for finding a nearly $\epsilon$-critical solution. Finally, we numerically demonstrated the effectiveness of our proposed algorithms in training both linear models and deep neural networks for partial AUC maximization and sum of ranked range loss minimization. 
\end{abstract}

\section{Introduction}
\label{sec:intro}
The area under the receiver operating characteristic (ROC) curve (AUC) is one of the most widely used performance measures for classifiers in machine learning, especially when the data is imbalanced between the classes~\citep{hanley1982meaning,bradley1997use}. Typically, the classifier produces a score for each data point. Then a data point is classified as positive if its score is above a chosen threshold; otherwise, it is classified as negative. Varying the threshold will change the true positive rate (TPR) and the false positive rate (FPR) of the classifier. The ROC curve shows the TPR as a function of the FPR that corresponds to the same threshold. Hence, maximizing the AUC of a classifier is essentially maximizing the classifier's average TPR over all FPRs from zero to one. However, for some applications, some FPR regions have no practical relevance. So does the TPR over those regions. For example, in clinical practice,  a high FPR in diagnostic tests often results in a high monetary cost, so people may only need to maximize the TPR when the FPR is low~\citep{dodd2003partial,ma2013use,yang2019two}. Moreover, since two models with the same AUC can still have different ROCs, the AUC does not always reflect the true performance of a model that is needed in a particular production environment~\citep{bradley2014half}.

As a generalization of the AUC, the partial AUC (pAUC) only measures the area under the ROC curve that is restricted between two FPRs. A probabilistic interpretation of the pAUC can be found in~\citet{dodd2003partial}. In contrast to the AUC, the pAUC represents the average TPR only over a relevant range of FPRs and provides a performance measure that is more aligned with the practical needs in some applications. 

In literature, the existing algorithms for training a classifier by maximizing the pAUC include the boosting method~\citep{komori2010boosting} and the cutting plane algorithm \citep{narasimhan2013svmpauctight,narasimhan2013structural,narasimhan2017support}. However, the former has no theoretical guarantee, and the latter applies only to linear models. More importantly, both methods require processing all the data in each iteration and thus, become computationally inefficient for large datasets. 

In this paper, we proposed an approximate gradient method for maximizing the pAUC that works for nonlinear models (e.g., deep neural networks) and only needs to process randomly sampled positive and negative data points of any size in each iteration. In particular, we formulated the maximization of the pAUC as a non-smooth difference-of-convex (DC) program~\citep{tao1997convex,le2018dc}. Due to non-smoothness, most existing DC optimization algorithms cannot be applied to our formulation. Motivated by \citet{sun2021algorithms}, we approximate the two non-smooth convex components in the DC program by their Moreau envelopes and obtain a smooth approximation of the problem, which will be solved using the gradient descent method. Since the gradient of the smooth problem cannot be calculated explicitly, we approximated the gradient by solving the two proximal-point subproblems defined by each convex component using the stochastic block coordinate descent (SBCD) method. Our method, besides its low per-iteration cost, has a rigorous theoretical guarantee, unlike the existing methods. \emph{In fact, we show that our method finds a nearly $\epsilon$-critical point of the pAUC optimization problem in $\tilde O(\epsilon^{-6})$ iterations with only small samples of positive and negative data points processed per iteration.\footnote{Throughout the paper, $\tilde O(\cdot)$ suppresses all logarithmic factors.} This is the main contribution of this paper. }

Note that, for non-convex non-smooth optimization, the existing stochastic methods \citep{davis2019proximally,davis2018stochastic} find an nearly $\epsilon$-critical point in $O(\epsilon^{-4})$ iterations under a weak convexity assumption. Our method needs $O(\epsilon^{-6})$ iterations because our problem is a DC problem with both convex components non-smooth which is much more challenging than a weakly non-convex minimization problem. In addition, our iteration number matches the known best iteration complexity for non-smooth non-convex min-max optimization \citep{rafique2021weakly,liu2021first} and non-smooth non-convex constrained optimization~\citep{ma2020quadratically}. 

In addition to pAUC optimization, our method can be also used to minimize the sum of ranked range (SoRR) loss, which can be viewed as a special case of pAUC optimization. Many machine learning models are trained by minimizing an objective function, which is defined as the sum of losses over all training samples~\citep{vapnik1992principles}. Since the sum of losses weights all samples equally, it is insensitive to samples from minority groups. Hence, the sum of top-$k$ losses~\citep{shalev2016minimizing,fan2017learning} is often used as an alternative objective function because it provides the model with robustness to non-typical instances. However, the sum of top-$k$ losses can be very sensitive to outliers, especially when $k$ is small. To address this issue, \citet{hu2020learning} proposed the SoRR loss as a new learning objective, which is defined as the sum of a consecutive sequence of losses from any range after the losses are sorted. Compared to the sum of all losses and the sum of top-$k$ losses, the SoRR loss maintains a model's robustness to a minority group but also reduces the model's sensitivity to outliers. See Fig.1 in \citet{hu2020learning} for an illustration of the benefit of using the SoRR loss over other ways of aggregating individual losses.

To minimize the SoRR loss, \citet{hu2020learning} applied a difference-of-convex algorithm (DCA)~\citep{tao1997convex,an2005dc}, which linearizes the second convex component and solves the resulting subproblem using the stochastic subgradient method. DCA has been well studied in literature; but when the both components are non-smooth, as in our problem, only asymptotic convergence results are available. To establish the total number of iterations needed to find an $\epsilon$-critical point in a non-asymptotic sense, most existing studies had to assume that at least one of the components is differentiable, which is not the case in this paper. Using the approximate gradient method presented in this paper, one can find a nearly $\epsilon$-critical point of the SoRR loss optimization problem in $\tilde O(\epsilon^{-6})$ iterations. 

\section{Related Works}\label{sec:relatedworks}
The pAUC has been studied for decades \citep{mcclish1989analyzing,thompson1989statistical,jiang1996receiver,yang2022survey}. However, most studies focused on its estimation \citep{dodd2003partial} and application as a performance measure, while only a few studies were devoted to numerical algorithms for optimizing the pAUC. Efficient optimization methods have been developed for maximizing AUC and multiclass AUC by \citet{ying2016stochastic} and \citet{yang2021learning}, but they cannot be applied to pAUC. Besides the boosting method~\citep{komori2010boosting} and the cutting plane algorithm~\citep{narasimhan2013svmpauctight,narasimhan2013structural,narasimhan2017support} mentioned in the previous section, \citet{ueda2018partial, yang2022optimizing, yang2021all, zhu2022auc} developed surrogate optimization techniques that directly maximize a smooth approximation of the pAUC or the two-way pAUC~\citep{yang2019two}. However, their approaches can only be applied when the FPR starts from exactly zero. On the contrary, our method allows the FPR to start from any value between zero and one. \citet{wang2011marker} and \citet{ricamato2011partial} developed algorithms that use the pAUC as a criterion for creating a linear combination of multiple existing classifiers while we consider directly train a classifier using the pAUC. 

DC optimization has been studied since the 1950s~\citep{alexandroff1950surfaces,hartman1959functions}. We refer interested readers to \citet{tuy1995dc,tao1998dc,tao1997convex,pang2017computing,le2018dc}, and the references therein. The actively studied numerical methods for solving a DC program include DCA~\citep{tao1998dc,tao1997convex,an2005dc,souza2016global}, which is also known as the concave-convex procedure~\citep{yuille2003concave,sriperumbudur2009convergence,lipp2016variations}, the proximal DCA~\citep{sun2003proximal,moudafi2006convergence,moudafi2008difference,an2017convergence}, and the direct gradient methods~\citep{khamaru2018convergence}. However, when the two convex components are both non-smooth, the existing methods have only asymptotic convergence results except the method by \citet{abbaszadehpeivasti2021rate}, who considered a stopping criterion different from ours. When at least one component is smooth, non-asymptotic convergence rates have been established with and without the Kurdyka-{\L}ojasiewicz (KL) condition~\citep{souza2016global,artacho2018accelerating,wen2018proximal,an2017convergence,khamaru2018convergence}.


The algorithms mentioned above are deterministic and require processing the entire dataset per iteration. Stochastic algorithms that process only a small data sample per iteration have been studied~\citep{mairal2013stochastic,nitanda2017stochastic,le2017stochastic,deng2020efficiency,he2021accelerated}. However, they all assumed smoothness on at least one of the two convex components in the DC program. The stochastic methods of \citet{xu2019stochastic,thi2021online,an2019stochastic} can be applied when both components are non-smooth but their methods require an unbiased stochastic estimation of the gradient and/or value of the two components, which is not available in the DC formulation of the pAUC maximization problem in this paper. 



The technique most related to our work is the smoothing method based on the Moreau envelope~\citep{ellaia1984contribution,gabay1982minimizing,hiriart1985generalized,hiriart1991regularize,sun2021algorithms,moudaficomplete}. Our work is motivated by~\citet{sun2021algorithms,moudaficomplete}, but the important difference is that they studied deterministic methods and assumed either that one function is smooth or that the proximal-point subproblems can be solved exactly, which we do not assume. However, \citet{sun2021algorithms,moudaficomplete} consider a more general problem and study the fundamental properties of the smoothed function such as its Lipschitz smoothness and how its stationary points correspond to those of the original problems. We mainly focus on partial AUC optimization which has a special structure we can utilize when solving the proximal-point subproblems. Additionally, \citet{sun2021algorithms} developed an algorithm when there were linear equality constraints, which we do not consider in this paper. 



\section{Preliminary}\label{sec:preliminary}
We consider a classical binary classification problem, where the goal is to build a predictive model that predicts a binary label $y\in\{1,-1\}$ based on a feature vector $\vx\in\mathbb{R}^p$. Let $h_\vw:\mathbb{R}^p\rightarrow \mathbb{R}$ be the predictive model parameterized by a vector $\vw\in\mathbb{R}^d$, which produces a score $h_\vw(\vx)$ for $\vx$. Then $\vx$ is classified as positive ($y=1$) if $h_\vw(\vx)$ is above a chosen threshold and classified as negative ($y=-1$), otherwise. 

Let $\cX_+=\{\vx_i^+\}_{i}^{N_+}$ and $\cX_-=\{\vx_i^-\}_{i}^{N_-}$ be the sets of feature vectors of positive and negative training data, respectively. The problem of learning $h_\vw$ through maximizing its empirical AUC on the training data can be formulated as 
\begin{equation}
\label{eq:maxAUC}
		\max_{\vw}\frac{1}{N_+N_-} \sum_{i=1}^{N_+}\sum_{j=1}^{N_-}\mathbf{1}(h_{\vw}(\vx_i^+)>h_{\vw}(\vx_j^-)),
\end{equation}
where $\mathbf{1}(\cdot)$ is the indicator function which equals one if the inequality inside the parentheses holds and equals zero, otherwise. According to the introduction, pAUC can be a better performance measure of $h_{\vw}$ than AUC. Consider two FPRs $\alpha$ and $\beta$ with $0\leq \alpha<\beta\leq 1$. For simplicity of exposition, we assume $N_-\alpha$ and $N_-\beta$ are both integers. 
Let $m=N_-\alpha$ and $n=N_-\beta$. The problem of maximizing  the empirical pAUC with FPR between $\alpha$ and $\beta$ can be formulated as
\begin{equation} 
\label{eq:maxpAUC}
\max_{\vw} \frac{1}{N_+(n-m)} \sum_{i=1}^{N_+}\sum_{j=m+1}^{n}\mathbf{1}(h_{\vw}(\vx_i^+)>h_{\vw}(\vx_{[j]}^-)),
\end{equation}
where $[j]$ denotes the index of the $j$th largest coordinate in vector $(h_{\vw}(\vx_{j}^-))_{j=1}^{N_-}$ with ties broken arbitrarily. Note that $N_+(n-m)$ in \eqref{eq:maxpAUC} is a normalizer that makes the objective value between zero and one. Solving \eqref{eq:maxpAUC} is challenging due to discontinuity. Let $\ell:\mathbb{R}\rightarrow\mathbb{R}$ be a differential non-increasing loss function. Problem \eqref{eq:maxpAUC} can be approximated by the loss minimization problem
\begin{equation}
\label{eq:minpAUCloss}
\min_{\vw} \frac{1}{N_+(n-m)} \sum_{i=1}^{N_+}\sum_{j=m+1}^{n}\ell(h_{\vw}(\vx_i^+)-h_{\vw}(\vx_{[j]}^-)).
\end{equation}
To facilitate the discussion, we first introduce a few notations. Given a vector $S=\left(s_i\right)_{i=1}^N\in \mathbb{R}^N$ and an integer $l$ with $0\leq l\leq N$,  the sum of the top-$l$ values in $S$ is
\begin{equation}
\label{eq:phi}
\textstyle
\phi_l(S):=\sum_{j=1}^ls_{[j]},
\end{equation}
where $[j]$ denotes the index of the $j$th largest coordinate in $S$ with ties broken arbitrarily. For integers $l_1$ and $l_2$ with $0\leq l_1<l_2\leq N$, $\phi_{l_2}(S)-\phi_{l_1}(S)$ is the sum from the $(l_1+1)$th to the $l_2$th (inclusive) largest coordinates of $S$, also called a \emph{sum of ranked range} (SoRR). In addition, we define vectors
$$
S_i(\vw):=\left(s_{ij}(\vw)\right)_{j=1}^{N_-}
$$
for $i=1,\dots,N_+$, where 
$
s_{ij}(\vw):=\ell(h_{\vw}(\vx_i^+)-h_{\vw}(\vx_{j}^-))
$
for $i=1,\dots,N_+$ and $j=1,\dots,N_-$. Since $\ell$ is non-increasing, the $j$th largest coordinate of $S_i(\vw)$ is $\ell(h_{\vw}(\vx_i^+)-h_{\vw}(\vx_{[j]}^-))$. 
As a result, we have, for $i=1,\dots,N_+$,
\begin{eqnarray*}
\textstyle
 \sum_{j=m+1}^{n}\ell(h_{\vw}(\vx_i^+)-h_{\vw}(\vx_{[j]}^-))=\phi_n(S_i(\vw))-\phi_m(S_i(\vw)).
\end{eqnarray*}
Hence, after dropping the normalizer, \eqref{eq:minpAUCloss} can be equivalently written as
\begin{eqnarray}
\label{eqn:pAUC_raw}
		F^*
		=\min_{\vw}\left\{F(\vw):= f^n(\vw)-f^m(\vw)\right\},
\end{eqnarray}
where
\begin{eqnarray} 
\label{eq:flpAUC}
\textstyle
f^l(\vw)=\sum_{i=1}^{N_+}\phi_l(S_i(\vw)) \quad\text{ for } l=m,n.
\end{eqnarray}

Next, we introduce an interesting special case of \eqref{eqn:pAUC_raw}, namely, the problem of minimizing SoRR loss. We still consider a supervised learning problem but the target $y\in\mathbb{R}$ does not need to be binary. We want to predict $y$ based on a feature vector $\vx\in\mathbb{R}^p$ using $h_\vw(\vx)$. With a little abuse of notation, we measure the discrepancy between $h_\vw(\vx)$ and $y$ by $\ell(h_\vw(\vx),y)$, where $\ell:\mathbb{R}^2\rightarrow \mathbb{R}_+$ is a loss function. We consider learning the model's parameter $\vw$ from a training set $\mathcal{D}=\{(\vx_j,y_j)\}_{j=1}^N$, where $\vx_j\in\mathbb{R}^p$ and $y_j\in\mathbb{R}$ for $j=1,\dots,N$, by minimizing the SoRR loss. More specifically,  we define vector 
$$
S(\vw)=(s_j(\vw))_{j=1}^N,
$$
where $s_j(\vw):=\ell(h_\vw(\vx_j),y_j),~j=1,\dots,N$. Recall \eqref{eq:phi}. For any integers $m$ and $n$ with $0\leq m<n\leq N$, the problem of minimizing the SoRR loss with a range from $m+1$ to $n$ is formulated as
    $\min_{\vw }\left\{\phi_n(S(\vw))-\phi_m(S(\vw))\right\}$,
which is an instance of \eqref{eqn:pAUC_raw} with
\begin{equation}\label{eq:flSoRR}
\begin{aligned}
     f^l= \phi_l(S(\vw)) \text{ for } l=m,n.
\end{aligned}
\end{equation}
If we view $S_i(\vw)$ and $S(\vw)$ only as vector-value functions of $\vw$ but ignore how they are formulated using data, \eqref{eq:flSoRR} is a special case of \eqref{eq:flpAUC} with $N_+=1$ and $N_-=N$. 

\section{Nearly Critical Point and Moreau Envelope Smoothing}
We first develop a stochastic algorithm for \eqref{eqn:pAUC_raw} with $f^l$ defined in \eqref{eq:flpAUC}. To do so, we make the following assumptions, which are satisfied by many smooth $h_{\vw}$'s and $\ell$'s.
\begin{assumption}
\label{assumptions2}
(a) $s_{ij}(\vw)$ is smooth and there exists $L\geq0$ such that\footnote{In this paper, $\|\cdot\|$ represents Euclidean norm.}
$\left\|\nabla s_{ij}(\vw)- \nabla s_{ij}(\vv)\right\|\leq L\|\vw-\vv\|$ for any $\vw,\vv\in\mathbb{R}^d$, $i=1,\dots,N_+$ and $j=1,\dots,N_-$. (b) There exists $B\geq0$ such that $\|\nabla s_{ij}(\vw)\|\leq B$ for any $\vw\in\mathbb{R}^d$, $i=1,\dots,N_+$ and $j=1,\dots,N_-$. (c) $F^*>-\infty$. 
\end{assumption}

Given $f:\mathbb{R}^d\rightarrow \mathbb{R}\cup\{+\infty\}$, the subdifferential of $f$ is
\small
\begin{align*}
\partial f(\vw)=
\left\{\vxi\in\mathbb{R}^d\left| 
f(\vv)
\geq h(\vw)+\vxi^\top(\vv-\vw)+o(\|\vv-\vw\|_2), ~\vv\rightarrow\vw \right.
\right\},
\end{align*}
\normalsize
where each element in $\partial f(\vw)$ is called a subgradient of $f$ at $\vw$.
We say $f$ is \textbf{$\rho$-weakly convex} for some $\rho\geq0$ if 
$f(\vv)\geq f(\vw)+\left\langle\vxi,\vv-\vw\right\rangle-\frac{\rho}{2}\|\vv-\vw\|^2$ for any $\vv$ and $\vw$ and $\vxi\in\partial f(\vw)$ and say $f$ is \textbf{$\rho$-strongly convex} for some $\rho\geq0$ if 
$f(\vv)\geq f(\vw)+\left\langle\vxi,\vv-\vw\right\rangle+\frac{\rho}{2}\|\vv-\vw\|^2$ for any $\vv$ and $\vw$ and $\vxi\in\partial f(\vw)$. It is known that, if $f$ is $\rho$-weakly convex, then $f(\vw)+\frac{1}{2\mu}\|\vw\|^2$ is a $(\mu^{-1}-\rho)$-strongly convex function when $\mu^{-1}>\rho$.

Under Assumption~\ref{assumptions2}, $\phi_l(S_i(\vw))$ is a composite of the closed convex function $\phi_l$ and the smooth map $S_i(\vw)$. According to Lemma 4.2 in \citet{drusvyatskiy2019efficiency}, we have the following lemma. 
\begin{lemma}
\label{lem:weakconvex}
Under Assumption~\ref{assumptions2}, $f^m(\vw)$ and $f^n(\vw)$ in \eqref{eq:flpAUC} are $\rho$-weakly convex with $\rho:=N_+N_-L$.
\end{lemma}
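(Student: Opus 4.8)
The plan is to exploit the variational (support-function) representation of the sum-of-top-$l$ map $\phi_l$ and combine it with the smoothness hypothesis in Assumption~\ref{assumptions2}(a). Concretely, the sum of the $l$ largest coordinates of a vector $S\in\mathbb{R}^{N_-}$ can be written as the pointwise maximum
$$
\phi_l(S)=\max_{c\in C_l}\langle c,S\rangle,\qquad C_l:=\Big\{c\in[0,1]^{N_-}:\textstyle\sum_{j=1}^{N_-}c_j=l\Big\},
$$
since $C_l$ is exactly the convex hull of the $0/1$ indicator vectors that select $l$ coordinates, and a linear functional over a polytope attains its maximum at a vertex. This reduces the analysis of the nonsmooth, selection-based $\phi_l$ to a family of smooth functions indexed by $c$.

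First I would fix $c\in C_l$ and study $g_{c,i}(\vw):=\langle c,S_i(\vw)\rangle=\sum_{j=1}^{N_-}c_j\,s_{ij}(\vw)$. By Assumption~\ref{assumptions2}(a) each $s_{ij}$ has an $L$-Lipschitz gradient, so $\nabla g_{c,i}(\vw)=\sum_j c_j\nabla s_{ij}(\vw)$ satisfies $\|\nabla g_{c,i}(\vw)-\nabla g_{c,i}(\vv)\|\le\sum_j c_j L\|\vw-\vv\|=lL\|\vw-\vv\|$, using $c_j\ge0$ and $\sum_j c_j=l$. Hence $g_{c,i}$ is $lL$-smooth and, in particular, $lL$-weakly convex.

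The key structural step is that a pointwise maximum of $\rho$-weakly convex functions is again $\rho$-weakly convex: adding $\frac{lL}{2}\|\vw\|^2$ makes each $g_{c,i}$ convex, the supremum over $c\in C_l$ of these convex functions is convex, and that supremum is precisely $\phi_l(S_i(\vw))+\frac{lL}{2}\|\vw\|^2$ (the quadratic term is independent of $c$ and factors out). Therefore $\vw\mapsto\phi_l(S_i(\vw))$ is $lL$-weakly convex for every $i$. Finally, weak-convexity constants add under summation, so $f^l(\vw)=\sum_{i=1}^{N_+}\phi_l(S_i(\vw))$ is $N_+lL$-weakly convex; since $l\in\{m,n\}$ with $n\le N_-$, we have $N_+lL\le N_+N_-L=\rho$, and a $\rho_0$-weakly convex function with $\rho_0\le\rho$ is also $\rho$-weakly convex, giving the claim for both $f^m$ and $f^n$. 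Alternatively, one can invoke Lemma~4.2 of \citet{drusvyatskiy2019efficiency} directly, viewing $\phi_l(S_i(\vw))$ as the composition of the convex function $\phi_l$ (which is $\sqrt{l}$-Lipschitz in the Euclidean norm, its subgradients being convex combinations of $0/1$ selectors) with the smooth map $S_i$ whose Jacobian is $\sqrt{N_-}L$-Lipschitz in operator norm; this yields the weak-convexity modulus $\sqrt{lN_-}\,L\le N_-L$ per term and the same bound after summing over $i$.

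The routine parts are the gradient-Lipschitz estimate and the additivity of weak-convexity constants. The one step deserving care is the identification of $\sup_{c}\big(g_{c,i}(\vw)+\frac{lL}{2}\|\vw\|^2\big)$ with $\phi_l(S_i(\vw))+\frac{lL}{2}\|\vw\|^2$ and the justification that a supremum of convex functions is convex even when the index set $C_l$ is a continuum; this is standard but is exactly where the nonsmoothness of $\phi_l$ is absorbed, and it is also the point at which the stated constant $N_+N_-L$ is seen to be a (slightly loose) upper bound rather than the tight modulus $N_+lL$.
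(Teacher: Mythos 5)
Your proof is correct, and it takes a genuinely different (and more self-contained) route than the paper. The paper's entire argument is a one-line appeal to Lemma~4.2 of \citet{drusvyatskiy2019efficiency}, viewing $\phi_l(S_i(\vw))$ as the composition of the convex, Lipschitz function $\phi_l$ with the smooth map $S_i$; it does not unpack the constants and simply states the modulus $\rho=N_+N_-L$. You instead prove the claim from first principles via the support-function representation $\phi_l(S)=\max_{c\in C_l}\langle c,S\rangle$, showing each $g_{c,i}$ is $lL$-smooth and then using that a pointwise supremum of $lL$-weakly convex functions is $lL$-weakly convex (the quadratic perturbation factoring out of the sup). This buys two things the paper's citation does not: an elementary argument requiring no external lemma, and a sharper modulus $N_+lL\leq N_+N_-L$ (your alternative remark, $N_+\sqrt{lN_-}L$ from the composite route with the $\sqrt{l}$-Lipschitz constant of $\phi_l$ and the $\sqrt{N_-}L$-Lipschitz Jacobian of $S_i$, is also a correct tightening). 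The paper's approach buys brevity and consistency with the weak-convexity machinery it reuses later (e.g., Lemma~\ref{lem:partialweakconvex} is proved by essentially the same expansion). All the steps you flag as delicate --- the identification of the supremum, convexity of a supremum over a continuum, and additivity of weak-convexity moduli --- are handled correctly.
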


To solve \eqref{eqn:pAUC_raw} numerically, we need to overcome the following challenges. 
(i) $F(\vw)$ is non-convex even if each $s_{ij}(\vw)$ is convex. In fact, $F(\vw)$ is a DC function because, by Lemma~\ref{lem:weakconvex}, we can represent $F(\vw)$ as the difference of the convex functions $f^n(\vw)+\frac{1}{2\mu}\|\vw\|^2$ and $f^m(\vw)+\frac{1}{2\mu}\|\vw\|^2$ with $\mu^{-1}>\rho$. (ii) $F(\vw)$ is non-smooth due to $\phi_l$ so that finding an approximate critical point (defined below) of $F(\vw)$ is difficult. (iii) Computing the exact subgradient of $f^l(\vw)$ for $l=m,n$ requires processing $N_+N_-$ data pairs, which is computationally expensive for a large data set. 

Because of challenges (i) and (ii), we have to consider a reasonable goal when solving \eqref{eqn:pAUC_raw}. We say $\vw\in \mathbb{R}^d$ is a \emph{critical point} of $\eqref{eqn:pAUC_raw}$ if
$\mathbf{0}\in \partial f^n(\vw)-\partial f^m(\vw)$.
Given $\epsilon>0$, we say $\vw\in \mathbb{R}^d$ is an \emph{$\epsilon$-critical point} of $\eqref{eqn:pAUC_raw}$ if there exists $\vxi\in \partial f^n(\vw)-\partial f^m(\vw)$ such that $\|\vxi\|\leq\epsilon$. A critical point can only be achieved asymptotically in general.\footnote{A stronger notion than criticality is (directional) stationarity, which can also be achieved asymptotically~\citep{pang2017computing}.} Within finitely many iterations, there also exists no algorithm that can find an $\epsilon$-critical point unless at least one of $f^m$ and $f^n$ is smooth, e.g., \citet{xu2019stochastic}. Since $f^m$ and $f^n$ are both non-smooth, we have to consider a weaker but achievable target, which is a nearly $\epsilon$-critical point defined below. 
\begin{definition}
\label{def:critical}
Given $\epsilon>0$, we say $\vw\in \mathbb{R}^d$ is a nearly $\epsilon$-critical point of $\eqref{eqn:pAUC_raw}$ if there exist $\vxi$, $\vw'$, and $\vw''\in\mathbb{R}^d$ such that $\vxi \in \partial f^n(\vw')-\partial f^m(\vw'')$ and $\max\left\{\| \vxi\|, \| \vw-\vw'\|, \| \vw-\vw''\|\right\}\leq \epsilon$.
\end{definition}
Definition~\ref{def:critical} is reduced to the $\epsilon$-stationary point defined by~\citet{sun2021algorithms,moudaficomplete} when $\vw$ equals $\vw'$ or $\vw''$. However, obtaining their $\epsilon$-stationary point requires exactly solving the proximal mapping of $f^m$ or $f^n$ while finding a nearly $\epsilon$-critical point requires only solving the proximal mapping inexactly. When $\vw$ is generated by a stochastic algorithm, we also call $\vw$ a nearly $\epsilon$-critical point if it satisfies Definition~\ref{def:critical} with each $\|\cdot\|$ replaced by $\mathbb{E}\|\cdot\|$.



Motivated by~\citet{sun2021algorithms} and \citet{moudaficomplete}, we approximate non-smooth $F(\vw)$ by a smooth function using the Moreau envelopes. 
Given a proper, $\rho$-weakly convex and closed function $f$ on $\mathbb{R}^d$, the \emph{Moreau envelope} of $f$ with the smoothing parameter $\mu\in(0,\rho^{-1})$ is defined as 
\begin{equation}\label{eqn:fmu}
f_{\mu}(\vw):=\min_{\vv} \Big\{f(\vv)+\frac{1}{2\mu}\|\vv-\vw\|^2\Big\}
\end{equation}
and the \emph{proximal mapping} of $f$ is defined as 
\begin{equation}\label{eqn:fmuv}
\vv_{\mu f}(\vw):=\argmin_{\vv}\Big\{f(\vv)+\frac{1}{2\mu}\|\vv-\vw\|^2\Big\}.
\end{equation}
Note that the $\vv_{\mu f}(\vw)$ is unique because the minimization above is strongly convex.
Standard results show that $f_{\mu}(\vw)$ is smooth with $\nabla f_{\mu}(\vw)=\mu^{-1}(\vw-\vv_{\mu f}(\vw))$ and $\vv_{\mu f}(\vw)$ is $(1-\mu\rho)^{-1}$-Lipschitz continuous. See~Proposition 13.37 in \citet{rockafellar2009variational} and  Proposition 1 in \citet{sun2021algorithms}. Hence, using the Moreau envelope, we can construct a smooth approximation of \eqref{eqn:pAUC_raw} as follows
\begin{eqnarray}\label{eqn:Fmu}
    \min_{\vw} \left\{F^{\mu}:=f_{\mu}^n(\vw)- f_{\mu}^m(\vw)\right\}.
\end{eqnarray}
Function $F^{\mu}$ has the following properties. The first property is shown in \citet{sun2021algorithms}. We give the proof for the second in Appendix~\ref{sec:proofFmuproperty}.
\begin{lemma}\label{lem:Fmuproperty}
Suppose Assumption~\ref{assumptions2} holds and $\mu>\rho^{-1}$ with $\rho$ defined in Lemma~\ref{lem:weakconvex}. The following claims hold
\begin{enumerate}[leftmargin=*]
    \item $\nabla  F^{\mu}(\vw)=\mu^{-1}(\vv_{\mu f^m}(\vw)-\vv_{\mu f^n}(\vw))$ 
   and it is $L_\mu$-Lipschitz continuous with $L_\mu=\frac{2}{\mu-\mu^2\rho}$.
    \item If $\bar\vv$ and $\vw$ are two random vectors such that $\mathbb{E}\|\nabla  F^{\mu}(\vw)\|^2\leq \min\{1,\mu^{-2}\}\epsilon^2/4$ and $\mathbb{E}\|\bar\vv-\vv_{\mu f^l}(\vw)\|^2\leq\epsilon^2/4$ for either $l=m$ or $l=n$, then $\bar\vv$ is a nearly $\epsilon$-critical points of $\eqref{eqn:pAUC_raw}$.
\end{enumerate}
\end{lemma}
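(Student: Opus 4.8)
The plan is to treat the two claims separately, with the second being the substantive one since the first is essentially standard Moreau-envelope calculus. For claim~1 I would obtain the gradient formula by applying the identity $\nabla f_\mu(\vw)=\mu^{-1}(\vw-\vv_{\mu f}(\vw))$ to $f^n$ and $f^m$ and subtracting, giving $\nabla F^\mu(\vw)=\mu^{-1}(\vv_{\mu f^m}(\vw)-\vv_{\mu f^n}(\vw))$. For the Lipschitz constant I would use that each proximal map $\vv_{\mu f^l}$ is $(1-\mu\rho)^{-1}$-Lipschitz (as recalled before the lemma), so the difference of the two maps is $2(1-\mu\rho)^{-1}$-Lipschitz; the extra factor $\mu^{-1}$ then yields $L_\mu=\frac{2}{\mu(1-\mu\rho)}=\frac{2}{\mu-\mu^2\rho}$.

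For claim~2 the crucial ingredient is the first-order optimality condition of the strongly convex proximal subproblem \eqref{eqn:fmuv}: writing $\vv^l:=\vv_{\mu f^l}(\vw)$, the minimizer satisfies $\mathbf{0}\in\partial f^l(\vv^l)+\mu^{-1}(\vv^l-\vw)$, i.e.\ $\mu^{-1}(\vw-\vv^l)\in\partial f^l(\vv^l)$ for $l=m,n$. This immediately suggests the certificate for Definition~\ref{def:critical}: take $\vw'=\vv^n$, $\vw''=\vv^m$, and $\vxi:=\mu^{-1}(\vw-\vv^n)-\mu^{-1}(\vw-\vv^m)=\mu^{-1}(\vv^m-\vv^n)=\nabla F^\mu(\vw)$. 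By construction $\vxi\in\partial f^n(\vw')-\partial f^m(\vw'')$, so it only remains to verify the three expected-norm bounds.

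Two of those bounds are direct. By Jensen's inequality $\mathbb{E}\|\vxi\|=\mathbb{E}\|\nabla F^\mu(\vw)\|\le(\mathbb{E}\|\nabla F^\mu(\vw)\|^2)^{1/2}\le\tfrac{\epsilon}{2}\le\epsilon$, and if the controlled index is, say, $l=n$, then $\mathbb{E}\|\bar\vv-\vw'\|=\mathbb{E}\|\bar\vv-\vv^n\|\le(\mathbb{E}\|\bar\vv-\vv^n\|^2)^{1/2}\le\tfrac{\epsilon}{2}\le\epsilon$. The one genuinely delicate point, and the step I expect to be the main obstacle, is bounding the distance to the \emph{uncontrolled} proximal point $\vw''=\vv^m$, since the hypothesis only controls $\bar\vv$ relative to one of the two proximal points. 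I would handle this with the triangle inequality $\|\bar\vv-\vv^m\|\le\|\bar\vv-\vv^n\|+\|\vv^n-\vv^m\|$ together with the identity $\|\vv^n-\vv^m\|=\mu\|\nabla F^\mu(\vw)\|$ from claim~1.

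This last estimate is exactly where the factor $\min\{1,\mu^{-2}\}$ in the hypothesis is used and must be tracked carefully. From $\mathbb{E}\|\nabla F^\mu(\vw)\|^2\le\min\{1,\mu^{-2}\}\epsilon^2/4$ and Jensen, $\mu\,\mathbb{E}\|\nabla F^\mu(\vw)\|\le\mu\sqrt{\min\{1,\mu^{-2}\}}\,\tfrac{\epsilon}{2}=\sqrt{\min\{\mu^2,1\}}\,\tfrac{\epsilon}{2}\le\tfrac{\epsilon}{2}$, so that $\mathbb{E}\|\bar\vv-\vv^m\|\le\tfrac{\epsilon}{2}+\tfrac{\epsilon}{2}=\epsilon$. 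Combining the three bounds shows $\bar\vv$ satisfies Definition~\ref{def:critical} in the stochastic (expectation) sense, and the case where the controlled index is $l=m$ is symmetric, swapping the roles of $\vv^m$ and $\vv^n$. The takeaway is that the constant $\min\{1,\mu^{-2}\}$ is calibrated precisely so that $\mu\sqrt{\min\{1,\mu^{-2}\}}\le1$, which is what lets the gradient-norm contribution be absorbed into $\epsilon$ even after it is amplified by the factor $\mu$ when passing between the two proximal points.
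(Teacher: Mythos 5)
Your proposal is correct and follows essentially the same route as the paper: claim~1 via the standard Moreau-envelope identities and Lipschitz continuity of the proximal maps, and claim~2 by extracting the subgradient certificate $\vxi=\mu^{-1}(\vv_{\mu f^m}(\vw)-\vv_{\mu f^n}(\vw))=\nabla F^\mu(\vw)$ from the optimality conditions of the two proximal subproblems, then using Jensen's inequality and the triangle inequality through $\|\vv_{\mu f^n}(\vw)-\vv_{\mu f^m}(\vw)\|=\mu\|\nabla F^\mu(\vw)\|$. Your observation that the factor $\min\{1,\mu^{-2}\}$ is calibrated exactly so that $\mu\sqrt{\min\{1,\mu^{-2}\}}\le 1$ is precisely the mechanism at work in the paper's argument.
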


Since $F^{\mu}$ is smooth, we can directly apply a first-order method for smooth non-convex optimization to \eqref{eqn:Fmu}. To do so, we need to evaluate $\nabla  F^{\mu}(\vw)$, which requires computing $\vv_{\mu f^m}(\vw)$ and $\vv_{\mu f^n}(\vw)$, i.e., exactly solving \eqref{eqn:fmuv} with $f=f^m$ and $f=f^n$, respectively. Computing the subgradients of $f^m$ and $f^n$ require processing $N_+N_-$ data pairs which is costly. Unfortunately, the standard approach of sampling over data pairs  does not produce unbiased stochastic subgradients of $f^m$ and $f^n$ due to the composite structure $\phi_l(S_i(\vw))$. In the next section, we will discuss a solution to overcome this challenge and approximate $\vv_{\mu f^m}(\vw)$ and $\vv_{\mu f^n}(\vw)$, which leads to an efficient algorithm for \eqref{eqn:Fmu}.

\section{Algorithm for pAUC Optimization}
Consider \eqref{eqn:Fmu} with $f^l$ defined in \eqref{eq:flpAUC} for $l=m$ and $n$. To avoid of processing $N_+N_-$ data points, one method is to introduce dual variables $\vp_i=(p_{ij})_{j=1}^{N_-}$ for $i=1,\dots,N_+$ and formulate $f^l$ as 
\begin{eqnarray}
\label{eq:flmax}
\textstyle
f^l(\vw)=\max\limits_{\vp_i\in\mathcal{P}^l,i=1,\dots,N_+}\left\{\sum_{i=1}^{N_+}\sum_{j=1}^{N_-} p_{ij}s_{ij}(\vw)\right\},
\end{eqnarray}
where $\mathcal{P}^l=\{\vp\in\mathbb{R}^{N_-}|\sum_{j=1}^{N_-} p_j=l,~p_j\in[0,1]\}$. Then \eqref{eqn:Fmu} can be reformulated as a min-max problem and solved by a primal-dual stochastic gradient method (e.g. \citet{rafique2021weakly}). However, the maximization in \eqref{eq:flmax} involves $N_+N_-$ decision variables and equality constraints, so the per-iteration cost is still $O(N_+N_-)$ even after using stochastic gradients.   

To further reduce the per-iteration cost, we take the dual form of the maximization in \eqref{eq:flmax} (see Lemma~\ref{lem:dualform} in Appendix~\ref{sec:proofFmuproperty}) and formulate $f^l$ as
\small
\begin{equation}
\label{eqn:flmin}
f^l(\vw)
=\min_{\vlam }\bigg\{g^l(\vw,\vlam):= l\mathbf{1}^\top\vlam+\sum_{i=1}^{N_+}\sum^{N_-}_{j=1}[s_{ij}(\vw)-\lambda_i]_+\bigg\},
\end{equation}
\normalsize
where $\vlam=(\lambda_1,\dots,\lambda_{N_+})$. Hence, \eqref{eqn:fmuv} with $f=f^l$ for $l=m$ and $n$ can be  reformulated as 
\small
\begin{eqnarray}\label{eqn:subproblem}
    \min_{\vv,\vlam} \bigg\{g^l(\vv,\vlam)+\frac{1}{2\mu}\|\vv-\vw\|^2\bigg\}.
\end{eqnarray}
\normalsize
Note that $g^l(\vv,\vlam)$ is jointly convex in $\vv$ and $\vlam$ when $\mu^{-1}>\rho=N_+N_-L$ (see Lemma~\ref{lem:partialweakconvex} in Appendix~\ref{sec:proofFmuproperty}). Thanks to formulation \eqref{eqn:subproblem}, we can construct stochastic subgradient of $g^l$ and apply coordinate update to $\vlam$ by sampling indexes $i$'s and $j$'s, which significantly reduce the computational cost when $N_+$ and $N_-$ are both large. We present this standard stochastic block coordinate descent (SBCD) method for solving \eqref{eqn:subproblem} in Algorithm~\ref{alg:blockmirror} and present its convergence property as follows. 

\begin{algorithm}[t]
\caption{Stochastic Block Coordinate Descent for \eqref{eqn:subproblem}: $(\bar\vv,\bar\vlam )=$SBCD$(\vw,\vlam,T,\mu,l)$}
\label{alg:blockmirror}
\begin{algorithmic}[1]
   \State {\bfseries Input:} Initial solution $(\vw,\vlam)$, the number of iterations $T$, $\mu>0$, an integer $l>0$ and sample sizes $I$ and $J$.
   \State Set $(\vv^{(0)},\vlam^{(0)})=(\vw,\vlam)$ and choose $(\eta_t,\theta_t)_{t=0}^{T-1}$.
   \For{$t=0$ {\bfseries to} $T-1$}
   \State Sample $\mathcal{I}_t\subset\{1,\dots,N_+\}$ with $|\mathcal{I}_t|=I$ and sample $\mathcal{J}_t\subset\{1,\dots,N_-\}$ with $|\mathcal{J}_t|=J$.
   \State Compute stochastic subgradient w.r.t. $\vv$:
   \small
   $$
   \textstyle
   G_\vv^{(t)}=\frac{N_+N_-}{IJ}\sum\limits_{i\in \mathcal{I}_t}\sum\limits_{j\in\mathcal{J}_t}\nabla s_{ij}(\vv^{(t)})\mathbf{1}\left(s_{ij}(\vv^{(t)})>\lambda_i^{(t)}\right)
   $$
   \normalsize
   \State Proximal stochastic subgradient update on $\vv$:
   \small
    \begin{equation}
    \label{eq:updatew}
    \vv^{(t+1)}=\argmin_{\vv} (G_\vv^{(t)})^\top\vv+\frac{\|\vv-\vw\|^2}{2\mu}+\frac{\|\vv-\vv^{(t)}\|^2}{2\eta_t}
   \end{equation}
   \normalsize
   \State Compute stochastic subgradient w.r.t. $\lambda_i$ for $i\in \mathcal{I}_t$:
   \small
   $$
   \textstyle
   G_{\lambda_i}^{(t)}=l-\frac{N_-}{J}\sum\limits_{j\in\mathcal{J}_t}\mathbf{1}\left(s_{ij}(\vv^{(t)})>\lambda_i^{(t)}\right)~\text{ for }
   i\in \mathcal{I}_t
   $$
   \normalsize
   \State Stochastic block subgradient update on $\lambda_i$ for $i\in \mathcal{I}_t$: 
   \begin{equation}
       \label{eq:updatelambda}
     \lambda_i^{(t+1)}=\lambda_i^{(t)}-\theta_tG_{\lambda_i}^{(t)}~\text{ for } i\in \mathcal{I}_t\quad\text{  and  } \quad
     \lambda_i^{(t+1)}=\lambda_i^{(t)}~\text{ for } i\notin \mathcal{I}_t.
   \end{equation}
   \EndFor
   \State {\bfseries Output:} $(\bar\vv ,\bar\vlam )=\frac{1}{T}\sum_{t=0}^{T-1}(\vv^{(t)},\vlam^{(t)})$.
\end{algorithmic}
\end{algorithm}

\begin{proposition}
\label{thm:sbmd}
Suppose Assumption~\ref{assumptions2} holds and $\mu^{-1}>\rho=N_+N_-L$, $\theta_t=\frac{\textup{dist}(\vlam^{(0)},\Lambda^*)}{\sqrt{IT}N_-}$ and  $\eta_t=\frac{\|\vv_{\mu f^l}(\vw)-\vw\|}{N_+N_-B\sqrt{T}}$ for any $t$ in Algorithm~\ref{alg:blockmirror}. It holds that
\small
\begin{eqnarray}
\begin{aligned}
\nonumber
\left(\frac{1}{2\mu}-\frac{\rho}{2}\right)\mathbb{E}\|\bar\vv-\vv_{\mu f^l}(\vw))\|^2\leq\frac{N_+N_-}{\sqrt{IT} }\textup{dist}(\vlam^{(0)},\Lambda^*)+\frac{N_+N_-B}{2\sqrt{ T}}\|\vv_{\mu f^l}(\vw)-\vw\| +\frac{\|\vv_{\mu f^l}(\vw)-\vw\|^2}{2\mu T},
\end{aligned}
\end{eqnarray}
\normalsize
where $\Lambda^*=\argmin\limits_{\vlam}g^l(\vv_{\mu f^l}(\vw),\vlam)$.
\end{proposition}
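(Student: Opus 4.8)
The plan is to reduce the claim to a bound on the \emph{joint} suboptimality of $\cL^l(\vv,\vlam):=g^l(\vv,\vlam)+\frac{1}{2\mu}\|\vv-\vw\|^2$ at the running iterates, and then convert that into a distance bound via strong convexity. Write $r(\vv):=\frac{1}{2\mu}\|\vv-\vw\|^2$ and $\Psi(\vv):=f^l(\vv)+r(\vv)=\min_\vlam \cL^l(\vv,\vlam)$, so that $\vv_{\mu f^l}(\vw)=\argmin_\vv\Psi(\vv)$ and, for any $\vlam^*\in\Lambda^*$, the pair $(\vv_{\mu f^l}(\vw),\vlam^*)$ minimizes $\cL^l$. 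By Lemma~\ref{lem:weakconvex} and $\mu^{-1}>\rho$, the function $\Psi$ is $(\mu^{-1}-\rho)$-strongly convex, and by Lemma~\ref{lem:partialweakconvex} the function $\cL^l$ is jointly convex with $g^l$ still $\rho$-weakly convex in $\vv$. Strong convexity of $\Psi$ at its minimizer gives $\left(\tfrac{1}{2\mu}-\tfrac{\rho}{2}\right)\|\bar\vv-\vv_{\mu f^l}(\vw)\|^2\le \Psi(\bar\vv)-\Psi(\vv_{\mu f^l}(\vw))$, which is exactly the left-hand side of the claim. The crucial reduction is the upper bound on $\Psi(\bar\vv)$: since $\Psi$ is convex, Jensen gives $\Psi(\bar\vv)\le\frac{1}{T}\sum_{t}\Psi(\vv^{(t)})$, and since $\Psi(\vv^{(t)})=\min_\vlam\cL^l(\vv^{(t)},\vlam)\le \cL^l(\vv^{(t)},\vlam^{(t)})$ the inner minimum may be evaluated at the \emph{running} dual iterate. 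It therefore suffices to bound $\frac{1}{T}\sum_t\big[\cL^l(\vv^{(t)},\vlam^{(t)})-\cL^l(\vv_{\mu f^l}(\vw),\vlam^*)\big]$, the quantity the two block updates control with the correct signs.

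Next I would establish two per-step descent inequalities in conditional expectation. For the primal block, update \eqref{eq:updatew} is the exact proximal step for $r(\vv)+\frac{1}{2\eta_t}\|\vv-\vv^{(t)}\|^2$ applied to the stochastic subgradient $G_\vv^{(t)}$; using that this objective is $(\mu^{-1}+\eta_t^{-1})$-strongly convex, the three-point inequality together with Young's inequality on $\langle G_\vv^{(t)},\vv^{(t)}-\vv^{(t+1)}\rangle$ bounds $\langle G_\vv^{(t)},\vv^{(t)}-\vv_{\mu f^l}(\vw)\rangle+r(\vv^{(t+1)})-r(\vv_{\mu f^l}(\vw))$ by $\frac{\eta_t}{2}\|G_\vv^{(t)}\|^2$ plus a telescoping difference of $\|\cdot-\vv_{\mu f^l}(\vw)\|^2$, with an extra $-\frac{1}{2\mu}\|\vv^{(t+1)}-\vv_{\mu f^l}(\vw)\|^2$ supplied by the regularizer. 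Taking conditional expectation, $G_\vv^{(t)}$ is unbiased for a subgradient in $\partial_\vv g^l(\vv^{(t)},\vlam^{(t)})$ and $\|G_\vv^{(t)}\|\le N_+N_-B$ by Assumption~\ref{assumptions2}(b). For the dual block, expanding $\|\vlam^{(t+1)}-\vlam^*\|^2$ and taking expectation over the uniform set $\cI_t$ (each $i$ appears with probability $I/N_+$) and over $\cJ_t$ shows $G_{\lambda_i}^{(t)}$ is unbiased for $\partial_{\lambda_i}g^l(\vv^{(t)},\vlam^{(t)})$, that $|G_{\lambda_i}^{(t)}|\le N_-$, and hence that the block second moment is at most $IN_-^2$; this yields a bound on $\langle\partial_\vlam g^l(\vv^{(t)},\vlam^{(t)}),\vlam^{(t)}-\vlam^*\rangle$ with a telescoping term carrying the factor $N_+/(I\theta_t)$ and a variance term $\tfrac12 N_+\theta_t N_-^2$.

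Finally I would combine the two inequalities. Joint weak convexity of $g^l$ gives $\cL^l(\vv^{(t)},\vlam^{(t)})-\cL^l(\vv_{\mu f^l}(\vw),\vlam^*)\le\langle\partial_\vv g^l,\vv^{(t)}-\vv_{\mu f^l}(\vw)\rangle+\langle\partial_\vlam g^l,\vlam^{(t)}-\vlam^*\rangle+\frac{\rho}{2}\|\vv^{(t)}-\vv_{\mu f^l}(\vw)\|^2+r(\vv^{(t)})-r(\vv_{\mu f^l}(\vw))$; substituting the two descent bounds for the inner products, the $r$ terms telescope (using $\vv^{(0)}=\vw$), and the surplus $\frac{\rho}{2}\|\vv^{(t)}-\vv_{\mu f^l}(\vw)\|^2$ is dominated by the $-\frac{1}{2\mu}\|\cdot\|^2$ terms from the regularizer because $\mu^{-1}>\rho$, so those quadratics can be dropped. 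Summing over $t$, dividing by $T$, and inserting the prescribed $\theta_t$ and $\eta_t$ reduces the dual telescoping and variance terms to $\frac{N_+N_-}{\sqrt{IT}}\textup{dist}(\vlam^{(0)},\Lambda^*)$ and the primal ones to the remaining two terms on the right-hand side (matching the stated constants is routine bookkeeping). The main obstacle is the coupling of the blocks: bounding a gap at the fixed optimal dual $\vlam^*$ leaves cross terms $g^l(\vv^{(t)},\vlam^*)-g^l(\vv^{(t)},\vlam^{(t)})$ with no usable sign, and the idea that removes them is to route through $\Psi$ and evaluate its inner minimum at $\vlam^{(t)}$, leaving only the jointly controlled suboptimality; the remaining care is in tracking the block-sampling variance so as to recover the $\sqrt{I}$ improvement.
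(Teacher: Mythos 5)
Your proposal is correct and follows essentially the same route as the paper's proof: per-block descent inequalities (proximal three-point bound for $\vv$, expansion of the squared dual distance with the $I/N_+$ sampling factor for $\vlam$), combined via the joint convexity of $g^l(\vv,\vlam)+\frac{1}{2\mu}\|\vv-\vw\|^2$ from Lemma~\ref{lem:partialweakconvex}, telescoped, averaged with Jensen, and converted to the distance bound by $(\mu^{-1}-\rho)$-strong convexity of the proximal objective. The only cosmetic difference is that you telescope against a fixed $\vlam^*\in\Lambda^*$ while the paper telescopes $\mathrm{dist}^2(\vlam^{(t)},\Lambda^*)$ against the moving projection $\mathrm{Proj}_{\Lambda^*}(\vlam^{(t)})$; choosing $\vlam^*=\mathrm{Proj}_{\Lambda^*}(\vlam^{(0)})$ makes the two identical for this statement.
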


Using Algorithm~\ref{alg:blockmirror} to compute an approximation of $\vv_{\mu f^l}(\vw)$ for $l=m$ and $n$ and thus, an approximation of $\nabla F^{\mu}(\vw)$, we can apply an approximate gradient descent (AGD) method to \eqref{eqn:Fmu} and find a nearly $\epsilon$-critical point of \eqref{eqn:pAUC_raw} according to Lemma~\ref{lem:Fmuproperty}. We present the AGD method in Algorithm~\ref{alg:AGD} and its convergence property as follows. 

\begin{algorithm}[t]
\caption{Approximate Gradient Descent (AGD) for \eqref{eqn:Fmu}}
\label{alg:AGD}
\begin{algorithmic}[1]
   \State {\bfseries Input:} Initial solutions $(\vw^{(0)},\bar\vlam_m^{(0)},\bar\vlam_n^{(0)})$, the number of iterations $K$, $\mu>\rho^{-1}$, $\gamma>0$, $m=\alpha N_-$ and $n=\beta N_-$.
   \For{$k=0$ {\bfseries to} $K-1$}
   \State $(\bar\vv_m^{(k)},\bar\vlam_m^{(k+1)})=$SBMD$(\vw^{(k)},\bar\vlam_m^{(k)},T_k,\mu,m)$
   \State $(\bar\vv_n^{(k)},\bar\vlam_n^{(k+1)})=$SBMD$(\vw^{(k)},\bar\vlam_n^{(k)},T_k,\mu,n)$
   \State $\vw^{(k+1)}=\vw^{(k)}-\gamma\mu^{-1}(\bar\vv_m^{(k)}-\bar\vv_n^{(k)})$
   \EndFor
   \State {\bfseries Output:} $\bar\vv_n^{(\bar k)}$ with $\bar k$ sampled from $\{0,\dots,K-1\}$.
\end{algorithmic}
\end{algorithm}

\begin{theorem}
\label{thm:main}
Suppose Assumption~\ref{assumptions2} holds and Algorithm~\ref{alg:blockmirror} is called in iteration $k$ of Algorithm~\ref{alg:AGD} with parameters $\mu^{-1}>\rho=N_+N_-L$, $\theta_t=\frac{\text{dist}(\bar\vlam^{(k)},\Lambda_k^*)}{\sqrt{IT_k}N_-}$,  $\eta_t=\frac{\|\vv_{\mu f^l}(\vw^{(k)})-\vw^{(k)}\|}{N_+N_-B\sqrt{T_k}}$ for any $t$, and 
\small
$$
T_k=\max\left\{
\frac{144N_+^2N_-^2D_l^2(k+1)^2}{I(\mu^{-1}-\rho)^2},~\frac{4N_+^2N_-^2\mu^2l^2B^2(k+1)^2}{(\mu^{-1}-\rho)^2}, 
\frac{6 \mu l^2 B^2 (k+1) }{2 (\mu^{-1}-\rho)^2}
\right\}
$$
\normalsize
where
$\Lambda_k^*=\argmin\limits_{\vlam}g^l(\vv_{\mu f^l}(\vw^{(k)}),\vlam)$ and
\normalsize
\small
\begin{eqnarray}
\label{eq:D}
D_l:=\max\left\{
\text{dist}(\bar\vlam_l^{(0)},\Lambda_{0}^*),\quad \frac{1}{2}\left(\frac{1}{\mu}-\rho\right) 
+\frac{\mu l^2 B^2}{2}+N_+B+\frac{N_+B}{1-\mu\rho}\left(\frac{2\gamma }{\mu}+\gamma n B + \gamma m B\right)
\right\}.
\end{eqnarray}
\normalsize
Then $\bar\vv_n^{(\bar k)}$ is a nearly $\epsilon$-critical point of \eqref{eqn:pAUC_raw} with $f^l$ defined in \eqref{eq:flpAUC} with $K$ no more than 
\small
\begin{eqnarray}
\label{eq:maxK}
K=\max\left\{
\frac{16\mu^2}{\gamma \min\{1,\mu^2\}\epsilon^2}\left(F (\vv_{\mu f^n}(\vw^{(0)}))-F^*\right), 
\frac{96}{\min\{1,\mu^2\}\epsilon^2}\log\left(\frac{96}{\min\{1,\mu^2\}\epsilon^2}\right)
\right\}. 
\end{eqnarray}
\normalsize
\end{theorem}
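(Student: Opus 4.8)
The plan is to read Algorithm~\ref{alg:AGD} as an inexact gradient descent on the smooth surrogate $F^{\mu}$ in \eqref{eqn:Fmu} and to reduce the conclusion to the two sufficient conditions in part~2 of Lemma~\ref{lem:Fmuproperty}: a small expected squared gradient norm $\mathbb{E}\|\nabla F^{\mu}(\vw^{(\bar k)})\|^2$ and a small proximal-subproblem error $\mathbb{E}\|\bar\vv_n^{(\bar k)}-\vv_{\mu f^n}(\vw^{(\bar k)})\|^2$. Since $\nabla F^{\mu}(\vw^{(k)})=\mu^{-1}(\vv_{\mu f^m}(\vw^{(k)})-\vv_{\mu f^n}(\vw^{(k)}))$ by Lemma~\ref{lem:Fmuproperty}(1) while the update uses $\mu^{-1}(\bar\vv_m^{(k)}-\bar\vv_n^{(k)})$, I would define the gradient error $\ve^{(k)}:=\mu^{-1}(\bar\vv_m^{(k)}-\vv_{\mu f^m}(\vw^{(k)}))-\mu^{-1}(\bar\vv_n^{(k)}-\vv_{\mu f^n}(\vw^{(k)}))$ so that the step is exact gradient descent on $F^{\mu}$ with additive error $\ve^{(k)}$, and control $\mathbb{E}\|\ve^{(k)}\|^2$ through the SBCD guarantee in Proposition~\ref{thm:sbmd}.

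First I would invoke the $L_{\mu}$-smooth descent inequality (Lemma~\ref{lem:Fmuproperty}(1)) at $\vw^{(k+1)}=\vw^{(k)}-\gamma\mu^{-1}(\bar\vv_m^{(k)}-\bar\vv_n^{(k)})$, write the approximate gradient as $\nabla F^{\mu}(\vw^{(k)})+\ve^{(k)}$, and apply Young's inequality to the cross term. For $\gamma$ small relative to $1/L_{\mu}$ this yields, in expectation, an inequality of the form $\tfrac{\gamma}{4}\mathbb{E}\|\nabla F^{\mu}(\vw^{(k)})\|^2\le \mathbb{E}[F^{\mu}(\vw^{(k)})-F^{\mu}(\vw^{(k+1)})]+C\mathbb{E}\|\ve^{(k)}\|^2$. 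Summing over $k=0,\dots,K-1$ and using that $F^{\mu}$ is bounded below by $F^*$ — which follows from $F^{\mu}(\vw)\ge F(\vv_{\mu f^n}(\vw))\ge F^*$, obtained by evaluating the $f^m$-proximal subproblem at $\vv_{\mu f^n}(\vw)$ and cancelling the two quadratic terms — gives, after dividing by $K$, a bound on $\tfrac{1}{K}\sum_k\mathbb{E}\|\nabla F^{\mu}(\vw^{(k)})\|^2$ in terms of the initial gap $F(\vv_{\mu f^n}(\vw^{(0)}))-F^*$ plus $\tfrac{1}{K}\sum_k\mathbb{E}\|\ve^{(k)}\|^2$. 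Because $\bar k$ is drawn uniformly, the left side equals $\mathbb{E}\|\nabla F^{\mu}(\vw^{(\bar k)})\|^2$.

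The crux is bounding $\mathbb{E}\|\bar\vv_l^{(k)}-\vv_{\mu f^l}(\vw^{(k)})\|^2$ uniformly in $k$ via Proposition~\ref{thm:sbmd}, whose right-hand side involves $\|\vv_{\mu f^l}(\vw^{(k)})-\vw^{(k)}\|$ and the warm-started distance $\dist(\bar\vlam_l^{(k)},\Lambda_k^*)$. The first quantity I would bound by $\mu\|\nabla f_{\mu}^l(\vw^{(k)})\|\le \mu N_+lB$ using Assumption~\ref{assumptions2}(b) and the subgradient representation of $f^l$ coming from \eqref{eq:flmax}. The main obstacle is that the target $\Lambda_k^*=\argmin_{\vlam}g^l(\vv_{\mu f^l}(\vw^{(k)}),\vlam)$ drifts with $k$, so $\dist(\bar\vlam_l^{(k)},\Lambda_k^*)$ is not directly bounded by a single SBCD run. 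I would handle this by induction showing $\dist(\bar\vlam_l^{(k)},\Lambda_k^*)\le D_l$ for all $k$: the SBCD output makes $\bar\vlam_l^{(k+1)}$ close to $\Lambda_k^*$ by Proposition~\ref{thm:sbmd}, while $\Lambda_k^*$ and $\Lambda_{k+1}^*$ are close because the optimal $\lambda_i$ in \eqref{eqn:flmin} is the $l$th order statistic of $\{s_{ij}(\vv_{\mu f^l}(\cdot))\}_j$, hence Lipschitz in $\vw$ with constant $\sim N_+B/(1-\mu\rho)$ (combining the $(1-\mu\rho)^{-1}$-Lipschitzness of the proximal map with the $B$-Lipschitzness of $s_{ij}$), and the step satisfies $\|\vw^{(k+1)}-\vw^{(k)}\|=\gamma\mu^{-1}\|\bar\vv_m^{(k)}-\bar\vv_n^{(k)}\|$, bounded via $2\mu^{-1}+nB+mB$. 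Balancing the per-step SBCD accuracy against this drift reproduces exactly the recursive bound encoded by the definition of $D_l$ in \eqref{eq:D}.

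Finally, substituting the uniform bound $D_l$ and the prescribed $T_k=\Theta((k+1)^2)$ into Proposition~\ref{thm:sbmd} gives $\mathbb{E}\|\bar\vv_l^{(k)}-\vv_{\mu f^l}(\vw^{(k)})\|^2=O(1/(k+1))$, whence $\tfrac{1}{K}\sum_k\mathbb{E}\|\ve^{(k)}\|^2=O(\log K/K)$. Plugging this into the telescoped inequality and taking $K$ as in \eqref{eq:maxK} forces $\mathbb{E}\|\nabla F^{\mu}(\vw^{(\bar k)})\|^2\le\min\{1,\mu^{-2}\}\epsilon^2/4$; the $\log$ factor in $K$ is precisely what dominates the $\log K/K$ error term. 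The same $O(1/(k+1))$ estimate, averaged over the uniform $\bar k$, simultaneously delivers $\mathbb{E}\|\bar\vv_n^{(\bar k)}-\vv_{\mu f^n}(\vw^{(\bar k)})\|^2\le\epsilon^2/4$. With both conditions in place, part~2 of Lemma~\ref{lem:Fmuproperty} certifies that $\bar\vv_n^{(\bar k)}$ is a nearly $\epsilon$-critical point of \eqref{eqn:pAUC_raw}, which is the assertion; I expect the uniform dual-distance induction of the third paragraph to be the hardest and most technical step.
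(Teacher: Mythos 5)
Your overall architecture matches the paper's proof almost exactly: an inexact-gradient descent analysis of $F^{\mu}$ via its $L_{\mu}$-smoothness and Young's inequality, telescoping with the lower bound $F^{\mu}(\vw)\ge F(\vv_{\mu f^n}(\vw))\ge F^*$, a two-part induction establishing $\mathbb{E}\|\bar\vv_l^{(k)}-\vv_{\mu f^l}(\vw^{(k)})\|^2=O(1/(k+1))$ and $\mathbb{E}\,\dist(\bar\vlam_l^{(k)},\Lambda_k^*)\le D_l$ simultaneously, the bound $\|\vw-\vv_{\mu f^l}(\vw)\|\le\mu lB$ from the proximal optimality condition, the drift control of $\Lambda_k^*$ via the Lipschitzness of $\vv_{\mu f^l}(\cdot)$ and of the order statistics $s_{i[j]}$, and the final appeal to part~2 of Lemma~\ref{lem:Fmuproperty}. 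This is the paper's argument.

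There is, however, one genuine gap in your third paragraph: you assert that ``the SBCD output makes $\bar\vlam_l^{(k+1)}$ close to $\Lambda_k^*$ by Proposition~\ref{thm:sbmd},'' but Proposition~\ref{thm:sbmd} controls only the primal error $\mathbb{E}\|\bar\vv-\vv_{\mu f^l}(\vw)\|^2$, which it extracts from the SBCD objective-gap bound via the $(\mu^{-1}-\rho)$-strong convexity of $g^l(\vv,\vlam)+\frac{1}{2\mu}\|\vv-\vw\|^2$ \emph{in $\vv$ only}. The objective is merely piecewise linear in $\vlam$, so no strong-convexity argument yields $\dist(\bar\vlam_l^{(k+1)},\Lambda_k^*)$; without that quantity your induction hypothesis cannot be advanced. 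The missing ingredient is the linear error-bound (sharpness) property of $g^l(\vv^*,\cdot)$: because each $g_i^l(\vv^*,\cdot)$ is piecewise linear with outward slope at least one at both ends of the optimal interval $\Lambda_{k,i}^*=[s_{i[l]}(\vv^*),s_{i[l+1]}(\vv^*)]$, the function-value gap dominates $\sum_i\dist(\lambda_i,\Lambda_{k,i}^*)$ up to an $N_+B\|\bar\vv-\vv^*\|$ correction (Lemma~\ref{lem:errorboundlambda} and Proposition~\ref{thm:bound_initial_lambda} in the paper). This error-bound step is exactly what the paper flags as the novel part of the proof, and your proposal would stall without supplying it. The remainder of your outline, including the interval-perturbation argument for passing from $\Lambda_k^*$ to $\Lambda_{k+1}^*$, is sound.
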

According to Theorem~\ref{thm:main}, to find a nearly $\epsilon$-critical point of \eqref{eqn:pAUC_raw}, we need $K=\tilde O(\epsilon^{-2})$ iterations in Algorithm~\ref{alg:AGD} and $\sum_{k=0}^{K-1}T_k=O(K^3)=\tilde O(\epsilon^{-6})$ iterations of Algorithm~\ref{alg:blockmirror} in total across all calls.

\begin{remark}[Challenges in proving Theorem~\ref{thm:main}] 
Suppose we can set $T_k$ in lines 3 and 4 of Algorithm~\ref{alg:AGD} appropriately such that the approximation errors  $\mathbb{E}\|\bar\vv_m^{(k)}-\vv_{\mu f^m}(\vw^{(k)})\|^2$ and $\mathbb{E}\|\bar\vv_n^{(k)}-\vv_{\mu f^n}(\vw^{(k)})\|^2$ are both $O(1/k)$. We can then prove that Algorithm~\ref{alg:AGD} finds a nearly $\epsilon$-critical point within $K=\tilde O(\epsilon^{-2})$ iterations and the total complexity is $\sum_{k=0}^{K-1}T_k$. This is just a standard idea. However, by Proposition~\ref{thm:sbmd}, such a $T_k$ must be $\Theta(k^2(\text{dist}^2(\bar\vlam^{(k)},\Lambda_k^*)+\|\vv_{\mu f^l}(\vw^{(k)})-\vw^{(k)}\|^2))$ where $\text{dist}^2(\bar\vlam^{(k)},\Lambda_k^*)$ and $\|\vv_{\mu f^l}(\vw^{(k)})-\vw^{(k)}\|^2$ also change with $k$.  Then it is not clear what the order of $T_k$ is. By a novel proving technique based on the (linear) error-bound condition of $g^l(\vw,\vlam)$ with respect to $\vlam$, we prove that both $\text{dist}^2(\bar\vlam^{(k)},\Lambda_k^*)$ and $\|\vv_{\mu f^l}(\vw^{(k)})-\vw^{(k)}\|^2$ are $O(1)$ (see \eqref{eq:boundvw} and \eqref{eqn:boundlaminit} in Appendix~\ref{sec:theoremproof}) which ensures that $T_k=\Theta(k^2)$ and thus the total complexity is $\sum_{k=0}^{K-1}T_k=O(K^3)=\tilde O(\epsilon^{-6})$. 
\end{remark}

\begin{remark}[Analysis of sensitivity of the algorithm to $\mu$]
For the interesting case where $\rho \geq 1$, we have $\mu < 1/\rho <1$. In this case, we can derive that the order of dependency on $\mu$ is $O(\frac{1}{\epsilon^6\mu^6})$ and the optimal choice of $\mu$ is thus $\Theta(\rho^{-1})$, e.g., $\mu=\frac{1}{2\rho}$, which leads to a complexity of $O(\rho^6/\epsilon^6)$. We present the convergence curves and the test performance of our method when applied to training a linear model with $\mu$ of different values in Appendix \ref{sec:sensitivity}.

\end{remark}

The technique in the previous sections can be directly applied to minimize the SoRR loss, which is formulated as \eqref{eqn:pAUC_raw} but with $f^l$ defined in \eqref{eq:flSoRR}. Due to the limit of space, we present the algorithm for minimizing the SoRR loss and its convergence result in Appendix~\ref{sec:SoRR}.

\section{Numerical Experiments}\label{sec:exp}
In this section, we demonstrate the effectiveness of our algorithm AGD-SBCD for pAUC maximization and SoRR loss minimization problems (see Appendix~\ref{subsec:data} for details). 
All experiments are conducted in Python and Matlab on a computer with the CPU 2GHz Quad-Core Intel Core i5 and the GPU NVIDIA GeForce RTX 2080 Ti. All datasets we used are publicly available and contain no personally identifiable information and offensive contents.



\begin{figure}
\centering
\includegraphics[width=0.35\linewidth]{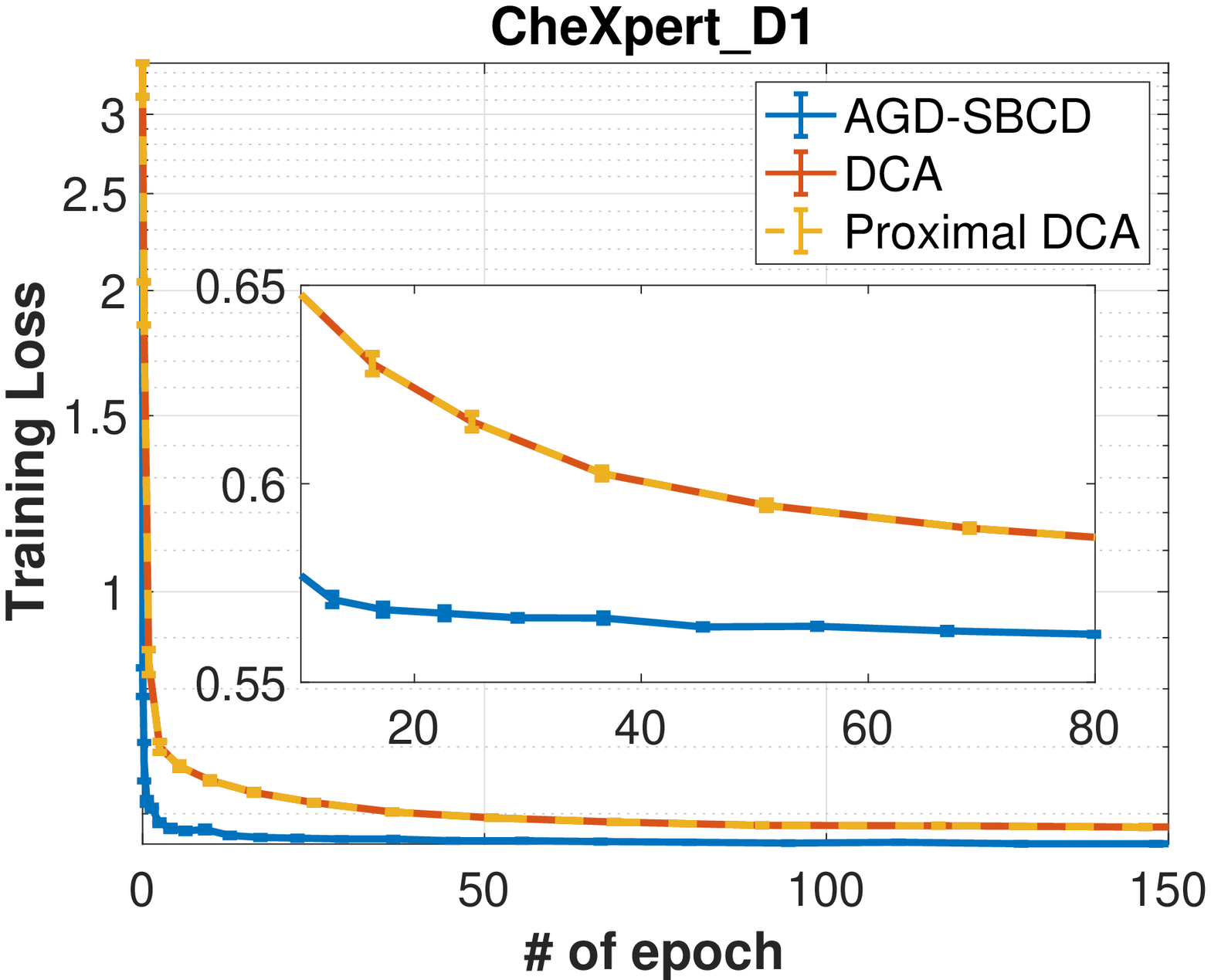}
\includegraphics[width=0.35\linewidth]{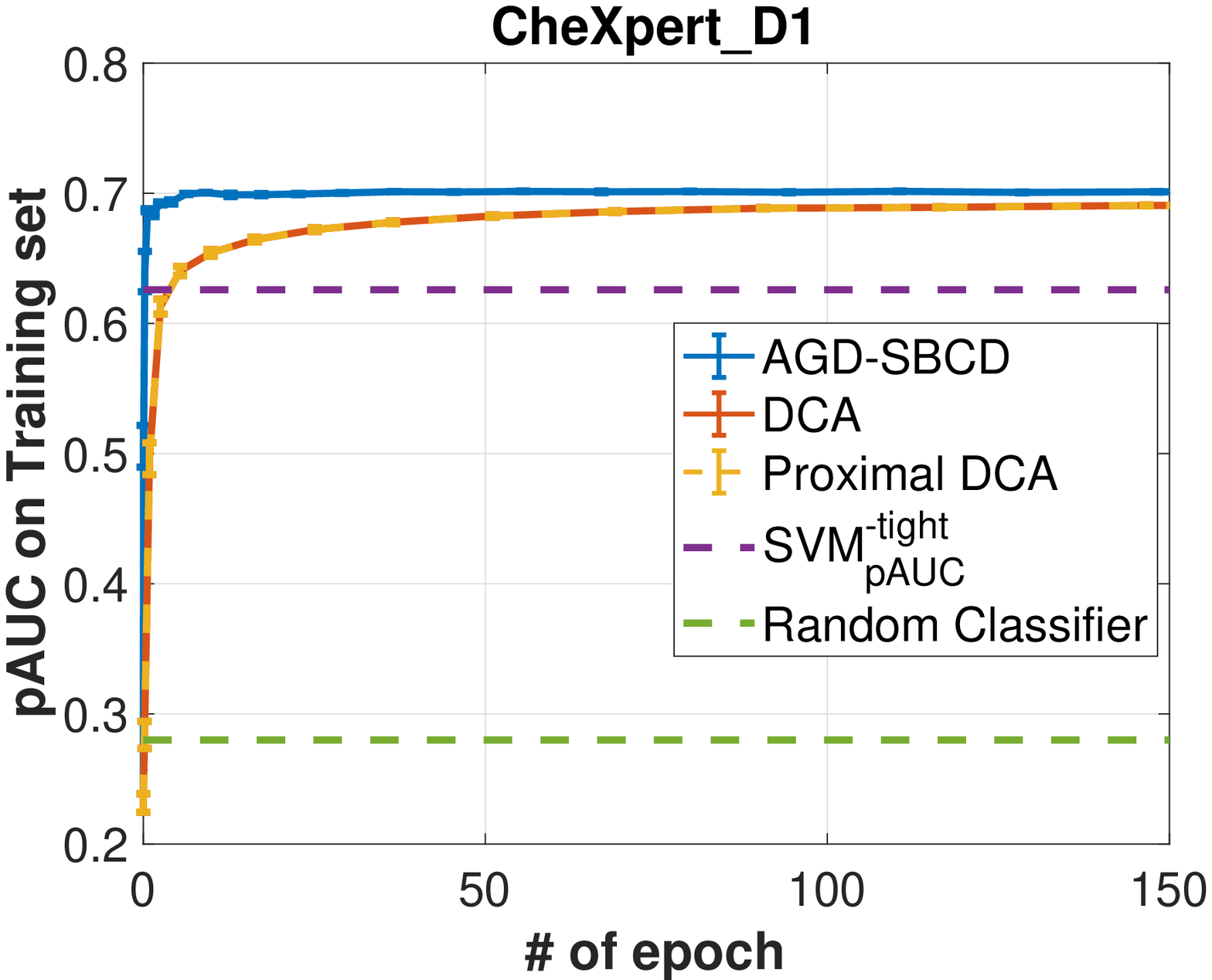}
\includegraphics[width=0.35\linewidth]{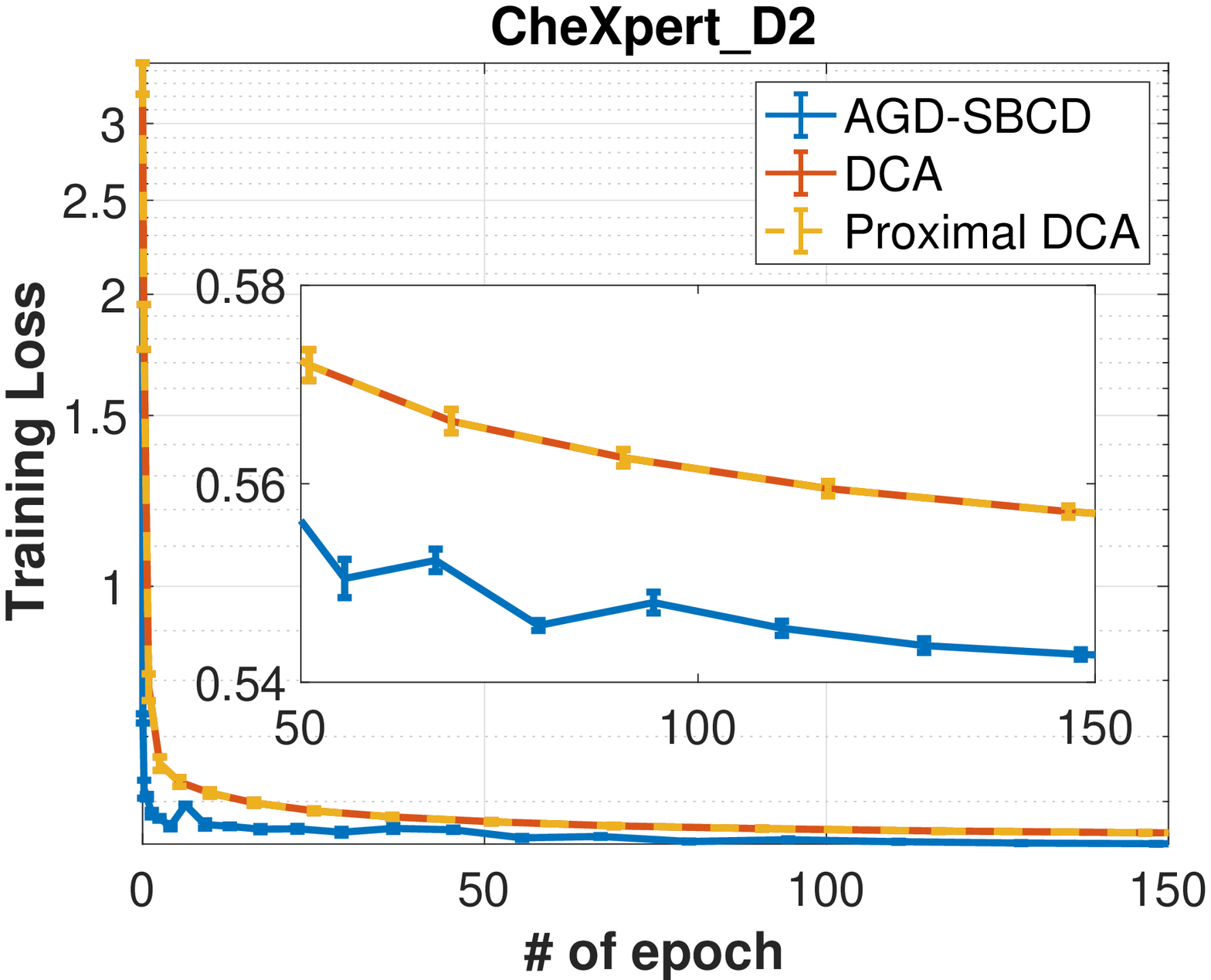}
\includegraphics[width=0.35\linewidth]{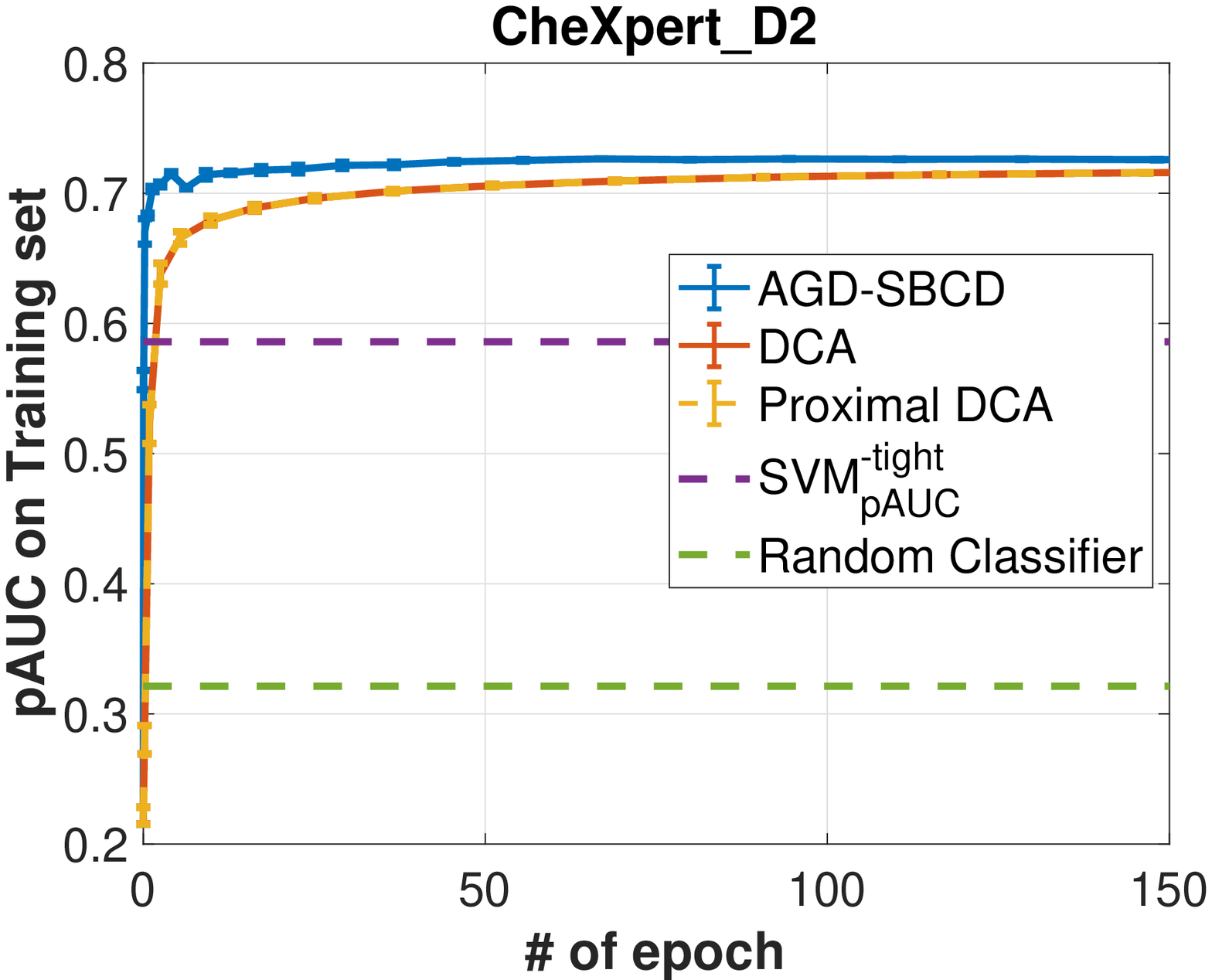}
\caption{Results for Patial AUC Maximization of D1 and D2. (Results of D3, D4 and D5 are shown in Appendix \ref{subsec:auc} Figure \ref{figure_d3-d5})}
\label{figure_pAUC}
\end{figure}

\subsection{Partial AUC Maximization}
For maximizing pAUC, we focus on large-scale imbalanced medical dataset CheXpert~\citep{irvin2019chexpert}, which is licensed under CC-BY-SA and has 224,316 images. We construct five binary classification tasks with the logistic loss $\ell(z) = \log(1+\exp(-z))$ for predicting five popular diseases, Cardiomegaly (D1), Edema (D2), Consolidation (D3), Atelectasis (D4), and P. Effusion (D5).

For comparison of training convergence, we consider different methods
for optimizing the partial AUC. We compare with three baselines, DCA~\citep{hu2020learning} (see Appendix~\ref{subsec:dca_sub} for details), proximal DCA~\citep{wen2018proximal} (see Appendix~\ref{subsec:prox_dca_sub} for details) and SVM$_{pAUC}$-tight~\citep{narasimhan2013svmpauctight}. Since DCA, proximal DCA and SVM$_{pAUC}$-tight cannot be applied to deep neural networks, we focus on linear model and use a pre-trained deep neural network to extract a fixed dimensional feature vectors of $1024$. The deep neural network was trained by optimizing the cross-entropy loss following the same setting as in~\citet{yuan2020robust}. 

For three baselines and our algorithm, the process to tune the hyper-parameters is explained in Appendix~\ref{sub:tune}.
In Figure \ref{figure_pAUC} and Figure \ref{figure_d3-d5} in Appendix \ref{subsec:auc}, we show how the training loss (the objective value of \eqref{eq:minpAUCloss}) and normalized partial AUC on the training data change with the number of epochs. We observe that for all of these five diseases, our algorithm converges much faster than DCA and proximal DCA and we get a better partial AUC than DCA and proximal DCA.

\begin{table}[t]
\caption{Comparison on the CheXpert training data.  From left to right, the columns are the tasks, the pAUCs returned by SVM$_{pAUC}$-tight, the CPU time (in seconds) SVM$_{pAUC}$-tight takes, the CPU and GPU time AGD-SBCD uses to exceed SVM$_{pAUC}$-tight's pAUCs, the final pAUCs returned by AGD-SBCD, and the CPU and GPU time (in seconds) AGD-SBCD takes to return the final pAUCs.}
\label{train_pauc_svm}
\centering
\resizebox{1\textwidth}{!}{
\begin{tabular}{c|cc|cc|ccc}
\toprule
Methods& \multicolumn{2}{c|}{ SVM$_{pAUC}$-tight} & \multicolumn{5}{c}{AGD-SBCD} \\
\midrule
\multirow{2}{*}{Tasks}& \multirow{2}{*}{pAUC} & CPU & CPU time (epoch) & GPU time to &\multirow{2}{*}{pAUC} & CPU & GPU\\
&  & time & to outperform & outperform & & time & time\\
\midrule
D1 & 0.6259 &  95.14 & 2.91 (0.23) & 1.85 &  0.7005$\pm$0.0003 &  118.32 & 82.13\\
D2 & 0.5860 & 90.83 & 3.36 (0.23) & 1.93 &  0.7214$\pm$0.0024 &  415.66 & 247.29\\
D3 & 0.3745 & 90.56 & 3.26 (0.23) & 1.84&  0.4910$\pm$0.0006 &  181.70 & 104.55\\
D4 & 0.3895 & 89.64 & 10.09 (0.63) & 8.38&  0.4616$\pm$0.0006 &  187.36 & 158.14 \\
D5 & 0.7267 & 90.86 & 3.97 (0.23) & 1.89 &  0.8272$\pm$0.0001 &  238.10 & 142.91\\
\bottomrule
\end{tabular}
}
\vskip -0.1in
\end{table}


The comparison between our AGD-SBCD and SVM$_{pAUC}$-tight on training data are shown in Table \ref{train_pauc_svm}. As shown from the second to the fifth column of Table \ref{train_pauc_svm}, our algorithm needs only a few seconds to exceed the pAUCs that SVM$_{pAUC}$-tight takes more than one minute to return. As shown from sixth to eighth column, our algorithm eventually improves the pAUC by at least $12\%$ compared with SVM$_{pAUC}$-tight. DCA and proximal DCA are not included in the tables because it computes deterministic subgradients, which leads to a runtime significantly longer than the other two methods. We plot the convergence curves of training pAUC over GPU time for DCA and our algorithm in Figure \ref{figure_time} in Appendix~\ref{sub:time}.

To compare the testing performances, we consider the deep neural networks besides the linear model. For linear model, we still compare with DCA and SVM$_{pAUC}$-tight. For deep neural networks, we compare with the naive mini-batch based method (MB)~\citep{kar2014} and methods based on different optimization objectives, including the cross-entropy loss (CE) and the AUC min-max margin loss (AUC-M)~\citep{iccv.Yuan2021}. We learn the model DenseNet121 from scratch with the CheXpert training data split in train/val=9:1 and the CheXpert validation dataset as the testing set, which has 234 samples. 
The range of FPRs in pAUC is [0.05, 0.5]. For optimizing CE, we use the standard Adam optimizer. For optimizing AUC-M, we use the PESG optimizer in~\citet{iccv.Yuan2021}. We run each method 10 epochs and the learning rate ($c$ in AGD-SBCD) of all methods is tuned from $\{10^{-5}\sim10^0\}$. The mini-batch size is 32. For AGD-SBCD, $T_k$ is set to $50(k+1)^2$, $\mu$ is set to $\frac{10^3}{N_+N_-}$ and $\gamma$ is tuned from $\{0.1, 1, 2\}\times 10^3/(N_+N_-)$. For MB, the learning rate decays in the same way as in ~\citet{kar2014}. For CE and AUC-M, the learning rate decays 10-fold after every 5 epochs. For AUC-M, we tune the hyperparameter $\gamma$ in \{100, 500, 1000\}. For each method, the validation set is used to tune the hyperparameters and select the best model across all iterations.
The results of the pAUCs on the testing set are reported in Table \ref{val_pauc}, which shows that our algorithm performs the best for all diseases. The complete ROC curves on the testing set are shown 
in Appendix~\ref{subsec:auc}.

\begin{table}[ht]
\caption{The pAUCs with FPRs between $0.05$ and $0.5$ on the testing sets from the CheXpert data.}
\label{val_pauc}
\centering
\resizebox{1\textwidth}{!}{
\begin{tabular}{c|cccccc}
\toprule
 & \textbf{Method} & D1 & D2 & D3 & D4 & D5\\
\midrule
\multirow{4}{2.5em}{Linear Model}& SVM$_{pAUC}$-tight & 0.6538$\pm$0.0042 & 0.6038$\pm$0.0009 &  0.6946$\pm$0.0020 & \textbf{0.6521$\pm$0.0006} & 0.7994$\pm$0.0004\\
& DCA & 0.6636$\pm$0.0093 & 0.8078$\pm$0.0030 & 0.7427$\pm$0.0257 & 0.6169$\pm$0.0208 & 0.8371$\pm$0.0022\\
& Proximal DCA & 0.6615$\pm$0.0103 & 0.8041$\pm$0.0033 & 0.7064$\pm$0.0253 & 0.5945$\pm$0.0266 & 0.8352$\pm$0.0023\\
& AGD-SBCD & \textbf{0.6721$\pm$0.0081} & \textbf{0.8257$\pm$0.0025} & \textbf{0.8016$\pm$0.0075} & 0.6340$\pm$0.0165 & \textbf{0.8500$\pm$0.0017}\\
\midrule
\multirow{4}{2.5em}{Deep Model}
& MB & 0.7510$\pm$0.0248 & 0.8197$\pm$0.0127 & 0.6339$\pm$0.0328 & 0.5698$\pm$0.0343 & 0.8461$\pm$0.0188\\
& CE & 0.6994$\pm$0.0453 & 0.8075$\pm$0.0244 & 0.7673$\pm$0.0266 & 0.6499$\pm$0.0184 & 0.7884$\pm$0.0080\\
& AUC-M & 0.7403$\pm$0.0339 & 0.8002$\pm$0.0274 & 0.8533$\pm$0.0469 & 0.7420$\pm$0.0277 & 0.8504$\pm$0.0065\\
& AGD-SBCD & \textbf{0.7535$\pm$0.0255} & \textbf{0.8345$\pm$0.0130} & \textbf{0.8689$\pm$0.0184} & \textbf{0.7520$\pm$0.0079} & \textbf{0.8513$\pm$0.0107}\\
\bottomrule
\end{tabular}
}
\end{table}

For deep neural networks, we also learn the model ResNet-20 from scratch with the CIFAR-10-LT and the Tiny-ImageNet-200-LT datasets, which are constructed similarly as in \citet{yang2021all}. Details about these two datasets are summarized in Appendix \ref{sec:longtail}. The range of FPRs in pAUC is [0.05, 0.5]. The process of tuning hyperparameters is the same as that for CheXpert. The results of the pAUCs on the testing set are reported in Table \ref{val_pauc_lt}, which shows that our algorithm performs the best for these two long-tailed datasets.

\begin{table}[ht]
\caption{The pAUCs with FPRs between $0.05$ and $0.5$ on the testing sets from the CIFAR-10-LT and the Tiny-ImageNet-200-LT Datasets.}
\label{val_pauc_lt}
\centering
\begin{small}
\resizebox{1\textwidth}{!}{
\begin{tabular}{c|ccccc}
\toprule
 &\textbf{Dataset} & MB & CE & AUC-M & AGD-SBCD \\
\midrule
\multirow{2}{2.5em}{Deep Model}
& CIFAR-10-LT & 0.9337$\pm$0.0043 & 0.9016$\pm$0.0137 & 0.9323$\pm$0.0055 & \textbf{0.9408$\pm$0.0084} \\ 
& Tiny-ImageNet-200-LT & 0.6445$\pm$0.0214 & 0.6549$\pm$0.008 & 0.6497$\pm$0.009 & \textbf{0.6594$\pm$0.0192}\\
\bottomrule
\end{tabular}
}
\end{small}
\end{table}

\section{Conclusion}\label{sec:conclusion}
Most existing methods for optimizing pAUC are deterministic and only have an asymptotic convergence property. We formulate pAUC optimization as a non-smooth DC program and develop a stochastic subgradient method based on the Moreau envelope smoothing technique. We show that our method finds a nearly $\epsilon$-critical point in $\tilde O(\epsilon^{-6})$ iterations and demonstrate its performance numerically. A limitation of this paper is the smoothness assumption on $s_{ij}(\vw)$, which does not hold for some models, e.g., neural networks using ReLU activation functions. It is a future work to extend our results for non-smooth models.

\section*{Acknowledgements}
This work was jointly supported by the University of Iowa Jumpstarting Tomorrow Program and NSF award 2147253. T. Yang was also supported by NSF awards  2110545 and 1844403, and Amazon research award. We thank Zhishuai Guo, Zhuoning Yuan and Qi Qi for discussing about processing the image dataset.





\bibliography{ref}


\newpage
\appendix

\section*{Appendix}

\section{Algorithm for Sum of Range Optimization}
\label{sec:SoRR}
The technique in the previous sections can be directly applied to minimize the SoRR loss, which is formulated as \eqref{eqn:pAUC_raw} but with $f^l$ defined in \eqref{eq:flSoRR}. Since \eqref{eq:flSoRR} is a special case of \eqref{eq:flpAUC} with $N_+=1$ and $N_-=N$, we can again formulate subproblem \eqref{eqn:fmuv} with $f=f^l$ as \eqref{eqn:subproblem} with $\vlam=\lambda$ being a scalar. Since $\lambda$ is a scalar, when solving \eqref{eqn:subproblem}, we no longer use block coordinate update but only need to sample over indexes $j=1,\dots,N$ to construct stochastic subgradients. We present the stochastic subgradient (SGD) method for \eqref{eqn:subproblem} in Algorithm~\ref{alg:sgd}. Next, we apply Algorithm~\ref{alg:AGD} with SBCD in lines 3 and 4 replaced by SGD. The convergence result in this case is directly from Theorem~\ref{thm:main}.


\begin{algorithm}[H]
\caption{Stochastic Subgradient Descent for SoRR: $(\bar\vv,\bar\lambda)=$SGD$(\vw,\lambda,T,\mu,l)$}
\label{alg:sgd}
\begin{algorithmic}[1]
   \State {\bfseries Input:} Initial solution $(\vw,\lambda)$, the number of iterations $T$, $\mu>\rho^{-1}$, an integer $l>0$ and sample size $J$.
   \State Set $(\vv^{(0)},\lambda^{(0)})=(\vw,\lambda)$  and choose $(\eta_t,\theta_t)_{t=0}^{T-1}$.
   \For{$t=0$ {\bfseries to} $T-1$}
   \State Sample $\mathcal{J}_t\subset\{1,\dots,N\}$ with $|\mathcal{J}_t|=J$.
   \State Compute stochastic subgradient w.r.t. $\vv$:
   \small
   $$
   G_\vv^{(t)}=\frac{N}{J}\sum\limits_{j\in\mathcal{J}_t}\nabla s_j(\vv^{(t)})\mathbf{1}\left(s_j(\vv^{(t)})>\lambda_i^{(t)}\right)
   $$
   \normalsize
   \State Proximal stochastic subgradient update on $\vv$:
   \small
    \begin{equation*}
    \vv^{(t+1)}=\argmin_{\vv} (G_\vv^{(t)})^\top\vv+\frac{\|\vv-\vw\|^2}{2\mu}+\frac{\|\vv-\vv^{(t)}\|^2}{2\eta_t}
   \end{equation*}
   \normalsize
   \State Compute stochastic subgradient w.r.t. $\lambda$:
   \small
   $$
   G_{\lambda}^{(t)}=l-\frac{N}{J}\sum\limits_{j\in\mathcal{J}_t}\mathbf{1}\left(s_j(\vv^{(t)})>\lambda^{(t)}\right)
   $$
   \normalsize
   \State Stochastic subgradient update on $\lambda$: 
   \small
   \begin{equation*}
    \lambda^{(t+1)}=\lambda^{(t)}-\eta_tG_{\lambda}^{(t)}
   \end{equation*}
   \EndFor
   \State {\bfseries Output:} $(\bar\vv ,\bar\lambda )=\frac{1}{T}\sum_{t=0}^{T-1}(\vv^{(t)},\lambda^{(t)})$.
\end{algorithmic}
\end{algorithm}


\begin{corollary}
\label{thm:main_SoRR}
Suppose Assumption~\ref{assumptions2} holds with $N_+=1$, $N_-=N$ and $s_{ij}(\vw)=s_j(\vw)$ and SBCD in Algorithm~\ref{alg:AGD} are replaced by SGD (Algorithm~\ref{alg:sgd}). Suppose $\theta_t$,  $\eta_t$, and $T_k$ are set the same as in Theorem~\ref{thm:main} when Algorithm~\ref{alg:sgd} is called in iteration $k$ of Algorithm~\ref{alg:AGD}. Then $\bar\vv_n^{(\bar k)}$ is an nearly $\epsilon$-critical point of \eqref{eqn:pAUC_raw} with $f^l$ defined in \eqref{eq:flSoRR} with $K$ no more than \eqref{eq:maxK}.
\end{corollary}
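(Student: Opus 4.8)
The plan is to prove Corollary~\ref{thm:main_SoRR} entirely by reduction to Theorem~\ref{thm:main}, exploiting the observation made right after \eqref{eq:flSoRR} that the SoRR objective is an instance of the pAUC objective \eqref{eqn:pAUC_raw}--\eqref{eq:flpAUC} under the identification $N_+=1$, $N_-=N$. The argument has two halves: first show that the two \emph{optimization problems} coincide under this identification, and second show that the two \emph{algorithms} coincide, so that the guarantee of Theorem~\ref{thm:main} transfers without any new analysis. No fresh convergence bound needs to be proved; the work is to confirm that every hypothesis and every constant of Theorem~\ref{thm:main} specializes correctly.

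First I would verify the problem-level reduction. Setting $N_+=1$, $N_-=N$, and $s_{1j}(\vw)=s_j(\vw)$, the vector $S_1(\vw)=(s_{1j}(\vw))_{j=1}^{N_-}$ is exactly $S(\vw)=(s_j(\vw))_{j=1}^{N}$, so $f^l(\vw)=\phi_l(S_1(\vw))=\phi_l(S(\vw))$, which is precisely \eqref{eq:flSoRR}. In this specialization Assumption~\ref{assumptions2} only requires each $s_j(\vw)$ to be $L$-smooth with gradient bounded by $B$ and $F^*>-\infty$, all of which are assumed, so Lemma~\ref{lem:weakconvex} applies with $\rho=N_+N_-L=NL$. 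Consequently the dual reformulation \eqref{eqn:flmin} holds with the single scalar multiplier $\lambda=\lambda_1$, and the proximal subproblem \eqref{eqn:fmuv} takes the form \eqref{eqn:subproblem} with a scalar $\vlam$, exactly as stated at the start of this appendix section.

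Next I would verify the algorithm-level reduction. When $N_+=1$ the index set $\mathcal{I}_t$ in Algorithm~\ref{alg:blockmirror} can only equal $\{1\}$, so necessarily $I=1$, and the block update \eqref{eq:updatelambda} acts on the single coordinate $\lambda$. Substituting $N_+=1$, $I=1$, $\mathcal{I}_t=\{1\}$ into the stochastic subgradients $G_\vv^{(t)}$ and $G_{\lambda_i}^{(t)}$ of Algorithm~\ref{alg:blockmirror} reproduces, term for term, the subgradients $G_\vv^{(t)}$ and $G_\lambda^{(t)}$ of Algorithm~\ref{alg:sgd}, and the proximal update on $\vv$ is likewise identical once the step sizes are set as prescribed. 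Hence SGD is nothing but SBCD restricted to $N_+=1$, and replacing SBCD by SGD in lines~3--4 of Algorithm~\ref{alg:AGD} leaves the iterates unchanged.

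With both reductions in hand, the conclusion follows by invoking Theorem~\ref{thm:main} at $N_+=1$, $N_-=N$, $I=1$: since $\theta_t$, $\eta_t$, and the inner-loop lengths $T_k$ are set as in that theorem, $\bar\vv_n^{(\bar k)}$ is a nearly $\epsilon$-critical point of \eqref{eqn:pAUC_raw} with $f^l$ from \eqref{eq:flSoRR}, and the outer iteration count is bounded by \eqref{eq:maxK} with the same constants evaluated at $N_+=1$. I expect the main difficulty to be bookkeeping rather than a genuine mathematical obstacle: one must confirm that fixing $I=1$ does not invalidate any step-size choice or constant (such as $\rho=NL$ and $D_l$ in \eqref{eq:D}) appearing in Theorem~\ref{thm:main}, and that the scalar minimizer set $\Lambda^*=\argmin_\lambda g^l(\vv_{\mu f^l}(\vw),\lambda)$ still obeys the linear error-bound property underlying the proof of Theorem~\ref{thm:main}. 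Both hold because every such estimate is uniform in $N_+$ and in particular valid at $N_+=1$, so no step of the original argument degenerates in the scalar case.
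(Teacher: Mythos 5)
Your proposal is correct and matches the paper's intent exactly: the paper gives no separate proof of Corollary~\ref{thm:main_SoRR}, stating only that the result ``is directly from Theorem~\ref{thm:main}'' because \eqref{eq:flSoRR} is the special case $N_+=1$, $N_-=N$ of \eqref{eq:flpAUC} and Algorithm~\ref{alg:sgd} is Algorithm~\ref{alg:blockmirror} with $I=1$. Your write-up simply makes that specialization explicit, which is the same reduction the paper relies on.
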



\section{Proofs of Lemmas}\label{sec:proofFmuproperty}
\begin{proof}[of Lemma~\ref{lem:Fmuproperty}]
We will only prove the second conclusion in Lemma~\ref{lem:Fmuproperty} since the first conclusion has been shown in Proposition 1 in \citet{sun2021algorithms}.

Suppose $\mathbb{E}\|\nabla  F^{\mu}(\vw)\|^2\leq \min\{1,\mu^{-2}\}\epsilon^2/4$. By the first conclusion in Lemma~\ref{lem:Fmuproperty}, we must have $\mu^{-2}\mathbb{E}\|\vv_{\mu f^m}(\vw)-\vv_{\mu f^n}(\vw)\|^2\leq \min\{1,\mu^{-2}\}\epsilon^2/4$. By the optimality conditions satisfied by $\vv_{\mu f^m}(\vw)$ and $\vv_{\mu f^n}(\vw)$, there exist $\vxi_m\in\partial f^m(\vv_{\mu f^m}(\vw))$ and $\vxi_n\in\partial f^n(\vv_{\mu f^n}(\vw))$ such that 
$$
\vxi_m+\mu^{-1}(\vv_{\mu f^m}(\vw)-\vw)=\mathbf{0}=\vxi_n+\mu^{-1}(\vv_{\mu f^n}(\vw)-\vw),
$$
which implies $\vxi=\vxi_n-\vxi_m=\mu^{-1}(\vv_{\mu f^m}(\vw)-\vv_{\mu f^n}(\vw))\in \partial f^n(\vv_{\mu f^n}(\vw))-\partial f^m(\vv_{\mu f^m}(\vw))$ and $\mathbb{E}\|\vxi\|\leq \sqrt{\mathbb{E}\|\vxi\|^2}\leq \epsilon/2 $. Suppose $\mathbb{E}\|\bar\vv-\vv_{\mu f^n}(\vw)\|^2\leq\epsilon^2/4$. We have $\mathbb{E}\|\bar\vv-\vv_{\mu f^n}(\vw)\|\leq\epsilon/2$ and $\mathbb{E}\|\bar\vv-\vv_{\mu f^m}(\vw)\|\leq\mathbb{E}\|\bar\vv-\vv_{\mu f^n}(\vw)\|+\mathbb{E}\|\vv_{\mu f^n}(\vw)-\vv_{\mu f^m}(\vw)\|\leq \epsilon$. Hence, $\bar\vv$ satisfies Definition~\ref{def:critical} with $\vw'=\vv_{\mu f^n}(\vw)$ and $\vw''=\vv_{\mu f^m}(\vw)$. Suppose $\mathbb{E}\|\bar\vv-\vv_{\mu f^m}(\vw)\|^2\leq\epsilon^2/4$. The conclusion can be also proved similarly. 
\end{proof}

We first present  the following lemma which is similar to Lemma 4.2 in~\citet{drusvyatskiy2019efficiency}.
\begin{lemma}
\label{lem:partialweakconvex}
Suppose Assumption~\ref{assumptions2} holds. For any $\vv$, $\vlam$, $\vv'$, $\vlam'$, and $(\vxi_{\vv},\vxi_{\vlam})\in\partial g^l(\vv',\vlam')$, we have 
$$
g^l(\vv,\vlam)\geq g^l(\vv',\vlam')+\vxi_{\vv}^\top(\vv-\vv')+\vxi_{\vlam}^\top(\vlam-\vlam')-\frac{\rho}{2}\|\vv-\vv'\|^2,
$$
where $\rho=N_+N_-L$. Moreover, $g^l(\vv,\vlam)+\frac{1}{2\mu}\|\vv-\vw\|^2$ is jointly convex in $\vlam$ and $\vv$ for any $\vw$ and any $\mu^{-1}>\rho$.  
\end{lemma}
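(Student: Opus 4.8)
The plan is to exploit the additive structure $g^l(\vv,\vlam)=l\mathbf{1}^\top\vlam+\sum_{i=1}^{N_+}\sum_{j=1}^{N_-}[s_{ij}(\vv)-\lambda_i]_+$ and to treat each summand $[s_{ij}(\vv)-\lambda_i]_+$ as the composition of the closed convex, $1$-Lipschitz function $[\,\cdot\,]_+$ with the map $(\vv,\vlam)\mapsto s_{ij}(\vv)-\lambda_i$. The key structural observation is that this inner map is \emph{affine} in $\vlam$ and only its $\vv$-component $s_{ij}$ is nonlinear (with $L$-Lipschitz gradient by Assumption~\ref{assumptions2}(a)); hence any second-order error incurred when linearizing the composition involves $\|\vv-\vv'\|^2$ only, which is exactly why the weak-convexity penalty in the first claim appears in the $\vv$-block alone and carries the constant $\rho=N_+N_-L$ after summing $N_+N_-$ terms. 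Concretely, I would first establish the joint convexity asserted in the second claim, and deduce the weak-convexity inequality from it.

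Using the variational representation $[t]_+=\max_{p\in[0,1]}pt$, I can write
\[
[s_{ij}(\vv)-\lambda_i]_+ +\tfrac{L}{2}\|\vv\|^2=\max_{p\in[0,1]}\Big\{p\,s_{ij}(\vv)+\tfrac{L}{2}\|\vv\|^2-p\,\lambda_i\Big\}.
\]
For each fixed $p\in[0,1]$ the bracketed function is jointly convex in $(\vv,\lambda_i)$: the term $-p\lambda_i$ is linear, and $p\,s_{ij}(\vv)+\tfrac{L}{2}\|\vv\|^2$ is convex because $p\,s_{ij}$ is $pL$-smooth (hence $pL$-weakly convex) and $\tfrac{L}{2}\ge\tfrac{pL}{2}$. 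A pointwise maximum of jointly convex functions is jointly convex, so each regularized summand is jointly convex; summing them together with the linear term $l\mathbf{1}^\top\vlam$ shows that $g^l(\vv,\vlam)+\tfrac{\rho}{2}\|\vv\|^2$ is jointly convex in $(\vv,\vlam)$. The second claim then follows by writing $\tfrac{1}{2\mu}\|\vv-\vw\|^2=\tfrac{\rho}{2}\|\vv\|^2+\big(\tfrac{1}{2\mu}\|\vv-\vw\|^2-\tfrac{\rho}{2}\|\vv\|^2\big)$ and noting that the remainder is convex in $\vv$ (and constant in $\vlam$) precisely when $\mu^{-1}>\rho$, so that $g^l+\tfrac{1}{2\mu}\|\vv-\vw\|^2$ is a sum of two jointly convex functions.

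Finally, the first claim follows for \emph{every} subgradient as a free consequence of the convexity of $H(\vv,\vlam):=g^l(\vv,\vlam)+\tfrac{\rho}{2}\|\vv\|^2$. Since adding the smooth term $\tfrac{\rho}{2}\|\vv\|^2$ shifts the subdifferential exactly, $\partial H(\vv',\vlam')=\partial g^l(\vv',\vlam')+(\rho\vv',\mathbf{0})$, I would apply the subgradient inequality of the convex function $H$ to an arbitrary $(\vxi_{\vv},\vxi_{\vlam})\in\partial g^l(\vv',\vlam')$ and then complete the square on the quadratic terms, using $\tfrac{\rho}{2}\|\vv'\|^2-\tfrac{\rho}{2}\|\vv\|^2+\rho\,\vv'^\top(\vv-\vv')=-\tfrac{\rho}{2}\|\vv-\vv'\|^2$; this reproduces exactly the stated inequality with $\rho=N_+N_-L$.

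I expect the main obstacle to be bookkeeping around the subdifferential rather than any hard estimate: one must justify the sum/chain rules used (that $\partial H$ decomposes additively and that the smooth perturbation $\tfrac{\rho}{2}\|\vv\|^2$ shifts $\partial g^l$ by $(\rho\vv',\mathbf{0})$) so that the inequality genuinely holds for all subgradients, not merely for the particular one produced by a per-term construction. Routing the argument through the joint convexity of $H$ sidesteps this cleanly, since the subgradient inequality for a convex function automatically holds for every element of its subdifferential. An alternative, more in the spirit of Lemma~4.2 of \citet{drusvyatskiy2019efficiency}, is to linearize each composite summand directly—bounding $g_{ij}\big(s_{ij}(\vv)-s_{ij}(\vv')-\nabla s_{ij}(\vv')^\top(\vv-\vv')\big)\ge-\tfrac{L}{2}\|\vv-\vv'\|^2$ via $g_{ij}\in[0,1]$ and the descent lemma—but that route makes the ``for any subgradient'' clause the delicate point, which is precisely what the convexity-based argument avoids.
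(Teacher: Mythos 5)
Your proof is correct, but it runs in the opposite direction from the paper's. The paper proves the first claim directly: it takes a subgradient $\xi_{ij}\in\partial[s_{ij}(\vv')-\lambda_i']_+$ for each term, combines the convexity of $[\cdot]_+$ with the descent-lemma bound $s_{ij}(\vv)-s_{ij}(\vv')\geq\nabla s_{ij}(\vv')^{\top}(\vv-\vv')-\frac{L}{2}\|\vv-\vv'\|^2$ and the fact $\xi_{ij}\in[0,1]$ to accumulate the $-\frac{N_+N_-L}{2}\|\vv-\vv'\|^2$ penalty, and then deduces the joint convexity claim by expanding $\frac{1}{2\mu}\|\vv-\vw\|^2$ around $\vv'$. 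You instead establish joint convexity of $H=g^l+\frac{\rho}{2}\|\vv\|^2$ first (via the representation $[t]_+=\max_{p\in[0,1]}pt$ and a pointwise-max-of-convex argument) and recover the weak-convexity inequality by the exact sum rule $\partial H(\vv',\vlam')=\partial g^l(\vv',\vlam')+(\rho\vv',\mathbf{0})$ plus completing the square. What your route buys is exactly what you identify: the inequality holds for \emph{every} element of $\partial g^l(\vv',\vlam')$ automatically, whereas the paper's per-term construction, read literally, only produces it for subgradients assembled from the $\xi_{ij}$'s (the paper implicitly relies on all subgradients having this composite form). What the paper's route buys is a shorter, self-contained computation in the spirit of Lemma~4.2 of \citet{drusvyatskiy2019efficiency} that needs no subdifferential calculus beyond the definition. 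Both arguments are sound; just make sure, if you write yours up, to cite the exact sum rule for the Fr\'echet subdifferential under a smooth perturbation, since that is the one nontrivial calculus step your argument leans on.
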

\begin{proof}
By Assumption~\ref{assumptions2}, we have 
\begin{eqnarray}
\label{eq:SLipnorm}
s_{ij}(\vv)-s_{ij}(\vv')\geq\nabla s_{ij}(\vv')^{\top}(\vv-\vv')-\frac{L}{2}\|\vv-\vv'\|^2.
\end{eqnarray}
Let $\xi_{ij}\in\partial[s_{ij}(\vv')-\lambda_i']_+$. We have 
\begin{eqnarray*}
&&g^l(\vv,\vlam)\\
&=&l\mathbf{1}^\top\vlam+\sum_{i=1}^{N_+}\sum^{N_-}_{j=1}[s_{ij}(\vv)-\lambda_i]_+\\
&\geq&l\mathbf{1}^\top\vlam'+l\mathbf{1}^\top(\vlam-\vlam')+
\sum_{i=1}^{N_+}\sum^{N_-}_{j=1}[s_{ij}(\vv')-\lambda_i']_++
\sum_{i=1}^{N_+}\sum^{N_-}_{j=1}\xi_{ij}(s_{ij}(\vv)-s_{ij}(\vv')-\lambda_i+\lambda_i')
\\
&\geq& g^l(\vv',\vlam')+\vxi_{\vlam}^\top(\vlam-\vlam')+
\sum_{i=1}^{N_+}\sum^{N_-}_{j=1}\xi_{ij}\nabla s_{ij}(\vv')^{\top}(\vv-\vv')-\sum_{i=1}^{N_+}\sum^{N_-}_{j=1}\xi_{ij}\frac{L}{2}\|\vv-\vv'\|^2\\
&\geq&g^l(\vv',\vlam')+\vxi_{\vv}^\top(\vv-\vv')+\vxi_{\vlam}^\top(\vlam-\vlam')-\frac{N_+N_-L}{2}\|\vv-\vv'\|^2,
\end{eqnarray*}
where the first inequality is by the convexity of $[\cdot]_+$, the second inequality is from \eqref{eq:SLipnorm} and the last inequality is by the definitions of $(\vxi_{\vv},\vxi_{\vlam})$ and the fact that $\xi_{ij}\in[0,1]$.

Combining the inequality from the first conclusion with the equality $\frac{1}{2\mu}\|\vv-\vw\|^2=\frac{1}{2\mu}\|\vv-\vv'\|^2+\frac{1}{\mu}(\vv-\vv')^\top (\vv'-\vw)+\frac{1}{2\mu}\|\vv'-\vw\|^2$ and using the fact that $\mu^{-1}>\rho$, we can obtain 
$$
g^l(\vv,\vlam)+\frac{1}{2\mu}\|\vv-\vw\|^2\geq g^l(\vv',\vlam')+\frac{1}{2\mu}\|\vv'-\vw\|^2+(\mu^{-1}(\vv'-\vw)+\vxi_{\vv})^\top(\vv-\vv')+\vxi_{\vlam}^\top(\vlam-\vlam'),
$$
which proves the second conclusion. 
\end{proof}

\begin{lemma}
\label{lem:dualform}
The dual problem of the maximization problem in \eqref{eq:flmax} is the minimization problem in \eqref{eqn:flmin}. 
\end{lemma}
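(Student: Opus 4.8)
The plan is to recognize the maximization in \eqref{eq:flmax} as a linear program over a compact polytope and to derive \eqref{eqn:flmin} as its Lagrangian dual. First I would observe that both the objective $\sum_{i,j} p_{ij} s_{ij}(\vw)$ and the constraint set $\{\vp_i\in\mathcal{P}^l\}_{i=1}^{N_+}$ are separable across the index $i$, so the maximization decouples into $N_+$ independent single-$i$ problems: maximize $\sum_{j=1}^{N_-} p_{ij}\,s_{ij}(\vw)$ subject to $\sum_{j=1}^{N_-} p_{ij}=l$ and $p_{ij}\in[0,1]$. It therefore suffices to dualize each of these problems and then sum the resulting identities over $i$.

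For a fixed $i$, I would dualize only the equality constraint $\sum_j p_{ij}=l$ with a scalar multiplier $\lambda_i\in\mathbb{R}$, while keeping the box constraints $p_{ij}\in[0,1]$ inside the inner maximization. This produces the dual function
\[
d_i(\lambda_i)=l\lambda_i+\max_{p_{ij}\in[0,1]}\sum_{j=1}^{N_-}p_{ij}\bigl(s_{ij}(\vw)-\lambda_i\bigr).
\]
The inner maximization separates over $j$, and since each summand is linear in $p_{ij}$ on $[0,1]$, its optimum is attained at an endpoint, giving $\max_{p\in[0,1]}p\bigl(s_{ij}(\vw)-\lambda_i\bigr)=[s_{ij}(\vw)-\lambda_i]_+$. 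Hence $d_i(\lambda_i)=l\lambda_i+\sum_{j}[s_{ij}(\vw)-\lambda_i]_+$, and minimizing over $\lambda_i\in\mathbb{R}$ and then summing over $i$ yields exactly $\min_{\vlam}g^l(\vw,\vlam)$ as written in \eqref{eqn:flmin}.

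The remaining step is to establish strong duality, i.e.\ that the primal maximum equals this dual minimum with zero gap. Since each single-$i$ problem is a finite-dimensional linear program whose feasible region $\mathcal{P}^l$ is nonempty (for $0\le l\le N_-$, e.g.\ $p_{ij}\equiv l/N_-$ is feasible) and compact, its objective is bounded, so LP strong duality applies and the gap vanishes; equivalently, Slater's condition is met in the interior case $0<l<N_-$. I expect the only delicate points to be this verification of strong duality and the bookkeeping that leaves $\lambda_i$ unconstrained because it is attached to an equality constraint; the inner endpoint computation is routine. Summing the per-$i$ equalities gives $f^l(\vw)=\min_{\vlam}g^l(\vw,\vlam)$, which is the claimed duality.
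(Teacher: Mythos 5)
Your proof is correct and takes essentially the same route as the paper's: both decompose over the index $i$, attach a scalar multiplier $\lambda_i$ to the equality constraint $\sum_j p_{ij}=l$ while leaving the box constraints inside, and evaluate the resulting linear inner problem at the endpoints to obtain $[s_{ij}(\vw)-\lambda_i]_+$. Your explicit justification of the zero duality gap via LP strong duality over the nonempty compact polytope $\mathcal{P}^l$ is a small improvement on the paper, which performs the min--max exchange without comment.
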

\begin{proof}
For $i=1,\dots,N_+$, we introduce a Lagrangian multiplier $\lambda_i$ for the constraint $\sum_{j=1}^{N_-} p_j=l$ in  \eqref{eq:flmax}. Let $[z]_-=\min\{z,0\}$ which equals $-[-z]_+$. Then, for each $i$, we have
\begin{eqnarray*}
\max\limits_{\vp_i\in\mathcal{P}^l}\left\{\sum_{j=1}^{N_-} p_{ij}s_{ij}(\vw)\right\}
&=&-\min\limits_{p_{ij}\in[0,1]~\forall j}\max_{\lambda_i}\left\{-\sum_{j=1}^{N_-} p_{ij}s_{ij}(\vw)+\lambda_i(\sum_{j=1}^{N_-} p_j-l)\right\}\\
&=&-\max_{\lambda_i}\min\limits_{p_{ij}\in[0,1]~\forall j}\left\{-l\lambda_i+\sum_{j=1}^{N_-} p_{ij}[\lambda_i-s_{ij}(\vw)]\right\}\\
&=&-\max_{\lambda_i}\left\{-l\lambda_i+\sum_{j=1}^{N_-} [\lambda_i-s_{ij}(\vw)]_-\right\}\\
&=&\min_{\lambda_i}\left\{l\lambda_i+\sum_{j=1}^{N_-} [s_{ij}(\vw)-\lambda_i]_+\right\}.
\end{eqnarray*}
The conclusion is thus proved by summing up the equality above for $i=1,\dots,N_+$.
\end{proof}

\section{Proof of Proposition~\ref{thm:sbmd}}\label{sec:propproof}
Let $\vv^*=\vv_{\mu f^l}(\vw)$ be the unique optimal solution of \eqref{eqn:fmu} and let $s_{i[j]}(\vv^*)$ be the $j$th largest coordinate of $S_i(\vv^*)$. It is easy to show that the set  of optimal solutions of \eqref{eqn:subproblem} is $\{\vv^*\}\times \Lambda^*$ where
\begin{equation}
\label{eqn:Lambda}
\Lambda^*=\argmin\limits_{\vlam}g^l(\vv^*,\vlam)+\frac{1}{2\mu}\|\vv^*-\vw\|^2=\prod_{i=1}^{N_+}\left(\Lambda^*_i:=\left[s_{i[l]}(\vv^*) , s_{i[l+1]}(\vv^*)\right]\right).
\end{equation}
Given  any $\vlam\in\mathbb{R}^{N_+}$, we denote its  projection onto $\Lambda^*$ as  $\text{Proj}_{\Lambda^*}(\vlam)$. By the structure of $\Lambda^*$, the $i$th coordinate of $\text{Proj}_{\Lambda^*}(\vlam)$ is just the projection of $\lambda_i$ onto $\Lambda^*_i$, which we denote by $\text{Proj}_{\Lambda_i^*}(\lambda_i)$. Moreover, we denote the distance from $\vlam$ to $\Lambda^*$ as $\text{dist}(\vlam,\Lambda^*)$ and it satisfies 
\begin{equation}
\label{eqn:disttoLambda}
\text{dist}^2(\vlam,\Lambda^*)=\sum_{i=1}^{N_+}\text{dist}^2(\lambda_i,\Lambda_i^*)=\sum_{i=1}^{N_+}(\lambda_i-\text{Proj}_{\Lambda_i^*}(\lambda_i))^2
\end{equation}
With these definitions, we can present the following lemma. 


\begin{lemma}
\label{lem:errorboundlambda}
Suppose Assumption~\ref{assumptions2} holds and $\mu^{-1}>\rho$. For any $\vw$, $\vv$, $\vlam$ and $\vv^*=\vv_{\mu f^l}(\vw)$, we have 
$$
N_+B\|\vv-\vv^*\|+g^l(\vv,\vlam)-g^l(\vv^*,\Proj_{\Lambda^*}(\vlam))\geq \sum_{i=1}^{N_+}\text{dist}(\lambda_i,\Lambda_i^*)\geq\text{dist}(\vlam,\Lambda^*).
$$
\end{lemma}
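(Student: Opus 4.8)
The plan is to exploit the separability of $g^l$ over the positive examples and reduce the claim to a one-dimensional \emph{error bound} for each $i$, which is then transferred from $\vv^*$ to the given $\vv$. Writing $g^l(\vv,\vlam)=\sum_{i=1}^{N_+}g_i^l(\vv,\lambda_i)$ with $g_i^l(\vv,\lambda_i):=l\lambda_i+\sum_{j=1}^{N_-}[s_{ij}(\vv)-\lambda_i]_+$ as in \eqref{eqn:flmin}, and observing from \eqref{eqn:Lambda} that $\Proj_{\Lambda^*}$ acts coordinatewise so that $g^l(\vv^*,\Proj_{\Lambda^*}(\vlam))=\sum_i g_i^l(\vv^*,\lambda_i^*)$ with $\lambda_i^*:=\Proj_{\Lambda_i^*}(\lambda_i)$, the first (main) inequality reduces to the per-example estimate $B\|\vv-\vv^*\|+g_i^l(\vv,\lambda_i)-g_i^l(\vv^*,\lambda_i^*)\ge\dist(\lambda_i,\Lambda_i^*)$, summed over $i$. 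I would dispose of the second inequality of the lemma first: it is immediate from the Pythagorean identity \eqref{eqn:disttoLambda} together with the elementary norm comparison $\sqrt{\sum_i a_i^2}\le\sum_i a_i$ for nonnegative $a_i$.

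For the core estimate I would begin with the error bound \emph{at} $\vv^*$. The function $\lambda\mapsto g_i^l(\vv^*,\lambda)$ is convex and piecewise linear with one-sided derivative $l-\#\{j:s_{ij}(\vv^*)>\lambda\}$, and its minimizer set is exactly the interval $\Lambda_i^*=[s_{i[l+1]}(\vv^*),s_{i[l]}(\vv^*)]$ from \eqref{eqn:Lambda}. The decisive observation is a slope lower bound: to the right of $\Lambda_i^*$ at most $l-1$ of the scores $s_{ij}(\vv^*)$ exceed $\lambda$, so the derivative is $\ge 1$, while to the left at least $l+1$ exceed $\lambda$, so the derivative is $\le -1$. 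Integrating this magnitude-one slope from the nearest endpoint of $\Lambda_i^*$ out to $\lambda_i$ yields the linear error bound $g_i^l(\vv^*,\lambda_i)-g_i^l(\vv^*,\lambda_i^*)\ge\dist(\lambda_i,\Lambda_i^*)$, which is precisely the linear error-bound condition of $g^l(\vw,\vlam)$ with respect to $\vlam$ invoked in the remark after Theorem~\ref{thm:main}.

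It then remains to transfer this bound from $\vv^*$ to $\vv$, i.e. to compare $g_i^l(\vv,\lambda_i)$ with $g_i^l(\vv^*,\lambda_i)$. The basic tool is that $t\mapsto[t]_+$ is $1$-Lipschitz and, by Assumption~\ref{assumptions2}(b), each $s_{ij}$ is $B$-Lipschitz, so $|s_{ij}(\vv)-s_{ij}(\vv^*)|\le B\|\vv-\vv^*\|$. Combining the splitting $g_i^l(\vv,\lambda_i)-g_i^l(\vv^*,\lambda_i^*)=[g_i^l(\vv^*,\lambda_i)-g_i^l(\vv^*,\lambda_i^*)]+[g_i^l(\vv,\lambda_i)-g_i^l(\vv^*,\lambda_i)]$ with the error bound on the first bracket and the Lipschitz estimate on the second produces the desired inequality.

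The hard part is controlling the transfer term $g_i^l(\vv,\lambda_i)-g_i^l(\vv^*,\lambda_i)$ without paying for every active plus-term: a crude term-by-term Lipschitz bound sums $N_-$ copies of $B\|\vv-\vv^*\|$, since up to $N_-$ of the thresholds $s_{ij}(\vv)>\lambda_i$ may be active, which is far weaker than the per-example constant $B$ being claimed. The resolution I would pursue is to apply the error bound at the \emph{current} point $\vv$ (whose minimizer interval is $\tilde\Lambda_i=[s_{i[l+1]}(\vv),s_{i[l]}(\vv)]$) and to use the order-statistic Lipschitz property, namely that each $l$th largest coordinate $s_{i[l]}(\cdot)$ is $B\|\cdot\|$-Lipschitz because order statistics are $1$-Lipschitz in the $\ell_\infty$ norm of the score vector; this gives Hausdorff distance at most $B\|\vv-\vv^*\|$ between $\tilde\Lambda_i$ and $\Lambda_i^*$, and hence $\dist(\lambda_i,\tilde\Lambda_i)\ge\dist(\lambda_i,\Lambda_i^*)-B\|\vv-\vv^*\|$. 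Matching the order statistics and the interval endpoints across $\vv$ and $\vv^*$ in this way is the step that requires care, and it is where I expect the main technical effort to lie.
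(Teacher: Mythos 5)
Your decomposition $g^l=\sum_i g_i^l$, your dispatch of the second inequality via $\sqrt{\sum_i a_i^2}\le\sum_i a_i$, and your one-dimensional error bound at $\vv^*$ (outward slope of magnitude at least one outside the minimizer interval $[s_{i[l+1]}(\vv^*),s_{i[l]}(\vv^*)]$) coincide exactly with the paper's proof and are correct. Where you part ways is the transfer step, and here the situation is delicate: the paper takes precisely the ``crude'' route you reject, splitting off $g^l(\vv,\vlam)-g^l(\vv^*,\vlam)$ and asserting that each $g_i^l(\cdot,\lambda_i)$ is $B$-Lipschitz in $\vv$, hence $g^l(\cdot,\vlam)$ is $N_+B$-Lipschitz. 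Your objection is well founded: a sum of $N_-$ terms, each $B$-Lipschitz, is only $N_-B$-Lipschitz, and this is tight. Indeed the lemma fails with the constant $N_+B$ as stated: take $N_+=1$, $N_-=2$, $l=1$, $s_{11}(v)=s_{12}(v)=Bv$, so that $\vv^*=0$ is attained as $\vv_{\mu f^1}(\mu B)$ and $\Lambda^*=\{0\}$; with $\lambda=-a<0$ and $v=-u$, $0<Bu<a$, the left-hand side equals $Bu+(-a)+2(a-Bu)=a-Bu$, which is strictly less than $\dist(\lambda,\Lambda^*)=a$. The correct constant is $N_+N_-B$, with which the crude route goes through verbatim; the change only enlarges $D_l$ in \eqref{eq:D} and does not affect the order of the complexity.

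However, your proposed repair does not recover the per-example constant $B$ either, and as written it has a genuine gap. After invoking the error bound at the current point $\vv$ (minimizer interval $\tilde\Lambda_i=[s_{i[l+1]}(\vv),s_{i[l]}(\vv)]$) and shifting intervals via Lemma~\ref{lem:intervaldist}, you are left needing to compare $\min_\lambda g_i^l(\vv,\lambda)=\phi_l(S_i(\vv))$ with $g_i^l(\vv^*,\lambda_i^*)=\phi_l(S_i(\vv^*))$. Nothing forces $\phi_l(S_i(\vv))\ge\phi_l(S_i(\vv^*))$: the point $\vv^*$ minimizes the regularized sum over all $i$, not each $\phi_l(S_i(\cdot))$ separately, so this difference can be negative. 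The only available control is that $\phi_l(S_i(\cdot))$ is $lB$-Lipschitz (its subgradients are $\sum_j p_j\nabla s_{ij}$ with $\sum_j p_j=l$, $p_j\in[0,1]$, and this is tight when all scores coincide), which reintroduces a factor of $l$ and lands you at a per-example constant of $(l+1)B$, i.e.\ $N_+(l+1)B$ overall --- no better in order than the crude $N_+N_-B$. So the transfer term genuinely costs a factor of $N_-$ (or $l$); the right fix is not a cleverer argument but to read the lemma with $N_+N_-B$ in place of $N_+B$ and propagate that constant into Proposition~\ref{thm:bound_initial_lambda} and \eqref{eq:D}.
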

\begin{proof}
It is easy to observe that $g^l(\vv,\vlam):=\sum_{i=1}^{N_+}g_i^l(\vv,\lambda_i)$, where
$$
g_i^l(\vv,\lambda_i):= l\lambda_i+\sum^{N_-}_{j=1}[s_{ij}(\vv)-\lambda_i]_+,
$$
and $\Lambda_i^*=\argmin_{\lambda_i}g_i^l(\vv^*,\lambda_i)$. Since $g_i^l(\vv^*,\lambda_i)$ is a piecewise linear in $\lambda_i$ with an outward slope of at least one at either end of the interval $\Lambda_i^*$, we must have
$$
g_i^l(\vv^*,\lambda_i)-g_i^l(\vv^*,\Proj_{\Lambda_i^*}(\lambda_i))\geq |\lambda_i-\Proj_{\Lambda_i^*}(\lambda_i)|,
$$
which implies
$$
g^l(\vv^*,\vlam)-g^l(\vv^*,\Proj_{\Lambda^*}(\vlam))\geq \sum_{i=1}^{N_+}|\lambda_i-\Proj_{\Lambda_i^*}(\lambda_i)|=\sum_{i=1}^{N_+}\text{dist}(\lambda_i,\Lambda_i^*)\geq\text{dist}(\vlam,\Lambda^*).
$$
Moreover, by Assumption~\ref{assumptions2}, $g_i^l(\vv,\lambda_i)$ is $B$-Lipschitz continuous in $\vv$ so $g^l(\vv,\vlam)$ is $N_+B$-Lipschitz continuous in $\vv$. We then have $N_+B\|\vv-\vv^*\|+g^l(\vv,\vlam)\geq g^l(\vv^*,\vlam)$, which implies the conclusion together with the previous inequality. 
\end{proof}

We present the proof of Proposition~\ref{thm:sbmd} below. 

\begin{proof}[of \textbf{Proposition}~\ref{thm:sbmd}]
Let us denote $\mathcal{I}_{[t]}=\{\mathcal{I}_0,\dots,\mathcal{I}_t\}$ and $\mathcal{J}_{[t]}=\{\mathcal{J}_0,\dots,\mathcal{J}_t\}$. Let $\mathbb{E}_{t}$ be the expectation conditioning on $\mathcal{I}_{[t-1]}$ and $\mathcal{J}_{[t-1]}$.

By Assumption~\ref{assumptions2} and the definitions of $G_\vv^{(t)}$ and $G_{\lambda_i}^{(t)}$ in Algorithm~\ref{alg:blockmirror}, we have 
\begin{eqnarray}
\label{eq:boundGs}
\|G_\vv^{(t)}\|\leq N_+N_-B~\text{ and }~|G_{\lambda_i}^{(t)}|\leq N_-\text{ for }  t=0,\dots,T-1 \text{ and } i=1,\dots,N_+.
\end{eqnarray}

By the optimality condition satisfied by $\vv^{(t+1)}$ and $\left(\frac{1}{\mu}+\frac{1}{\eta_t}\right)$-strong convexity of the objective function in \eqref{eq:updatew}, we have
\begin{eqnarray}
\nonumber
&&\frac{1}{2\mu}\|\vv^{(t+1)}-\vw\|^2+\frac{1}{2\eta_t}\|\vv^{(t+1)}-\vv^{(t)}\|^2+\frac{1}{2}\left(\frac{1}{\mu}+\frac{1}{\eta_t}\right)\|\vv^*-\vv^{(t+1)}\|^2\\\nonumber
&\leq&(G_\vv^{(t)})^\top\left(\vv^*-\vv^{(t+1)}\right)+\frac{1}{2\mu}\|\vv^*-\vw\|^2+\frac{1}{2\eta_t}\|\vv^*-\vv^{(t)}\|^2\\\nonumber
&=&(G_\vv^{(t)})^\top\left(\vv^*-\vv^{(t)}\right)+(G_\vv^{(t)})^\top\left(\vv^{(t)}-\vv^{(t+1)}\right)+\frac{1}{2\mu}\|\vv^*-\vw\|^2+\frac{1}{2\eta_t}\|\vv^*-\vv^{(t)}\|^2\\\nonumber
&\leq&(G_\vv^{(t)})^\top\left(\vv^*-\vv^{(t)}\right)+\frac{\eta_t(G_\vv^{(t)})^2}{2}+\frac{1}{2\eta_t}\|\vv^{(t)}-\vv^{(t+1)}\|^2+\frac{1}{2\mu}\|\vv^*-\vw\|^2+\frac{1}{2\eta_t}\|\vv^*-\vv^{(t)}\|^2,
\end{eqnarray}
where the last inequality is by Young's inequality. Since $\vv^*-\vv^{(t)}$ is deterministic conditioning on $\mathcal{I}_{[t-1]}$ and  $\mathcal{J}_{[t-1]}$, applying \eqref{eq:boundGs} and taking expectation $\mathbb{E}_t$ on the both sides of the inequality above yield
\begin{eqnarray}
\nonumber
&&\frac{1}{2\mu}\mathbb{E}_t\|\vv^{(t+1)}-\vw\|^2+\frac{1}{2}\left(\frac{1}{\mu}+\frac{1}{\eta_t}\right)\mathbb{E}_t\|\vv^*-\vv^{(t+1)}\|^2\\\label{eq:threetermw}
&\leq&\left(\mathbb{E}_tG_\vv^{(t)}\right)^\top\left(\vv^*-\vv^{(t)}\right)+\frac{\eta_tN_+^2N_-^2B^2}{2}+\frac{1}{2\mu}\|\vv^*-\vw\|^2+\frac{1}{2\eta_t}\|\vv^*-\vv^{(t)}\|^2.
\end{eqnarray}

By the updating equation \eqref{eq:updatelambda} for $\vlam^{(t+1)}$, we have
\begin{eqnarray}
\nonumber
\text{dist}^2(\vlam^{(t+1)},\Lambda^*)
&=&\sum_{i\in\mathcal{I}_t^c}(\lambda_i^{(t+1)}-\text{Proj}_{\Lambda_i^*}(\lambda_i^{(t+1)}))^2+\sum_{i\in\mathcal{I}_t}(\lambda_i^{(t+1)}-\text{Proj}_{\Lambda_i^*}(\lambda_i^{(t+1)}))^2\\\nonumber
&\leq&\sum_{i\in\mathcal{I}_t^c}(\lambda_i^{(t)}-\text{Proj}_{\Lambda_i^*}(\lambda_i^{(t)}))^2+\sum_{i\in\mathcal{I}_t}(\lambda_i^{(t+1)}-\text{Proj}_{\Lambda_i^*}(\lambda_i^{(t)}))^2\\\nonumber
&=&\sum_{i\in\mathcal{I}_t^c}(\lambda_i^{(t)}-\text{Proj}_{\Lambda_i^*}(\lambda_i^{(t)}))^2+\sum_{i\in\mathcal{I}_t}(\lambda_i^{(t)}-\theta_tG_{\lambda_i}^{(t)}-\text{Proj}_{\Lambda_i^*}(\lambda_i^{(t)}))^2\\\nonumber
&=&\sum_{i\in\mathcal{I}_t^c}(\lambda_i^{(t)}-\text{Proj}_{\Lambda_i^*}(\lambda_i^{(t)}))^2+\sum_{i\in\mathcal{I}_t}(\lambda_i^{(t)}-\text{Proj}_{\Lambda_i^*}(\lambda_i^{(t)}))^2\\\nonumber
&&-2\theta_t\sum_{i\in\mathcal{I}_t}(G_{\lambda_i}^{(t)})^\top(\lambda_i^{(t)}-\text{Proj}_{\Lambda_i^*}(\lambda_i^{(t)}))+ \theta_t^2\sum_{i\in\mathcal{I}_t}(G_{\lambda_i}^{(t)})^2.
\end{eqnarray}
Since $\lambda_i^{(t)}-\text{Proj}_{\Lambda_i^*}(\lambda_i^{(t)})$ is deterministic conditioning on $\mathcal{I}_{[t-1]}$ and  $\mathcal{J}_{[t-1]}$, applying \eqref{eq:boundGs} and taking expectation $\mathbb{E}_t$ on the both sides of the inequality above yield
\small
\begin{eqnarray}
\label{eq:threetermlambda}
\mathbb{E}_t\text{dist}^2(\vlam^{(t+1)},\Lambda^*)&\leq&  \text{dist}^2(\vlam^{(t)},\Lambda^*)-\frac{2\theta_tI}{N_+}\sum_{i=1}^{N_+}\mathbb{E}_tG_{\lambda_i}^{(t)}(\lambda_i^{(t)}-\text{Proj}_{\Lambda_i^*}(\lambda_i^{(t)}))+ \theta_t^2IN_-^2
\end{eqnarray}
\normalsize


Multiplying both sides of \eqref{eq:threetermlambda} by $\frac{N_+}{2I\theta_t}$ and adding it with \eqref{eq:threetermw}, we have
\small
\begin{eqnarray}
\nonumber
&&\frac{N_+}{2I\theta_t}\mathbb{E}_t\text{dist}^2(\vlam^{(t+1)},\Lambda^*)+\frac{1}{2\mu}\mathbb{E}_t\|\vv^{(t+1)}-\vw\|^2+\frac{1}{2}\left(\frac{1}{\mu}+\frac{1}{\eta_t}\right)\mathbb{E}_t\|\vv^*-\vv^{(t+1)}\|^2\\\nonumber
&\leq&  \frac{N_+}{2I\theta_t}\text{dist}^2(\vlam^{(t)},\Lambda^*)+\frac{1}{2\mu}\|\vv^*-\vw\|^2+\frac{1}{2\eta_t}\|\vv^*-\vv^{(t)}\|^2+ \frac{\eta_tN_+^2N_-^2B^2}{2}+\frac{\theta_tN_+N_-^2}{2}\\\nonumber
&&+\left[\mathbb{E}_tG_\vv^{(t)}\right]^\top\left(\vv^*-\vv^{(t)}\right)+\sum_{i=1}^{N_+}\mathbb{E}_tG_{\lambda_i}^{(t)}(\text{Proj}_{\Lambda_i^*}(\lambda_i^{(t)})-\lambda_i^{(t)})\\\nonumber
&\leq&  \frac{N_+}{2I\theta_t}\text{dist}^2(\vlam^{(t)},\Lambda^*)+\frac{1}{2\mu}\|\vv^*-\vw\|^2+\frac{1}{2}\left(\rho+\frac{1}{\eta_t}\right)\|\vv^*-\vv^{(t)}\|^2+ \frac{\eta_tN_+^2N_-^2B^2}{2}+\frac{\theta_tN_+N_-^2}{2}\\\label{eq:threetermboth}
&&+g^l(\vv^*,\text{Proj}_{\Lambda^*}(\vlam^{(t)}))-g^l(\vv^{(t)},\vlam^{(t)}),
\end{eqnarray}
\normalsize
where the second inequality is because $(\mathbb{E}_tG_\vv^{(t)}, \mathbb{E}_tG_{\lambda_1}^{(t)},\dots,\mathbb{E}_tG_{\lambda_{N_+}}^{(t)})\in\partial g^l(\vv^{(t)},\vlam^{(t)})$,
which allows us to apply Lemma~\ref{lem:partialweakconvex} with $(\vv,\vlam)=(\vv^*,\text{Proj}_{\Lambda^*}(\vlam^{(t)}))$ and $(\vv',\vlam')=(\vv^{(t)},\vlam^{(t)})$.

Notice that $\theta_t$ and $\eta_t$ do not change with $t$ that $\rho<\mu^{-1}$. Summing up~\eqref{eq:threetermboth} for $t=0,\dots,T-1$, taking full expectation, and organizing terms give us
\begin{eqnarray}
\nonumber
&&\sum_{t=0}^{T-1}\mathbb{E}\left(g^l(\vv^{(t)},\vlam^{(t)})+\frac{1}{2\mu}\|\vv^{(t)}-\vw\|^2-g^l(\vv^*,\text{Proj}_{\Lambda^*}(\vlam^{(t)}))-\frac{1}{2\mu}\|\vv^*-\vw\|^2\right)\\\nonumber
&&+\frac{N_+}{2I\theta_{T-1}}\mathbb{E}\text{dist}^2(\vlam^{(T)},\Lambda^*)+\frac{1}{2\mu}\mathbb{E}\|\vv^{(T)}-\vw\|^2+\frac{1}{2}\left(\frac{1}{\mu}+\frac{1}{\eta_{T-1}}\right)\mathbb{E}\|\vv^*-\vv^{(T)}\|^2\\\nonumber
&\leq&  \frac{N_+}{2I\theta_0}\text{dist}^2(\vlam^{(0)},\Lambda^*)+\frac{1}{2\mu}\|\vv^{(0)}-\vw\|^2+\frac{1}{2}\left(\frac{1}{\mu}+\frac{1}{\eta_0}\right)\|\vv^*-\vv^{(0)}\|^2\\\label{eq:threetermsummup}
&&+\sum_{t=0}^{T-1}\left(\frac{\eta_tN_+^2N_-^2B^2}{2}+\frac{\theta_tN_+N_-^2}{2}\right).
\end{eqnarray}
Because $f^l(\bar\vv)+\frac{1}{2\mu}\|\bar\vv-\vw\|^2$ is $(\mu^{-1}-\rho)$-strongly convex and the facts that $f^l(\bar\vv)\leq g^l(\bar\vv,\bar\vlam)$ and that $f^l(\vv^*)= g^l(\vv^*,\vlam^*)$ for any optimal $\vlam^*$, we have that 
\small
\begin{eqnarray}
\nonumber
& & \frac{1}{2}\left(\frac{1}{\mu}-\rho\right)\mathbb{E}\|\bar\vv-\vv^*\|^2\\\nonumber
&\leq&\mathbb{E}\left(f^l(\bar\vv)+\frac{1}{2\mu}\|\bar\vv-\vw\|^2-f^l(\vv^*)-\frac{1}{2\mu}\|\vv^*-\vw\|^2\right)\\\nonumber
&\leq&\mathbb{E}\left(g^l(\bar\vv,\bar\vlam)+\frac{1}{2\mu}\|\bar\vv-\vw\|^2-g^l(\vv^*,\vlam^*)-\frac{1}{2\mu}\|\vv^*-\vw\|^2\right)\\\label{eq:ineq1}
&\leq &\frac{1}{T}\sum_{t=0}^{T-1}\mathbb{E}\left(g^l(\vv^{(t)},\vlam^{(t)})+\frac{1}{2\mu}\|\vv^{(t)}-\vw\|^2-g^l(\vv^*,\text{Proj}_{\Lambda^*}(\vlam^{(t)}))-\frac{1}{2\mu}\|\vv^*-\vw\|^2\right),
\end{eqnarray}
\normalsize
where the last inequality is because $g^l(\vv,\vlam)+\frac{1}{2\mu}\|\vv-\vw\|^2$ is jointly convex in $\vv$ and $\vlam$. Recall that $\vv^{(0)}=\vw$. Applying \eqref{eq:threetermsummup} to the left-hand side of the inequality above, we obtain
\begin{eqnarray}
\nonumber
& & \frac{1}{2}\left(\frac{1}{\mu}-\rho\right)\mathbb{E}\|\bar\vv-\vv^*\|^2\\\nonumber
&\leq&  \frac{N_+}{2I\theta_0T}\text{dist}^2(\vlam^{(0)},\Lambda^*)+\frac{1}{2\mu T}\|\vv^{(0)}-\vw\|^2+\frac{1}{2T}\left(\frac{1}{\mu}+\frac{1}{\eta_0}\right)\|\vv^*-\vv^{(0)}\|^2\\\nonumber
&&+\frac{\eta_0N_+^2N_-^2B^2}{2}+\frac{\theta_0N_+N_-^2}{2}\\\nonumber
&\leq&  \frac{N_+N_-}{\sqrt{IT} }\text{dist}(\vlam^{(0)},\Lambda^*)+\frac{N_+N_-B}{2\sqrt{ T}}\|\vv^*-\vw\| +\frac{1}{2\mu T}\|\vv^*-\vw\|^2.
\end{eqnarray}
The conclusion is then proved given that $\vv_{\mu f^l}(\vw)=\vv^*$.
\end{proof}

\section{Proof of Theorem~\ref{thm:sbmd}}\label{sec:theoremproof}
We first present a technical lemma.
\begin{lemma}
\label{lem:intervaldist}
Given two intervals $[a,b]$ and $[a',b']$ with $\max\{|a-a'|,|b-b'|\}\leq \delta$. We have 
$$
\text{dist}(z,[a,b])\leq \text{dist}(z,[a',b'])+\delta, ~~\forall z\in\mathbb{R}.
$$
\end{lemma}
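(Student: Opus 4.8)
The plan is to prove this elementary comparison of distance-to-interval functions by exhibiting, for each $z$, an explicit near-optimal point of $[a,b]$ obtained from the projection of $z$ onto $[a',b']$. Write $z':=\Proj_{[a',b']}(z)$, so that $z'\in[a',b']$ and $\text{dist}(z,[a',b'])=|z-z'|$, and set $w:=\Proj_{[a,b]}(z')$. Since $w\in[a,b]$, the definition of distance gives $\text{dist}(z,[a,b])\leq|z-w|$, and the triangle inequality yields
\begin{equation*}
\text{dist}(z,[a,b])\leq|z-w|\leq|z-z'|+|z'-w|=\text{dist}(z,[a',b'])+\text{dist}(z',[a,b]).
\end{equation*}
Thus the whole statement reduces to the single claim that $\text{dist}(z',[a,b])\leq\delta$, i.e.\ that every point of $[a',b']$ lies within $\delta$ of $[a,b]$.

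Next I would establish this claim by the standard three-case split according to the position of $z'$ relative to $[a,b]$. If $z'\in[a,b]$, then $\text{dist}(z',[a,b])=0$. If $z'<a$, then $\text{dist}(z',[a,b])=a-z'$; using $z'\geq a'$ (because $z'\in[a',b']$) together with $a-a'\leq|a-a'|\leq\delta$ gives $a-z'\leq a-a'\leq\delta$. Symmetrically, if $z'>b$, then $\text{dist}(z',[a,b])=z'-b\leq b'-b\leq|b-b'|\leq\delta$, where we used $z'\leq b'$. In every case $\text{dist}(z',[a,b])\leq\delta$, which combined with the display above proves the lemma.

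There is essentially no hard part here: the only place requiring care is matching the correct endpoint inequality to each case, namely using the lower bound $z'\geq a'$ when $z'$ falls to the left of $[a,b]$ and the upper bound $z'\leq b'$ when it falls to the right, so that the one-sided slack is controlled by $|a-a'|$ or $|b-b'|$ rather than by the irrelevant opposite endpoint gap. Alternatively, one could deduce the statement in one line from the fact that $z\mapsto\text{dist}(z,C)$ is $1$-Lipschitz and that the Hausdorff distance between the nonempty intervals $[a,b]$ and $[a',b']$ equals $\max\{|a-a'|,|b-b'|\}\leq\delta$; I prefer the elementary case analysis above since it is self-contained and avoids invoking the Hausdorff-distance machinery.
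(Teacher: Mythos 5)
Your proof is correct, but it is organized differently from the paper's. The paper splits into three cases according to the position of $z$ relative to $[a',b']$: for $z<a'$ it compares against the endpoint $a$, for $z>b'$ against $b$, and for $z\in[a',b']$ it writes $z$ as a convex combination of $a'$ and $b'$ and maps it to the corresponding convex combination of $a$ and $b$ inside $[a,b]$. You instead insert the projection $z'=\Proj_{[a',b']}(z)$, apply the triangle inequality to reduce the lemma to the containment statement that every point of $[a',b']$ lies within $\delta$ of $[a,b]$, and then verify that statement by a case split on the position of $z'$ relative to $[a,b]$. Both arguments are elementary and of comparable length, but your factorization isolates a reusable fact (the one-sided Hausdorff bound between the two intervals) and avoids the convex-combination construction entirely; as a minor bonus, it sidesteps the degenerate division by $b'-a'$ that appears in the paper's interior case when $a'=b'$. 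Your closing remark that the lemma also follows from the $1$-Lipschitz continuity of $z\mapsto\text{dist}(z,C)$ together with the Hausdorff-distance bound $\max\{|a-a'|,|b-b'|\}\leq\delta$ is accurate and is essentially the abstract form of your argument.
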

\begin{proof}
We will prove the result in three cases, $z<a'$, $z>b'$ and $a'\leq z\leq b'$. Suppose $z<a'$ so that $\text{dist}(z,[a',b'])=|z-a'|$. We have $\text{dist}(z,[a,b])\leq |z-a|\leq |z-a'|+|a-a'|\leq \text{dist}(z,[a',b'])+\delta$. The result when $z>b'$ can be proved similarly. Suppose $a'\leq z\leq b'$ so that $\text{dist}(z,[a',b'])=0$. Note that $z=\frac{z-a'}{b'-a'}\cdot b'+\frac{b'-z}{b'-a'}\cdot a'$. We define $z'=\frac{z-a'}{b'-a'}\cdot b+\frac{b'-z}{b'-a'} \cdot a\in [a,b]$. Then $\text{dist}(z,[a,b])\leq |z-z'|\leq \frac{z-a'}{b'-a'}\cdot |b-b'|+\frac{b'-z}{b'-a'} \cdot |a-a'|\leq \text{dist}(z,[a',b'])+\delta$.
\end{proof}

Under Assumption~\ref{assumptions2}, we have $\|\nabla s_{ij}(\vv)\|\leq B$ for any $\vv$ so that $\|\vxi\|\leq lB$ for any $\vxi\in\partial f^l(\vv)$ and any $\vv$. By the definition of $\vv_{\mu f^l}(\vw)$, we have $\mu^{-1}(\vw-\vv_{\mu f^l}(\vw))\in \partial f^l(\vv_{\mu f^l}(\vw))$, which implies
\begin{equation}
\label{eq:boundvw}
\|\vw-\vv_{\mu f^l}(\vw)\|\leq \mu l B \text{ for any }\vw.
\end{equation}

We then provide the following proposition as an additional conclusion from the proof of Proposition~\ref{thm:sbmd}.
\begin{proposition}
\label{thm:bound_initial_lambda}
Suppose Assumption~\ref{assumptions2} holds. Algorithm~\ref{alg:blockmirror} guarantees that
\begin{eqnarray}
\nonumber
&& \mathbb{E}\text{dist}(\bar\vlam,\Lambda^*)\leq \sum_{i=1}^{N_+}\mathbb{E}\text{dist}(\bar\lambda_i,\Lambda_i^*)\\\nonumber
&\leq & \frac{N_+N_-}{\sqrt{IT} }\text{dist}(\vlam^{(0)},\Lambda^*)+\frac{N_+N_-B}{2\sqrt{ T}}\|\vv^*-\vw\| +\frac{1}{2\mu T}\|\vv^*-\vw\|^2 +\frac{1}{2\mu}\|\vv^*-\vw\|^2\\\nonumber
&&+N_+B\mathbb{E}\|\bar\vv-\vv^*\|.
\end{eqnarray}
\end{proposition}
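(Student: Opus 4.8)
The plan is to read off this proposition directly from the error-bound lemma (Lemma~\ref{lem:errorboundlambda}) together with the estimates already assembled during the proof of Proposition~\ref{thm:sbmd}, so that essentially no new analysis is required.

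First I would dispose of the leftmost inequality, which is purely elementary. By \eqref{eqn:disttoLambda} we have $\text{dist}(\bar\vlam,\Lambda^*)=\sqrt{\sum_{i=1}^{N_+}\text{dist}^2(\bar\lambda_i,\Lambda_i^*)}\leq\sum_{i=1}^{N_+}\text{dist}(\bar\lambda_i,\Lambda_i^*)$, using the bound $\|x\|_2\leq\|x\|_1$ on nonnegative entries; taking expectations yields $\mathbb{E}\,\text{dist}(\bar\vlam,\Lambda^*)\leq\sum_i\mathbb{E}\,\text{dist}(\bar\lambda_i,\Lambda_i^*)$.

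Next I would apply Lemma~\ref{lem:errorboundlambda} at the time-averaged iterate, namely with $\vv=\bar\vv$ and $\vlam=\bar\vlam$, to get
$$\sum_{i=1}^{N_+}\text{dist}(\bar\lambda_i,\Lambda_i^*)\leq N_+B\|\bar\vv-\vv^*\|+g^l(\bar\vv,\bar\vlam)-g^l(\vv^*,\Proj_{\Lambda^*}(\bar\vlam)).$$
After taking expectations, the first term produces the trailing $N_+B\,\mathbb{E}\|\bar\vv-\vv^*\|$ in the claim, so the work reduces to bounding $\mathbb{E}\big[g^l(\bar\vv,\bar\vlam)-g^l(\vv^*,\Proj_{\Lambda^*}(\bar\vlam))\big]$. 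The key observation is that $\Proj_{\Lambda^*}(\bar\vlam)\in\Lambda^*$, hence $g^l(\vv^*,\Proj_{\Lambda^*}(\bar\vlam))=\min_\vlam g^l(\vv^*,\vlam)=f^l(\vv^*)=g^l(\vv^*,\vlam^*)$ for any optimal $\vlam^*$. Thus this gap is exactly the functional-value gap already controlled inside the chain of inequalities \eqref{eq:ineq1}.

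The one point requiring care is the bookkeeping of the Moreau regularizers. The chain \eqref{eq:ineq1} combined with \eqref{eq:threetermsummup} shows
$$\mathbb{E}\left(g^l(\bar\vv,\bar\vlam)+\tfrac{1}{2\mu}\|\bar\vv-\vw\|^2-g^l(\vv^*,\vlam^*)-\tfrac{1}{2\mu}\|\vv^*-\vw\|^2\right)\leq \frac{N_+N_-}{\sqrt{IT}}\text{dist}(\vlam^{(0)},\Lambda^*)+\frac{N_+N_-B}{2\sqrt{T}}\|\vv^*-\vw\|+\frac{1}{2\mu T}\|\vv^*-\vw\|^2.$$
I would then move the two quadratic penalties to the right and discard the nonpositive term $-\tfrac{1}{2\mu}\mathbb{E}\|\bar\vv-\vw\|^2$; this surfaces precisely the extra summand $\tfrac{1}{2\mu}\|\vv^*-\vw\|^2$ appearing in the statement. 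Adding back $N_+B\,\mathbb{E}\|\bar\vv-\vv^*\|$ then reproduces the asserted bound term for term. The hard part here is conceptual rather than computational: one must recognize that the left-hand side of the error bound can be converted into a functional-value gap at the averaged iterates, and that this gap was already estimated inside Proposition~\ref{thm:sbmd}, so that the proof is really a re-reading of that argument with the regularizers tracked carefully enough that the nonnegative penalty on $\bar\vv$ can be dropped cleanly.
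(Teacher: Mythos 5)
Your proposal is correct and follows essentially the same route as the paper: both apply Lemma~\ref{lem:errorboundlambda} at the averaged iterate $(\bar\vv,\bar\vlam)$, identify $g^l(\vv^*,\Proj_{\Lambda^*}(\bar\vlam))$ with $g^l(\vv^*,\vlam^*)$, and then invoke the regularized-gap bound from \eqref{eq:ineq1} and \eqref{eq:threetermsummup}, discarding the nonnegative penalty $\frac{1}{2\mu}\|\bar\vv-\vw\|^2$ so that the extra $\frac{1}{2\mu}\|\vv^*-\vw\|^2$ term surfaces. The only difference is cosmetic: the paper keeps $\frac{1}{2\mu}\|\vv^*-\vw\|^2$ on the left-hand side before reorganizing, whereas you move it to the right immediately.
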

\begin{proof}
By \eqref{eq:threetermsummup}, the last inequality in \eqref{eq:ineq1}, and Lemma~\ref{lem:errorboundlambda}, we have
\begin{eqnarray}
\nonumber
&& \mathbb{E}\left(\sum_{i=1}^{N_+}\text{dist}(\bar\lambda_i,\Lambda_i^*)-N_+B\|\bar\vv-\vv^*\|-\frac{1}{2\mu}\|\vv^*-\vw\|^2\right)\\\nonumber
&\leq&\mathbb{E}\left(g^l(\bar\vv,\bar\vlam)-g^l(\vv^*,\vlam^*)+\frac{1}{2\mu}\|\bar\vv-\vw\|^2-\frac{1}{2\mu}\|\vv^*-\vw\|^2\right)\\\nonumber
&\leq & \frac{N_+N_-}{\sqrt{IT} }\text{dist}(\vlam^{(0)},\Lambda^*)+\frac{N_+N_-B}{2\sqrt{ T}}\|\vv^*-\vw\| +\frac{1}{2\mu T}\|\vv^*-\vw\|^2,
\end{eqnarray}
which implies the conclusion by reorganizing terms.
\end{proof}

Next we are ready to present the proof of of Theorem~\ref{thm:sbmd}.
\begin{proof}[\textbf{of Theorem~\ref{thm:sbmd}}]
Let $\epsilon_k=\frac{1}{\sqrt{k+1}}$ for $k=1,\dots$. In the $k$th iteration of Algorithm~\ref{alg:AGD},  Algorithm~\ref{alg:blockmirror} is applied to the subproblem
\begin{eqnarray}\label{eqn:subproblemk}
    \min_{\vv,\vlam} \left\{g^l(\vv,\vlam)+\frac{1}{2\mu}\|\vv-\vw^{(k)}\|^2\right\}.
\end{eqnarray}
with initial solution $(\vw^{(k)},\bar\vlam_l^{(k)})$ and runs for $T_k$ iterations. Let $\Lambda_k^*$ be the set of optimal $\vlam$ for \eqref{eqn:subproblemk}. We will prove by induction the following two inequalities for $k=0,1,\dots$.
\begin{eqnarray}
\label{eqn:boundvopt}
\mathbb{E}\|\bar\vv_l^{(k)}-\vv_{\mu f^l}(\vw^{(k)})\|^2&\leq& \epsilon_k^2/4\\\label{eqn:boundlaminit}
\mathbb{E}\text{dist}(\bar\vlam_l^{(k)},\Lambda_{k}^*)&\leq&D_l,
\end{eqnarray}
where $D_l$ is defined in \eqref{eq:D} for $l=m$ and $n$.

Applying Proposition~\ref{thm:sbmd} to \eqref{eqn:subproblemk} and using \eqref{eq:boundvw}, we have, for $k=0,1,\dots$, 
\small
\begin{eqnarray}
\nonumber
& & \frac{1}{2}\left(\frac{1}{\mu}-\rho\right)\mathbb{E}\|\bar\vv_l^{(k)}-\vv_{\mu f^l}(\vw^{(k)})\|^2\\\nonumber
&\leq &\frac{N_+N_-}{\sqrt{IT_k} }\mathbb{E}\text{dist}(\bar\vlam_l^{(k)},\Lambda_k^*)+\frac{N_+N_-B}{2\sqrt{ T_k}}\mathbb{E}\|\vv_{\mu f^l}(\vw^{(k)})-\vw^{(k)}\| +\frac{1}{2\mu T_k}\mathbb{E}\|\vv_{\mu f^l}(\vw^{(k)})-\vw^{(k)}\|^2\\\label{eq:vboundkth}
&\leq &\frac{N_+N_-}{\sqrt{IT_k} }\mathbb{E}\text{dist}(\bar\vlam_l^{(k)},\Lambda_k^*)+\frac{N_+N_-\mu lB^2}{2\sqrt{ T_k}} +\frac{\mu^2l^2B^2}{2\mu T_k}.
\end{eqnarray}
\normalsize
Moreover, applying Proposition~\ref{thm:bound_initial_lambda} to \eqref{eqn:subproblemk} and using \eqref{eq:boundvw}, we have
\begin{eqnarray}
\nonumber
\sum_{i=1}^{N_+}\mathbb{E}\text{dist}(\bar\lambda_{l,i}^{(k+1)},\Lambda_i^*)
&\leq&\frac{N_+N_-}{\sqrt{IT_k} }\mathbb{E}\text{dist}(\bar\vlam_l^{(k)},\Lambda_{k,i}^*)+\frac{N_+N_-\mu lB^2}{2\sqrt{ T_k}} +\frac{\mu^2l^2B^2}{2\mu T_k}
+\frac{\mu l^2 B^2}{2}\\\label{eq:lamboundkth}
&&+N_+B\mathbb{E}\|\bar\vv_l^{(k)}-\vv_{\mu f^l}(\vw^{(k)})\|.
\end{eqnarray}

Suppose $k=0$. By \eqref{eq:vboundkth} and the choice of $T_0$, we have $\mathbb{E}\|\bar\vv_l^{(0)}-\vv_{\mu f^l}(\vw^{(0)})\|^2\leq \epsilon_0^2/4$, so \eqref{eqn:boundvopt} holds for $k=0$. In addition, \eqref{eqn:boundlaminit} holds trivially for $k=0$. Next, we assume \eqref{eqn:boundvopt} and \eqref{eqn:boundlaminit} both hold for $k$ and prove they also hold for $k+1$. 

Since \eqref{eqn:boundvopt} and \eqref{eqn:boundlaminit} hold for $k$, by \eqref{eq:lamboundkth} and the choice of $T_k$, we have 
\begin{eqnarray}
\nonumber
\sum_{i=1}^{N_+}\mathbb{E}\text{dist}(\bar\lambda_{l,i}^{(k+1)},\Lambda_{k,i}^*)
&\leq& \frac{1}{2}\left(\frac{1}{\mu}-\rho\right)\epsilon_k^2/4
+\frac{\mu l^2 B^2}{2}+N_+B \mathbb{E}\|\bar\vv_l^{(k)}-\vv_{\mu f^l}(\vw^{(k)})\|\\\nonumber
&\leq& \frac{1}{2}\left(\frac{1}{\mu}-\rho\right)\epsilon_k^2/4
+\frac{\mu l^2 B^2}{2}+N_+B \sqrt{\mathbb{E}\|\bar\vv_l^{(k)}-\vv_{\mu f^l}(\vw^{(k)})\|^2}\\\label{eq:lambda_to_optinterval}
&\leq& \frac{1}{2}\left(\frac{1}{\mu}-\rho\right) 
+\frac{\mu l^2 B^2}{2}+N_+B.  
\end{eqnarray}

From the updating equation $\vw^{(k+1)}=\vw^{(k)}-\gamma\mu^{-1}(\bar\vv_m^{(k)}-\bar\vv_n^{(k)})$ in Algorithm~\ref{alg:AGD}, we know that 
\begin{eqnarray}
\nonumber
\mathbb{E}\|\vw^{(k+1)}-\vw^{(k)}\|
&\leq& \gamma\mu^{-1}\mathbb{E}\|\bar\vv_m^{(k)}-\bar\vv_n^{(k)}\|\\\nonumber
&\leq& \gamma\mu^{-1}\bigg(\mathbb{E}\|\bar\vv_m^{(k)}-\vv_{\mu f^m}(\vw^{(k)})\|+\mathbb{E}\|\vv_{\mu f^m}(\vw^{(k)})-\vw^{(k)}\|\\\nonumber
&&\quad\quad+\mathbb{E}\|\bar\vv_n^{(k)}-\vv_{\mu f^n}(\vw^{(k)})\|+\mathbb{E}\|\vv_{\mu f^n}(\vw^{(k)})-\vw^{(k)}\|\bigg)\\\label{eq:changew}
&\leq& \frac{2\gamma\epsilon_k}{\mu}+\gamma n B + \gamma m B,
\end{eqnarray}
where the second inequality is by triangle inequality and the last inequality is by \eqref{eq:boundvw} and the fact that \eqref{eqn:boundvopt} holds for $k$.

Let $\bar\lambda_{l,i}^{(k+1)}$ denote the $i$th coordinate of $\bar\vlam_l^{(k+1)}$ for $i=1,\dots,N_+$ and $l=m$ and $n$. Recall that $s_{i[j]}(\vv)$ be the $j$th largest coordinate of $S_i(\vv)$. By Assumption~\ref{assumptions2} and elementary argument, we can  show that  $s_{i[j]}(\vv)$ is $B$-Lipschitz continuous for any $i$ and $j$. Since $\vv_{\mu f^l}(\vw)$ is  $\frac{1}{1-\mu\rho}$-Lipschitz continuous, we have 
\begin{eqnarray}
\nonumber
\mathbb{E}\left|s_{i[j]}(\vv_{\mu f^l}(\vw^{(k+1)}))-s_{i[j]}(\vv_{\mu f^l}(\vw^{(k)}))\right|
&\leq& B\mathbb{E}\|\vv_{\mu f^l}(\vw^{(k+1)})-\vv_{\mu f^l}(\vw^{(k)})\|\\\nonumber
&\leq& \frac{B}{1-\mu\rho}\mathbb{E}\|\vw^{(k+1)}-\vw^{(k)}\|\\\label{eq:optintervalchange}
&\leq&  \frac{B}{1-\mu\rho}\left(\frac{2\gamma\epsilon_k}{\mu}+\gamma n B + \gamma m B\right)
\end{eqnarray}
for $j=l$ and $j=l+1$, where the last inequality is by \eqref{eq:changew}. According to \eqref{eqn:Lambda}, \eqref{eqn:disttoLambda}, \eqref{eq:optintervalchange}, and Lemma~\ref{lem:intervaldist}, we can prove that, 
for $i=1,\dots,N_+$, 
$$
\mathbb{E}\text{dist}\left(\bar\lambda_{l,i}^{(k+1)}, \Lambda_{k+1,i}^*\right)
\leq \mathbb{E}\text{dist}\left(\bar\lambda_{l,i}^{(k+1)}, \Lambda_{k,i}^*\right)
+\frac{B}{1-\mu\rho}\left(\frac{2\gamma }{\mu}+\gamma n B + \gamma m B\right).
$$
Combining this inequality with \eqref{eq:lambda_to_optinterval} yields \eqref{eqn:boundlaminit} for case $k+1$.

Stating \eqref{eq:vboundkth} for case $k+1$ gives
\small
\begin{eqnarray}
\label{eq:vboundk+1th}
\frac{1}{2}\left(\frac{1}{\mu}-\rho\right)\mathbb{E}\|\bar\vv_l^{(k+1)}-\vv_{\mu f^l}(\vw^{(k+1)})\|^2
\leq\frac{N_+N_-}{\sqrt{IT_{k+1}} }\mathbb{E}\text{dist}(\bar\vlam_l^{(k+1)},\Lambda_{k+1}^*)+\frac{N_+N_-\mu lB^2}{2\sqrt{ T_{k+1}}} +\frac{\mu^2l^2B^2}{2\mu T_{k+1}}.
\end{eqnarray}
\normalsize

By \eqref{eq:vboundk+1th}, \eqref{eqn:boundlaminit} for case $k+1$, and the choice of $T_{k+1}$, we have $\mathbb{E}\|\bar\vv_l^{(k+1)}-\vv_{\mu f^l}(\vw^{(k+1)})\|^2\leq \epsilon_{k+1}^2/4$, which means \eqref{eqn:boundvopt} holds for case $k+1$. By induction, we have proved that  \eqref{eqn:boundvopt} and \eqref{eqn:boundlaminit} holds for $k=0,1,\dots$.

Because $\nabla  F^{\mu}(\vw)=\mu^{-1}(\vv_{\mu f^m}(\vw)-\vv_{\mu f^n}(\vw))$ is $L_{\mu}$-Lipschitz continuous,  we have
\small
\begin{equation*}
\begin{aligned}
    &F^{\mu}(\vw^{(k)})-F^{\mu}(\vw^{(k+1)})\\
    \geq &\langle-\nabla F^{\mu}(\vw^{(k)}),\vw^{(k+1)}-\vw^{(k)} \rangle-\frac{L_\mu}{2}\|\vw^{(k+1)}-\vw^{(k)} \|^2\\
     = & \left\langle\mu^{-1}(\vv_{\mu f^m}(\vw^{(k)})-\vv_{\mu f^n}(\vw^{(k)})),\gamma\mu^{-1}(\bar\vv_m^{(k)}-\bar\vv_n^{(k)}) \right\rangle-\frac{\gamma^2L_\mu}{2\mu^2}\left\Vert\bar\vv_m^{(k)}-\bar\vv_n^{(k)} \right\Vert^2\\
    = & \frac{\gamma}{\mu^2}\left\langle  \vv_{\mu f^m}(\vw^{(k)})-\vv_{\mu f^n}(\vw^{(k)}),  \bar\vv_m^{(k)}-\bar\vv_n^{(k)}-\vv_{\mu f^m}(\vw^{(k)})+\vv_{\mu f^n}(\vw^{(k)})+\vv_{\mu f^m}(\vw^{(k)})-\vv_{\mu f^n}(\vw^{(k)})\right\rangle\\
    & -\frac{\gamma^2L_\mu}{2\mu^2}\left\|\bar\vv_m^{(k)}-\bar\vv_n^{(k)}-\vv_{\mu f^m}(\vw^{(k)})+\vv_{\mu f^n}(\vw^{(k)})+\vv_{\mu f^m}(\vw^{(k)})-\vv_{\mu f^n}(\vw^{(k)})\right\|^2\\
    \geq &\frac{\gamma}{\mu^2} \left\langle \vv_{\mu f^m}(\vw^{(k)})-\vv_{\mu f^n}(\vw^{(k)}), \bar\vv_m^{(k)}-\bar\vv_n^{(k)}-\vv_{\mu f^m}(\vw^{(k)})+\vv_{\mu f^n}(\vw^{(k)})\right\rangle\\
    & + \left(\frac{\gamma}{\mu^2}-\frac{\gamma^2L_\mu}{\mu^2}\right)\|\vv_{\mu f^m}(\vw^{(k)})-\vv_{\mu f^n}(\vw^{(k)})\|^2-\frac{\gamma^2L_\mu}{\mu^2}\left\|\bar\vv_m^{(k)}-\bar\vv_n^{(k)}-\vv_{\mu f^m}(\vw^{(k)})+\vv_{\mu f^n}(\vw^{(k)})\right\|^2.
\end{aligned}
\end{equation*}  
\normalsize
Applying Young's inequality to the first term on the right-hand side of the last inequality above gives
\begin{equation*}
\begin{aligned}  
    &\mathbb{E}\left[F^{\mu}(\vw^{(k)})-F^{\mu}(\vw^{(k+1)})\right]\\
    \geq & -\frac{\gamma}{\mu^2}\left(\frac{1}{2}\mathbb{E}\|\bar\vv_m^{(k)}-\bar\vv_n^{(k)}-\vv_{\mu f^m}(\vw^{(k)})+\vv_{\mu f^n}(\vw^{(k)})\|^2+\frac{1}{2}\mathbb{E}\|\vv_{\mu f^m}(\vw^{(k)})-\vv_{\mu f^n}(\vw^{(k)})\|^2 \right)\\
    & + \left(\frac{\gamma}{\mu^2}-\frac{\gamma^2L_\mu}{\mu^2}\right)\mathbb{E}\|\vv_{\mu f^m}(\vw^{(k)})-\vv_{\mu f^n}(\vw^{(k)})\|^2\\
    &-\frac{\gamma^2L_\mu}{\mu^2}\mathbb{E}\left\|\bar\vv_m^{(k)}-\bar\vv_n^{(k)}-\vv_{\mu f^m}(\vw^{(k)})+\vv_{\mu f^n}(\vw^{(k)})\right\|^2 \\
    \geq & \frac{\gamma-2\gamma^2L_\mu}{2\mu^2}\mathbb{E}\|\vv_{\mu f^m}(\vw^{(k)})-\vv_{\mu f^n}(\vw^{(k)}) \|^2-\left(\frac{\gamma}{\mu^2}+\frac{2\gamma^2L_\mu}{\mu^2}\right)\epsilon_k^2\\
    \geq & \frac{\gamma }{4\mu^2}\mathbb{E}\|\vv_{\mu f^m}(\vw^{(k)})-\vv_{\mu f^n}(\vw^{(k)}) \|^2-\frac{3\gamma }{2\mu^2 }\epsilon_k^2,
\end{aligned}
\end{equation*}
where the second inequality is because of \eqref{eqn:boundlaminit} for $l=m$ and $n$ and the last because of $\gamma \leq\frac{1}{L_{\mu}}$.

Summing the above inequality over $k=0,\cdots,K-1$, we have

\begin{equation*}\label{eqn:boundx}
    \begin{aligned}
        \frac{1}{K}\sum^{K-1}_{k=0}\mathbb{E}\|\vv_{\mu f^m}(\vw^{(k)})-\vv_{\mu f^n}(\vw^{(k)})\|^2
        &\leq \frac{4\mu^2}{\gamma K}\mathbb{E}(F^{\mu}(\vw^{(0)})-F^{\mu}(\vw^{(K)}))+\frac{6}{K}\sum^{K-1}_{k=0}\epsilon_k^2\\
        &\leq \frac{4\mu^2}{\gamma K}\left(F (\vv_{\mu f^n}(\vw^{(0)}))-F^*\right)+\frac{6\log K}{K},
    \end{aligned}
\end{equation*}
where the second inequality is because $F^*\leq F (\vv_{\mu f^n}(\vw^{(K)}))\leq F^{\mu}(\vw^{(K)})$ and $F^{\mu}(\vw^{(0)}) \leq F (\vv_{\mu f^m}(\vw^{(0)}))$ (see Lemma 1 in \citet{sun2021algorithms}).
Let $\bar{k}$ be randomly sampled from $\{0,\dots,K-1\}$, then it holds
\begin{equation*}
    \mathbb{E}\|\nabla F^{\mu}(\vw^{(\bar k)})\|^2=\mu^{-2}\mathbb{E}\|\vv_{\mu f^m}(\vw^{(\bar k)})-\vv_{\mu f^n}(\vw^{(\bar k)})\|^2\leq \mu^{-2}\min\{1,\mu^2\}\epsilon^2 =\min\{1,\mu^{-2}\}\epsilon^2
\end{equation*}
by the definition  of $K$. On the other hands, by \eqref{eqn:boundvopt}, we have for $l=m$ and $n$ that
$$
\mathbb{E}\|\bar\vv_l^{(\bar k)}-\vv_{\mu f^l}(\vw^{(\bar k)})\|^2\leq\frac{1}{K} \sum_{k=0}^{K-1}\epsilon_k^2\leq \frac{\log K}{K}\leq \frac{\min\{1,\mu^2\}\epsilon^2}{48}\leq \frac{\epsilon^2}{4},
$$
where the last inequality is because of the value of $K$. Hence, by the second claim in Lemma~\ref{lem:Fmuproperty} (with $\vw=\vw^{(\bar k)}$ and $\bar\vv=\bar\vv_n^{(\bar k)}$), $\bar\vv_n^{(\bar k)}$ is an nearly $\epsilon$-critical point of \eqref{eqn:pAUC_raw} with $f^l$ defined in \eqref{eq:flpAUC}. This completes the proof.
\end{proof}

\section{Additional Materials for Numerical Experiments}
\label{sec:detailexp}
In this section, we present some additional details of our numerical experiments in Section~\ref{sec:exp} which we are not able to show due to the limit of space. 
\subsection{Details of SoRR Loss Minimization Experiment}
\label{subsec:data}
For SoRR loss minimization, we compare with the baseline DCA. We focus on learning a linear model $h_\vw(\vx)=\vw^T\vx$ with four benchmark datasets from the UCI~\citep{Dua:2019} data repository preprocessed by Libsvm~\citep{chang2011libsvm}: a9a, w8a, ijcnn1 and covtype. The statistics of these datasets are summarized in Table \ref{table_sumofrank}.
We use logistic loss $l(h_\vw(\vx),y) = -y\log(h_\vw(\vx))-(1-y)\log(1-h_\vw(\vx))$ where label $y\in \{0, 1\}$. Due to the limit space, we present the process of tuning hyperparameters and the convergence curves (Figure \ref{figure_sum_of_rank})
. From the curves, we notice that our algorithm reduce SoRR loss faster than DCAs for all of these four datasets.
In this section, we first present some summary statistics on the datasets. 
\begin{table}
\caption{Statistics of the UCI datasets.}
\label{table_sumofrank}
\centering
\begin{small}
\begin{sc}
\begin{tabular}{lcccr}
\toprule
Datasets & \# Samples & \# Features\\
\midrule
a9a & 32,561 & 123\\
w8a & 49,749 & 300\\
ijcnn1 & 49,990 & 22\\
covtype & 581,012 & 54\\
\bottomrule
\end{tabular}
\end{sc}
\end{small}
\end{table}

For AGD-SBCD, we fix the $J= 100$, $m = 10^3$ and $n=2\times 10^4$. In the spirit of Corollary~\ref{thm:main_SoRR}, within Algorithm~\ref{alg:sgd} called in the $k$th main iteration, $\eta_t$ and $\theta_t$ are both set to $\frac{c}{(k+1)N}$ for any $t$ with $c$ tuned in the range of $\{0.1, 2, 5, 10, 1\}$ and $T_k$ is set to $C(k+1)^2$ with $C$ selected from $\{30, 50, 100\}$. Parameter $\mu$ is chosen from $\{2\times 10^2,10^3\}/N$ and $\gamma$ is tuned from $\{2\times 10^2, 5\times 10^2, 8\times 10^2, 4\times 10^3, 2\times 10^4 \}/N$. 


According to the experiments in~\citet{hu2020learning}, when solving \eqref{eq:DCAsubproblem}, we first use the same step size and the same number of iterations for all $k$'s. We choose the step size from $\{0.01, 0.1, 0.5, 1\}$ and the iteration number from $\{2000, 3000\}$. However, we find that the performance of DCA improves if we let the step size and the number of iterations vary with $k$. Hence, we apply the same process as in AGD-SBCD to select $\theta_t$, $\eta_t$, and $T$ in Algorithm~\ref{alg:sbcd_dca} in the $k$th 
main iteration of DCA. We report the performances of DCA under both settings (named DCA.Constant.lr and DCA.Changing.lr, respectively). The changes of the SoRR loss with the number of epochs are shown in Figure \ref{figure_sum_of_rank}. From the curves, we notice that our algorithm reduce SoRR loss faster than DCAs for all of these four datasets.

\begin{figure}[ht]
\centering
\includegraphics[width=0.35\linewidth]{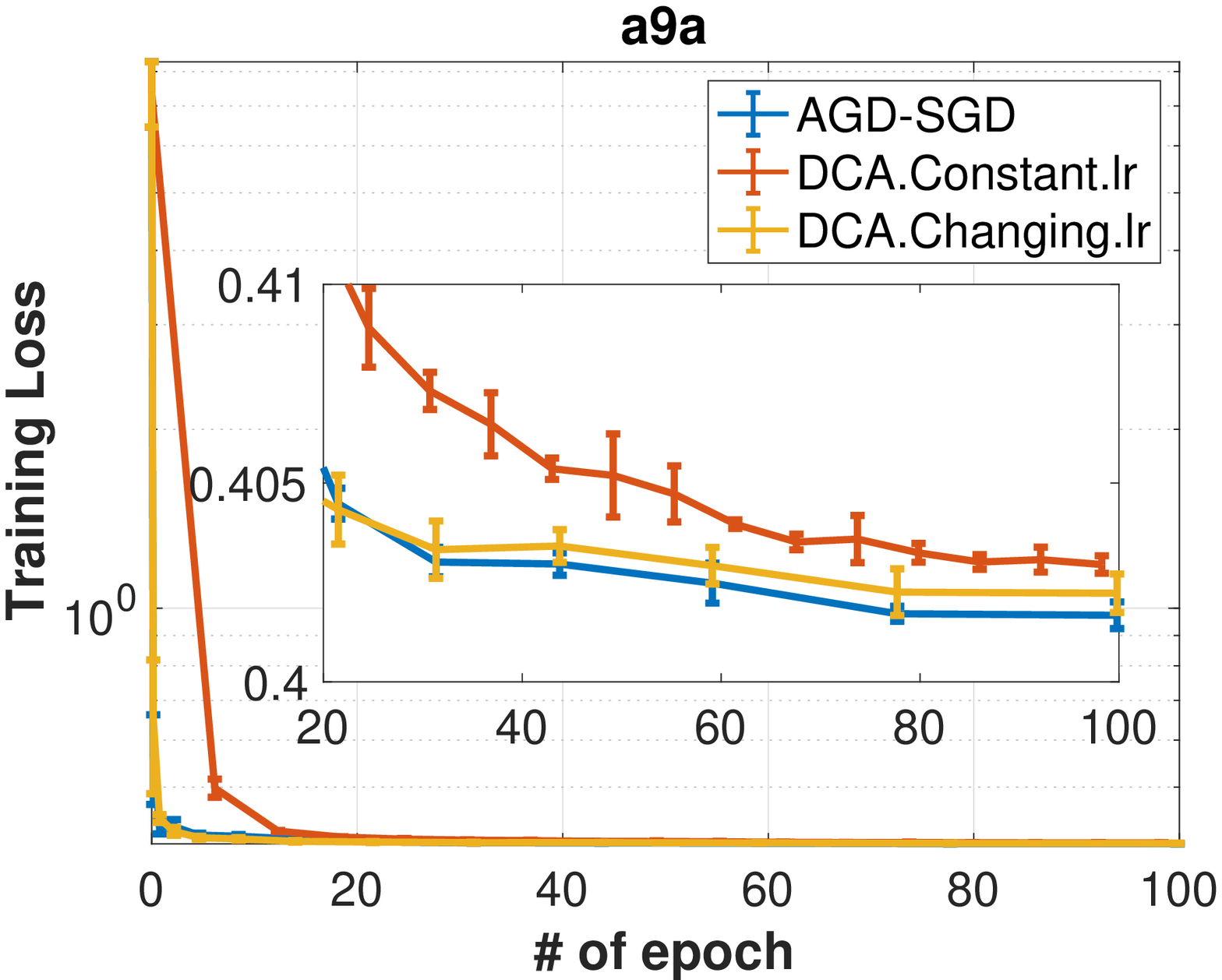}
\includegraphics[width=0.35\linewidth]{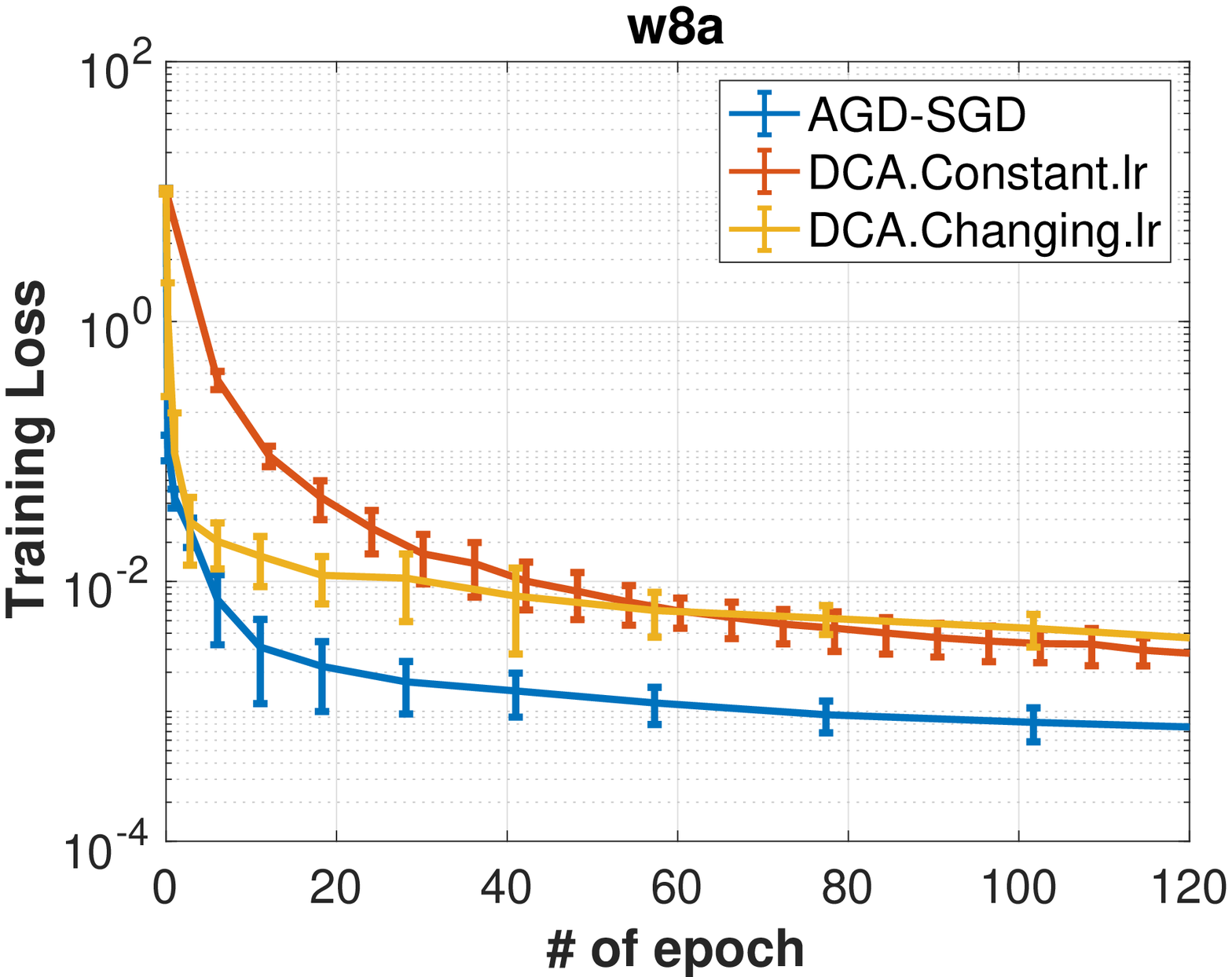}
\includegraphics[width=0.35\linewidth]{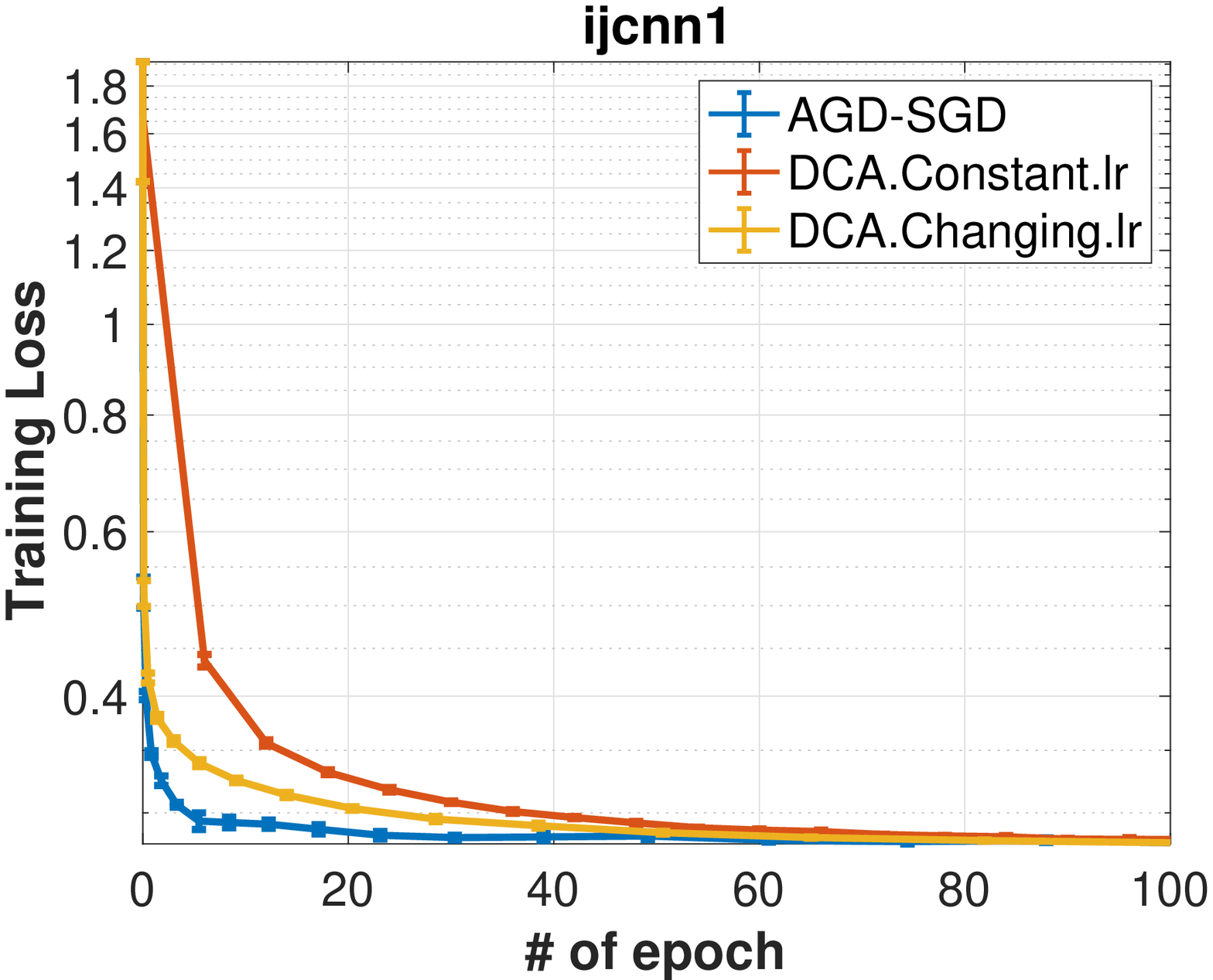}
\includegraphics[width=0.35\linewidth]{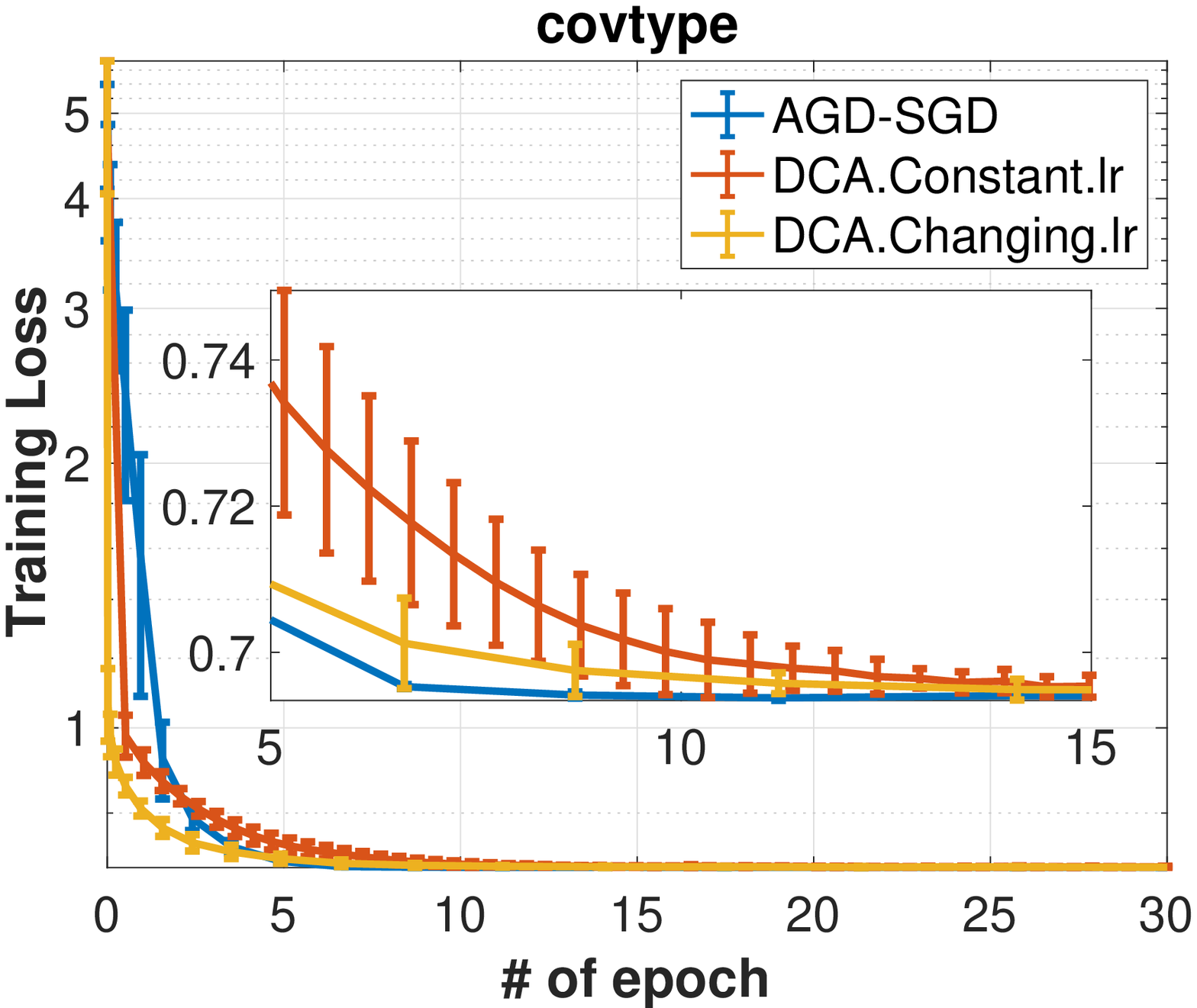}
\caption{Results for SoRR Loss Minimization}
\label{figure_sum_of_rank}
\end{figure}

\subsection{Process of Tuning Hyperparameters}\label{sub:tune}

For AGD-SBCD, we fix the $I=J= 100$, $m = 10^4$ and $n=10^5$. In the spirit of Theorem~\ref{thm:main}, within Algorithm~\ref{alg:sgd} called in the $k$th main iteration, $\eta_t$ is set to $\frac{c}{(k+1)N^+N^-}$, $\theta_t$ is set to $\frac{c}{(k+1)N^-}$ for any $t$ with $c$ tuned in $\{0.1, 1, 5, 10, 20, 30\}$ and $T_k$ is set to $50(k+1)^2$.
Parameters $\mu$ is chosen from $\{10^2,10^3\}/(N_+N_-)$ and $\gamma$ is tuned from $\{2\times 10^3, 5\times 10^3\}/(N_+N_-)$. For DCA, we only implement the setting of DCA.Changing.lr (see Appendix~\ref{subsec:data} for descriptions). In Algorithm~\ref{alg:sbcd_dca} called in the $k$th main iteration, $\eta_t$ and $\theta_t$ are selected following the same process as in AGD-SBCD and $T$ is set to $C(k+1)^2$ with $C$ chosen from $\{100, 200, 500\}$. For the SVM$_{pAUC}$-tight method, we use their MATLAB code in~\citet{narasimhan2013svmpauctight} and tune hyper-parameter $C$ from $\{10^{-3}, 10^{-2}, 10^{-1}, 10^0\}$.

\subsection{Additional Plots of Partial AUC Maximization Experiment}
\label{subsec:auc}
The results for partial AUC maximization of diseases D3$\sim$D5 are shown in Figure \ref{figure_d3-d5}.

\begin{figure}
\centering
\includegraphics[width=0.32\linewidth]{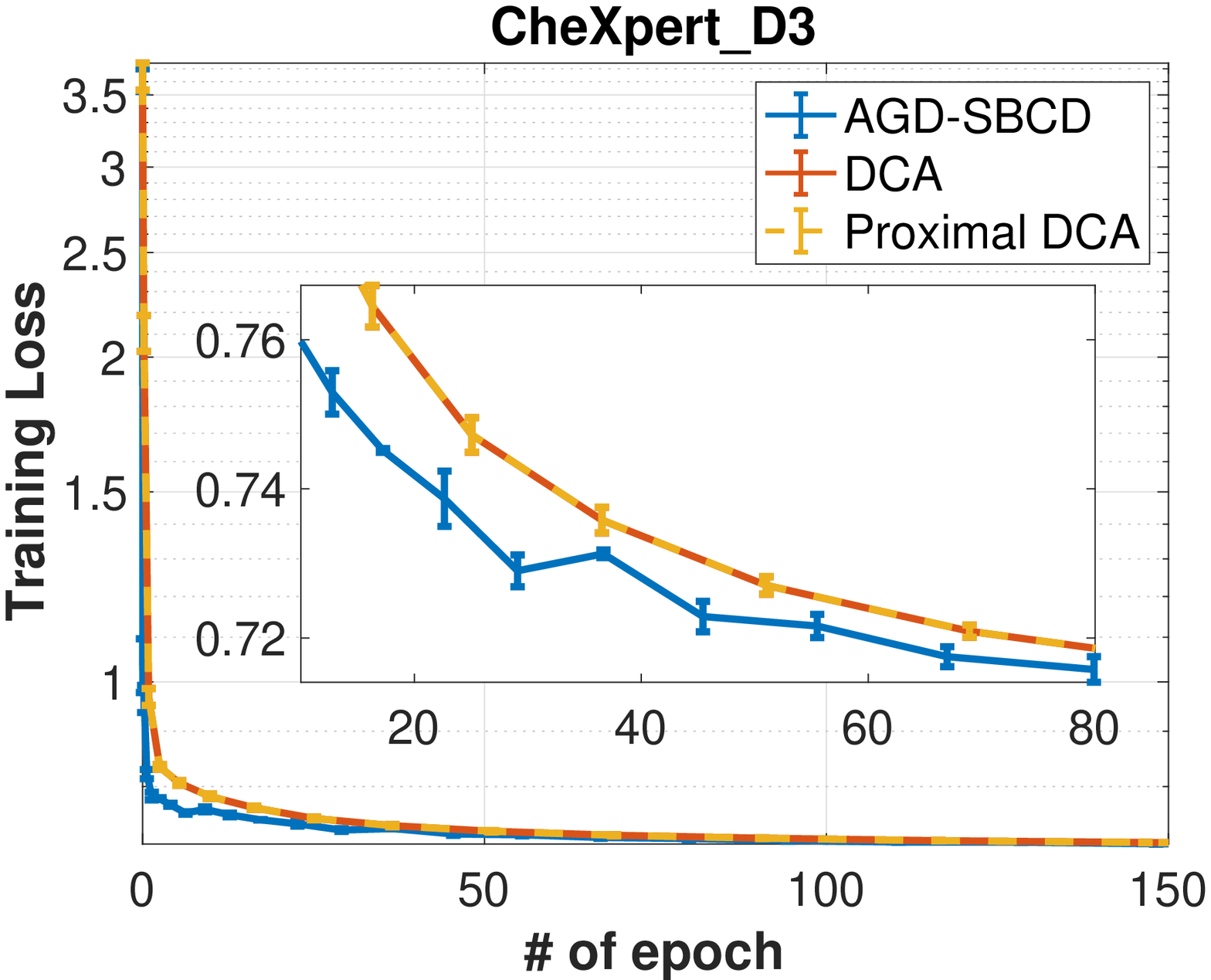}
\includegraphics[width=0.32\linewidth]{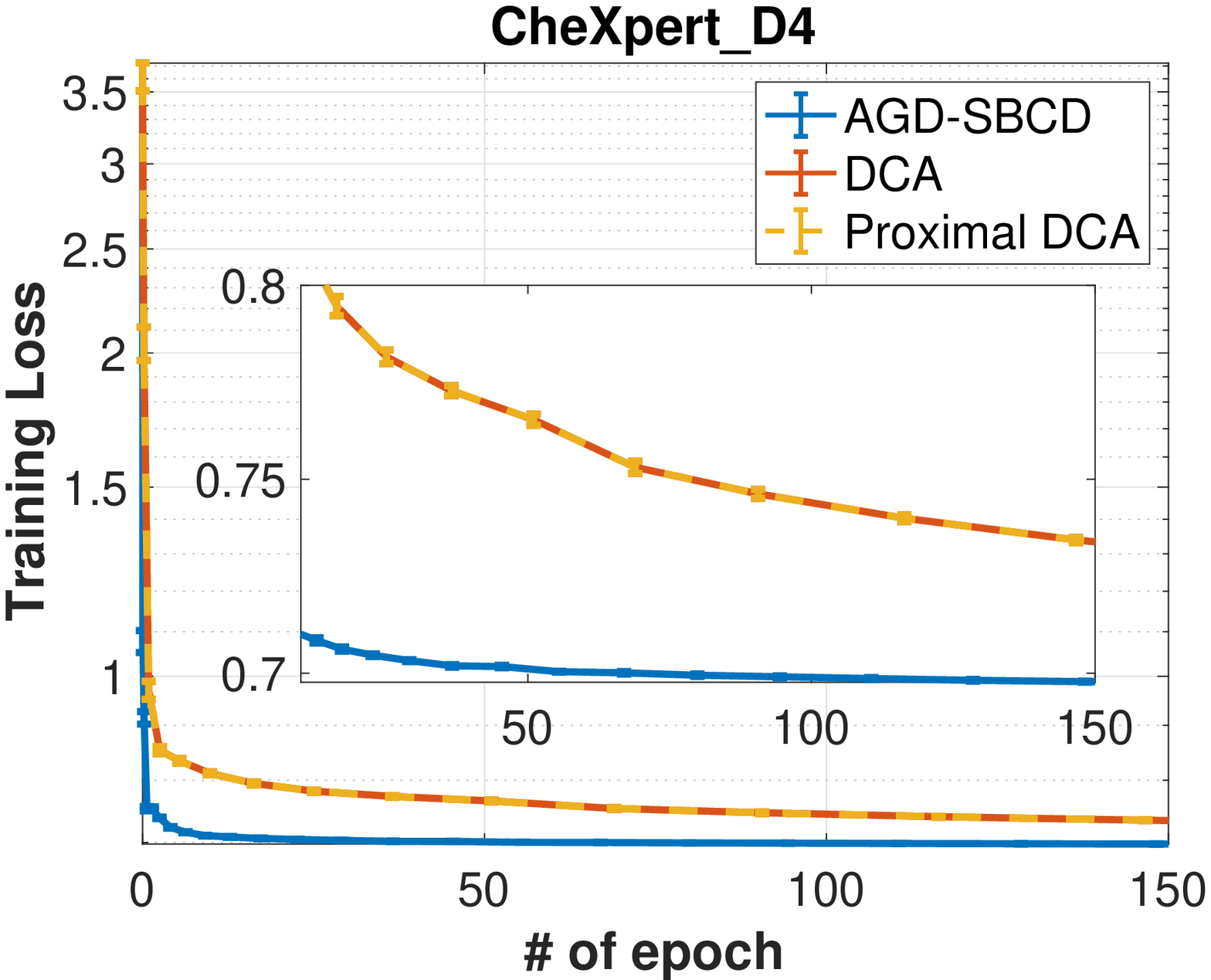}
\includegraphics[width=0.32\linewidth]{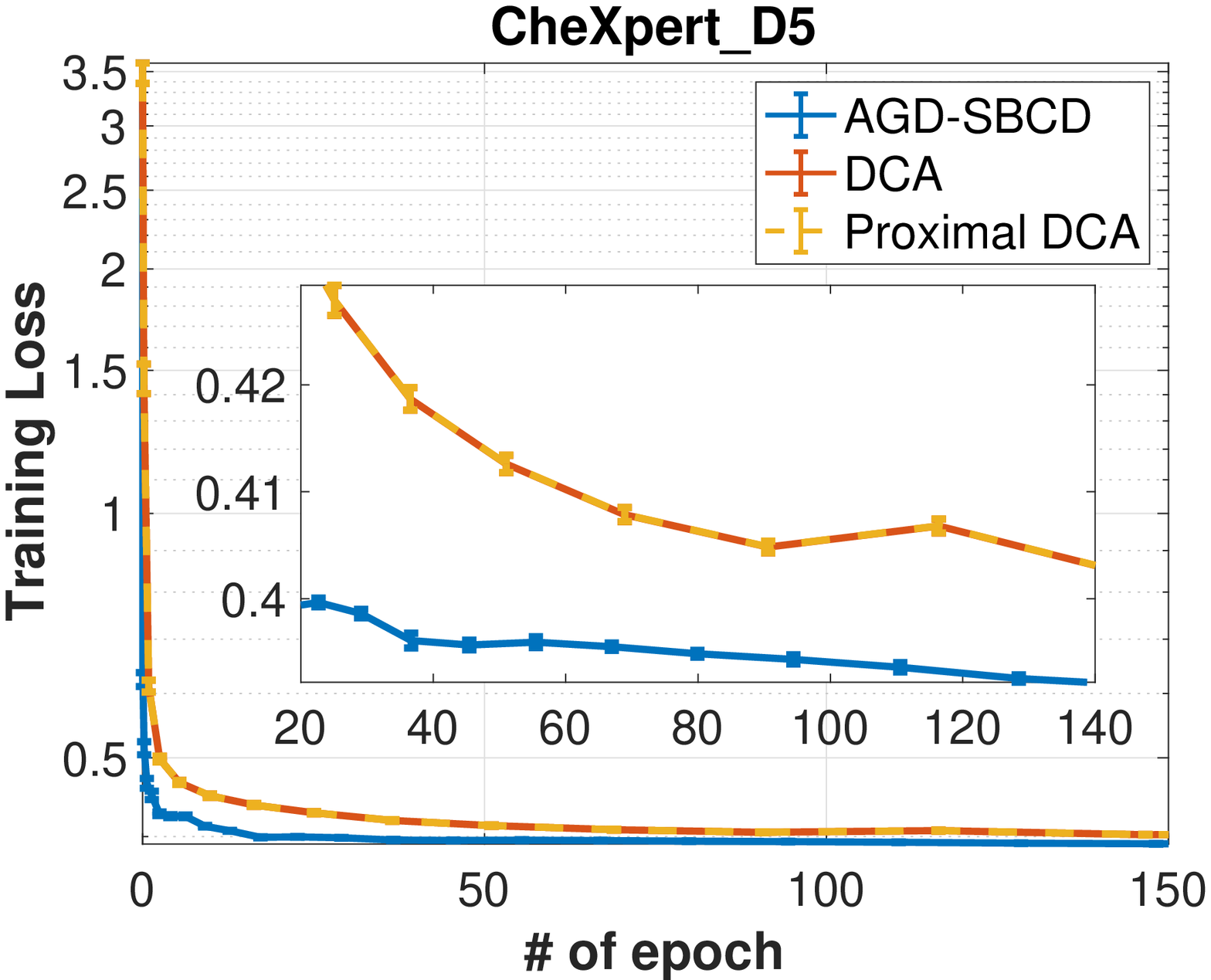}
\includegraphics[width=0.32\linewidth]{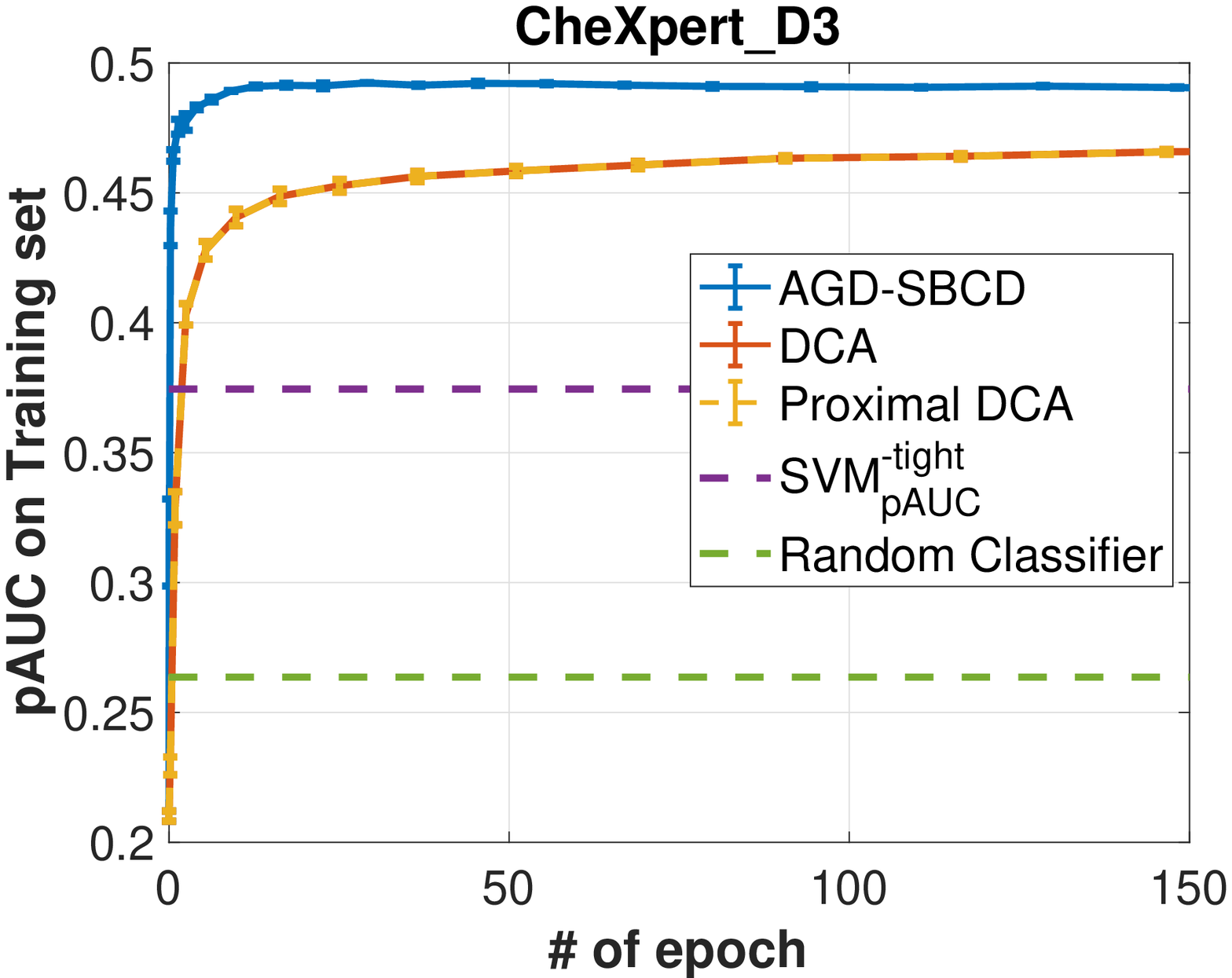}
\includegraphics[width=0.32\linewidth]{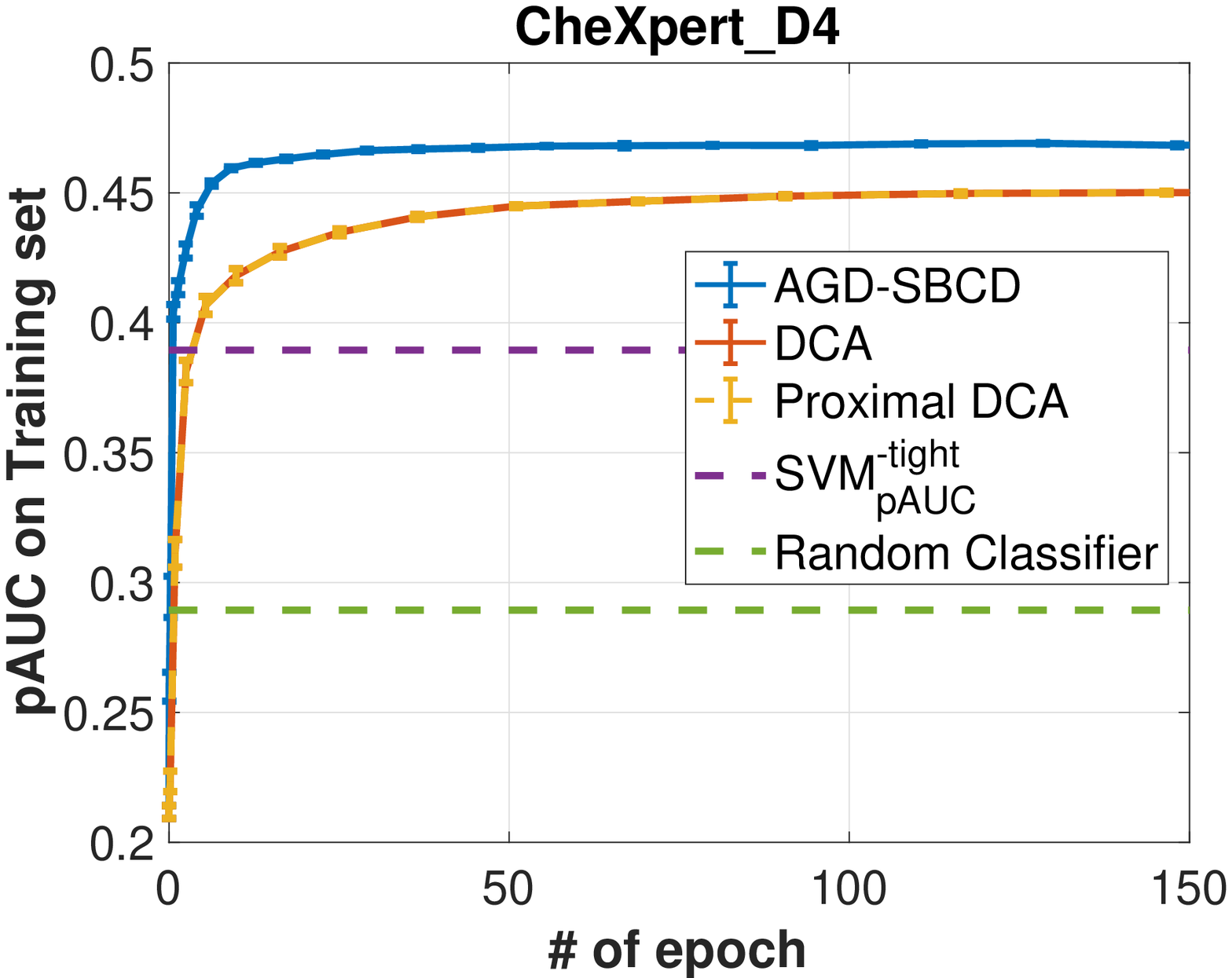}
\includegraphics[width=0.32\linewidth]{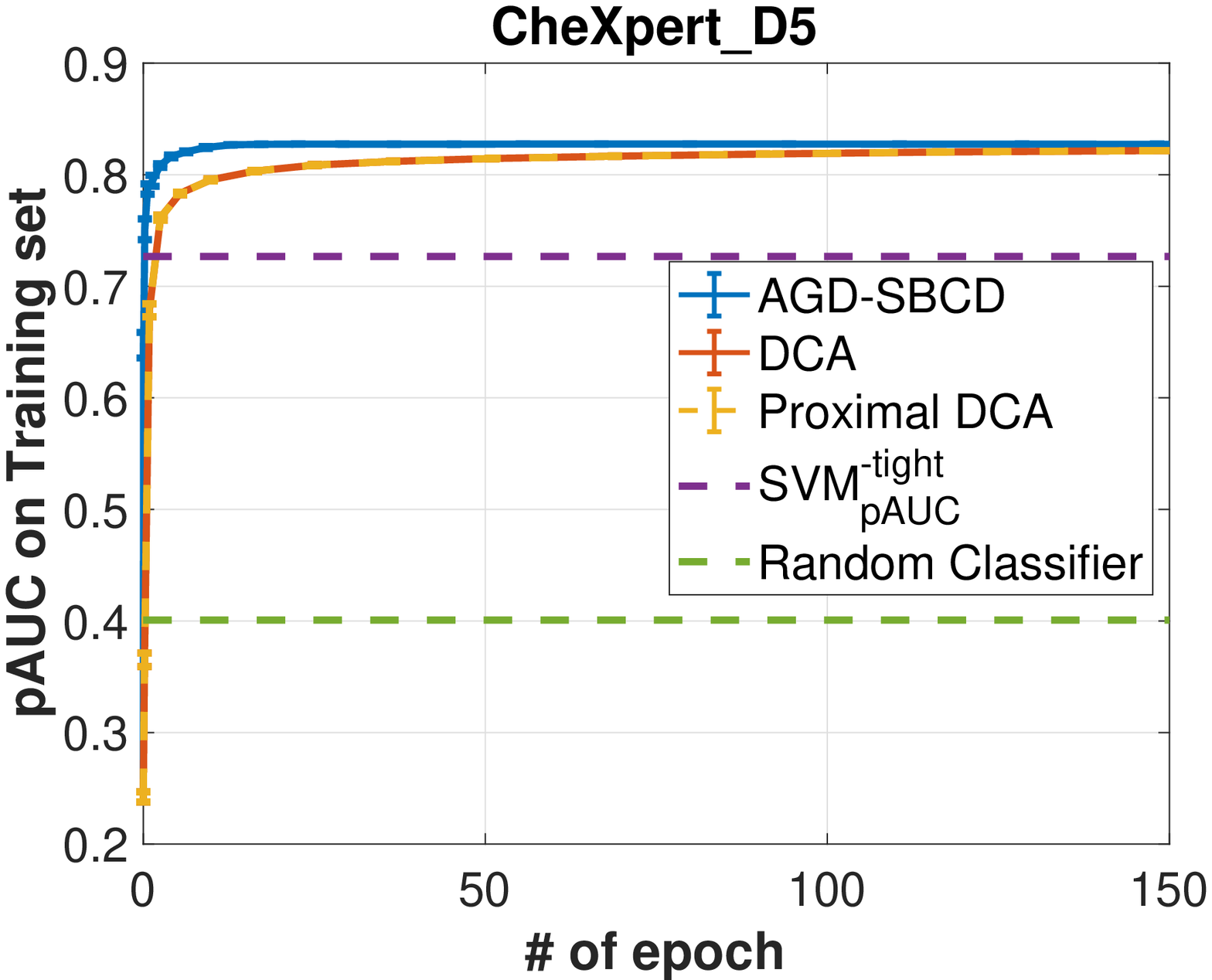}
\caption{Results for Patial AUC Maximization of D3, D4 and D5.}
\label{figure_d3-d5}
\end{figure}

We plot the ROC curves of three algorithms with linear model on the CheXpert testing set in Figure \ref{figure_auc}. The range of false positive rate for the pAUC is set as $[0.05, 0.5]$.

\begin{figure}
\centering
\includegraphics[width=0.32\linewidth]{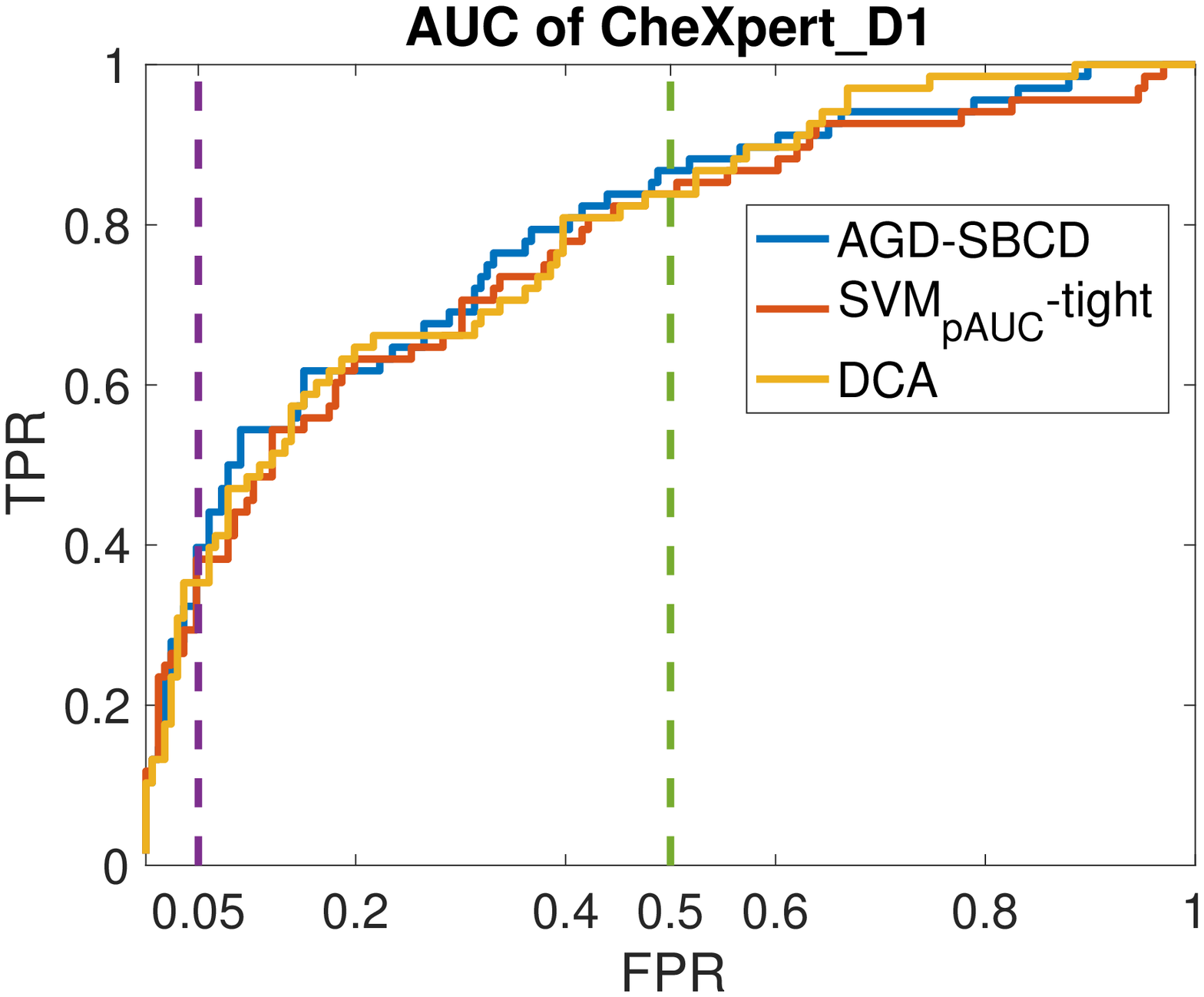}
\includegraphics[width=0.32\linewidth]{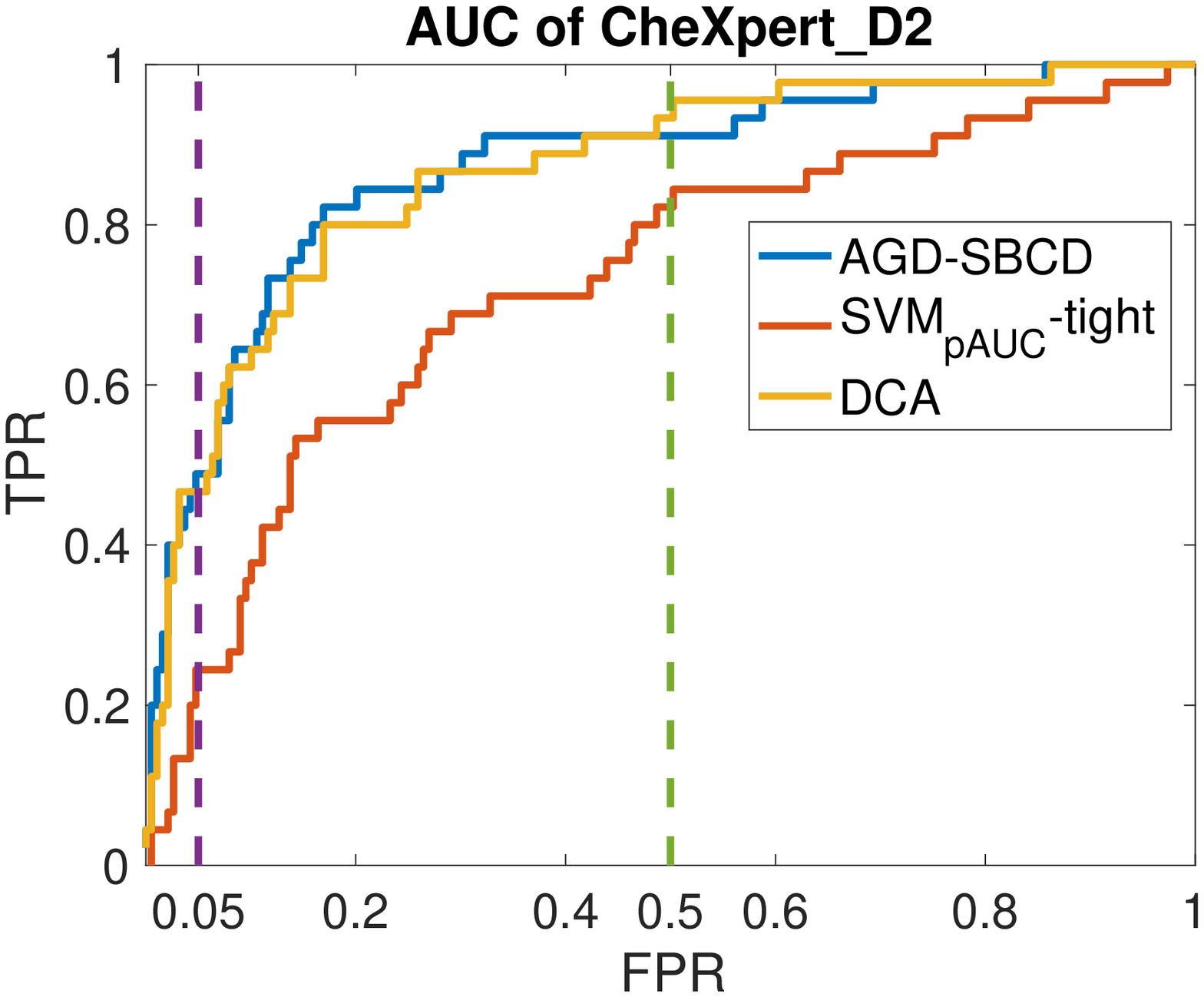}
\includegraphics[width=0.32\linewidth]{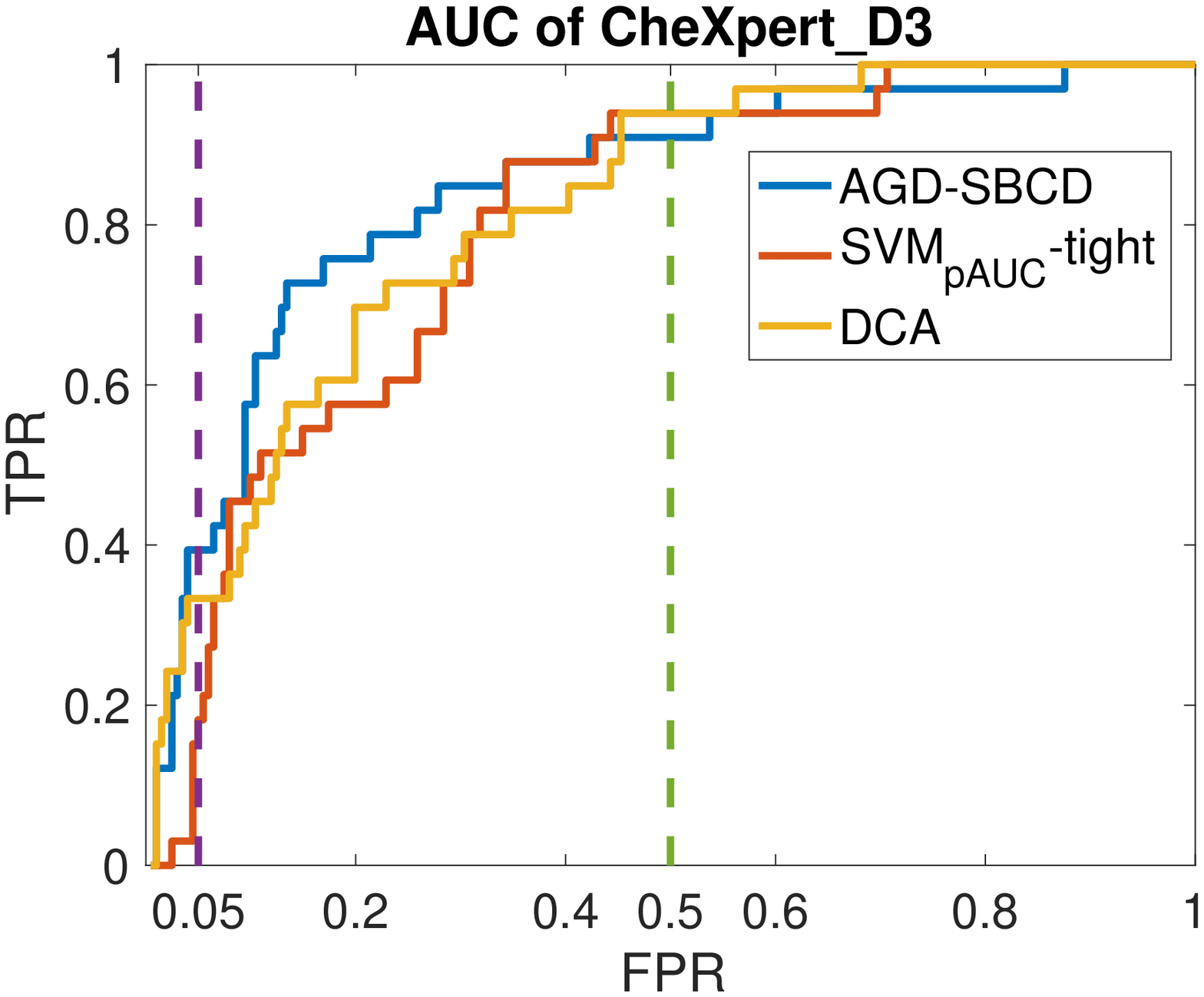}
\includegraphics[width=0.32\linewidth]{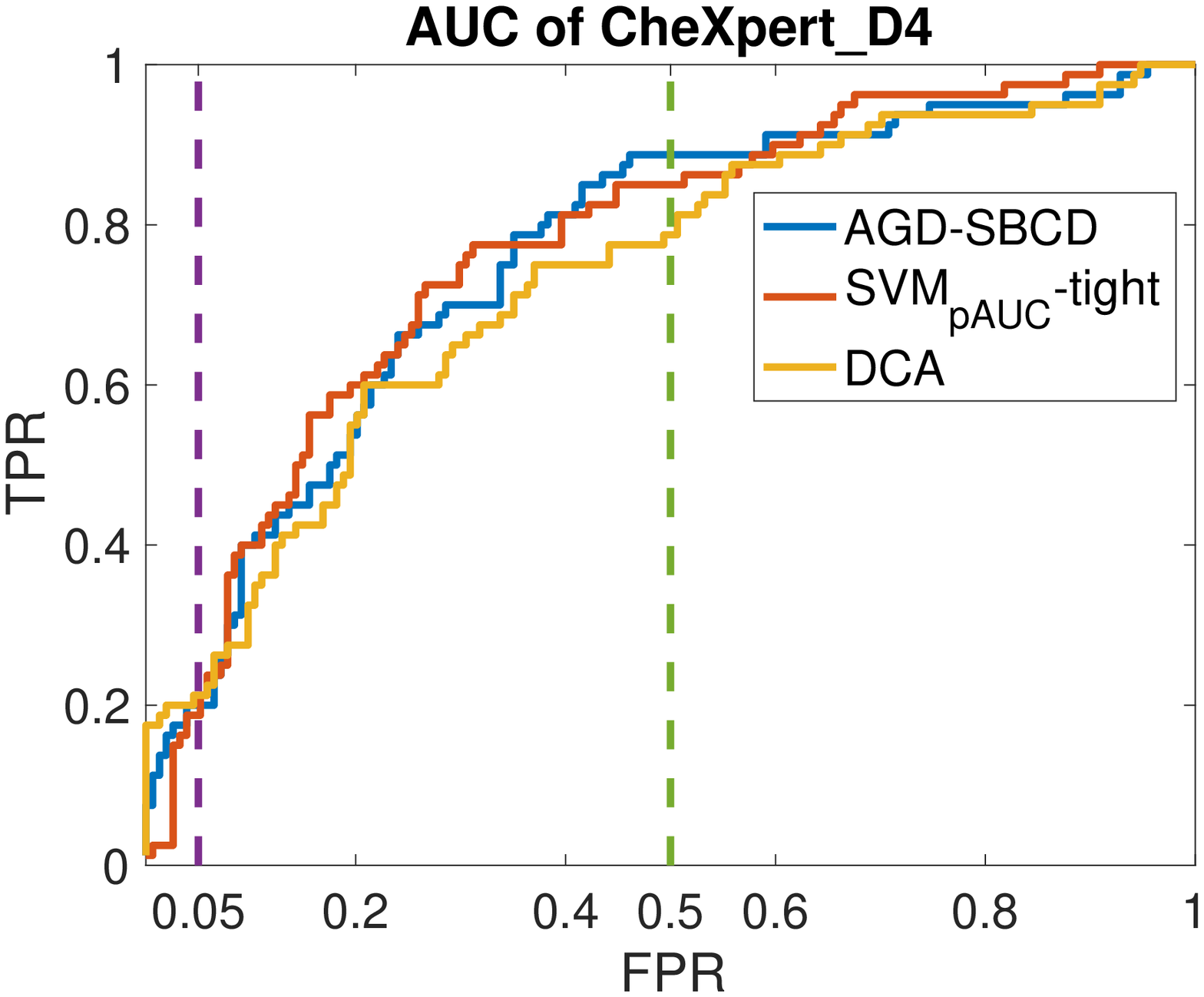}
\includegraphics[width=0.32\linewidth]{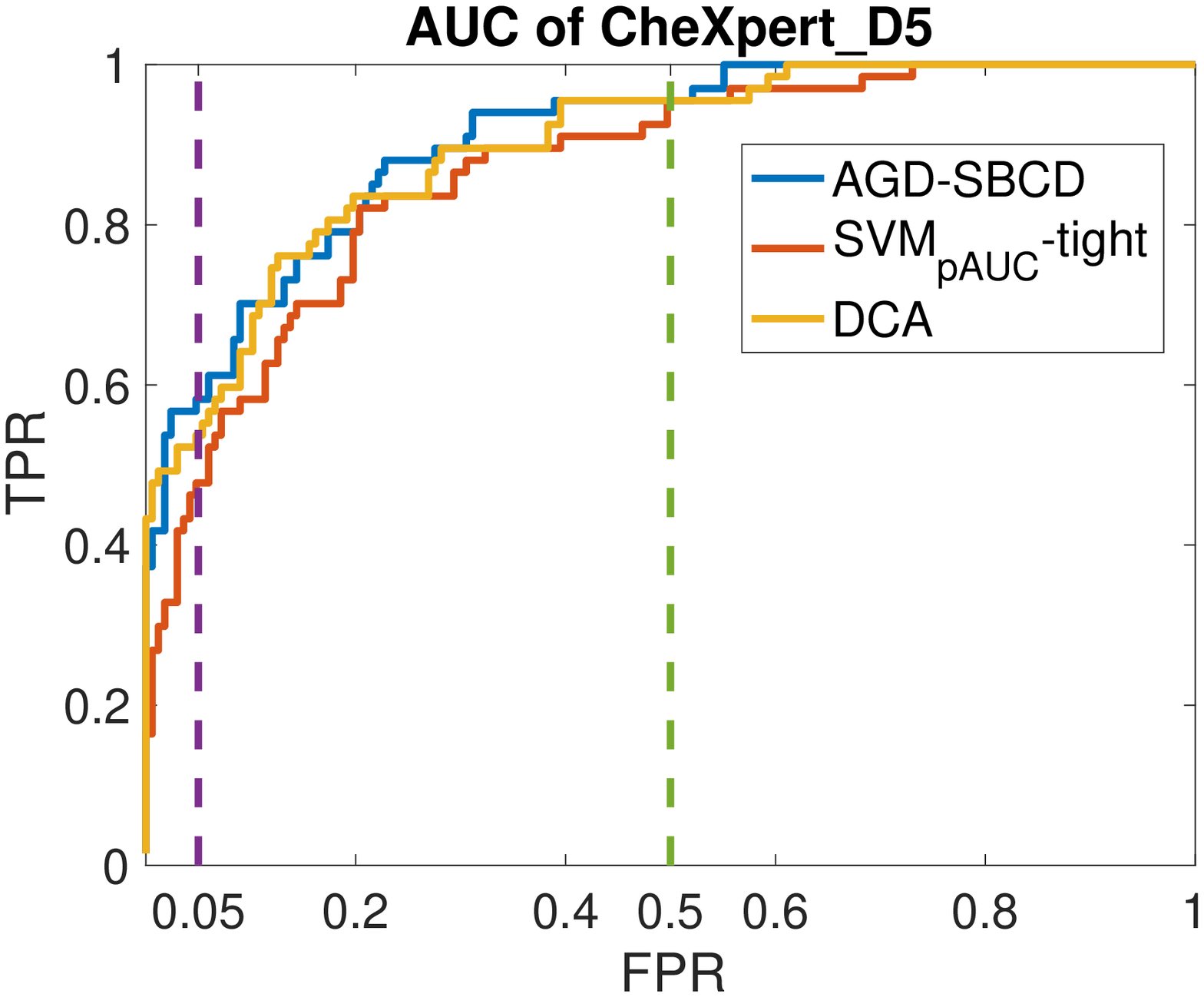}
\caption{ROC Curves of CheXpert Testing Set.}
\label{figure_auc}
\end{figure}

We plot the convergence curves of patial AUC on the CheXpert testing set of our algorithm AGD-SBCD and baseline MB in Figure \ref{figure_test_pauc}.

\begin{figure}
\centering
\includegraphics[width=0.32\linewidth]{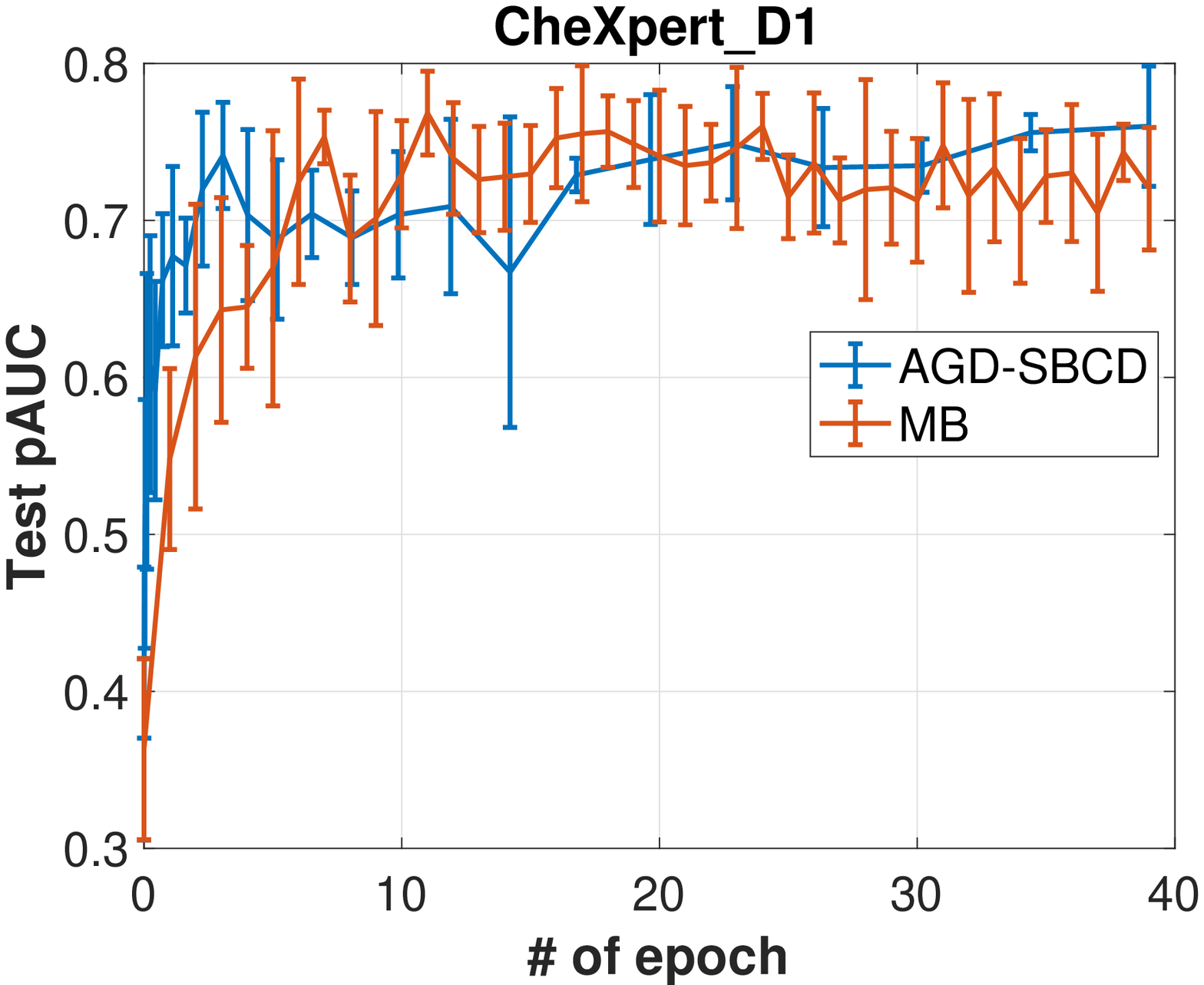}
\includegraphics[width=0.32\linewidth]{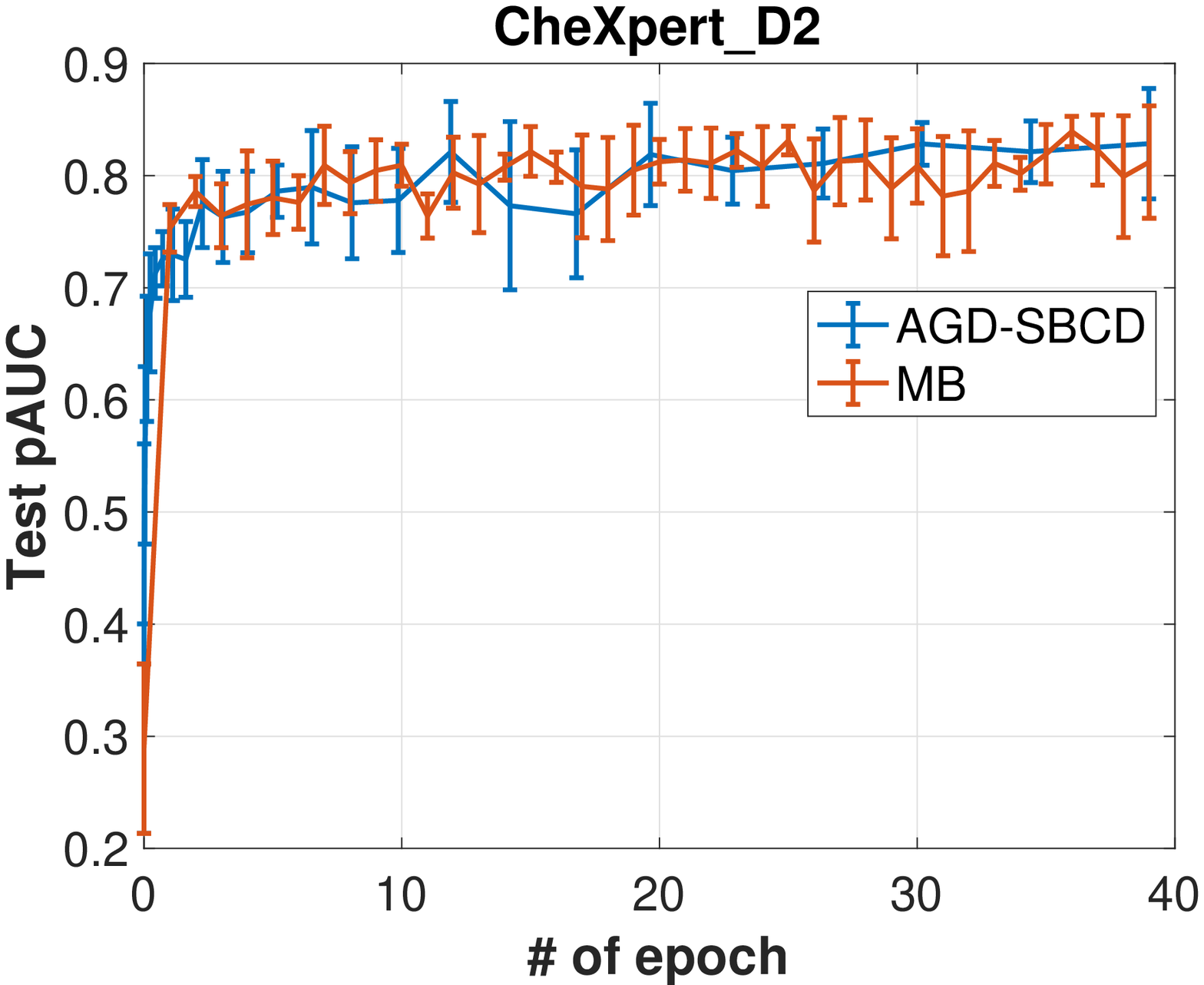}
\includegraphics[width=0.32\linewidth]{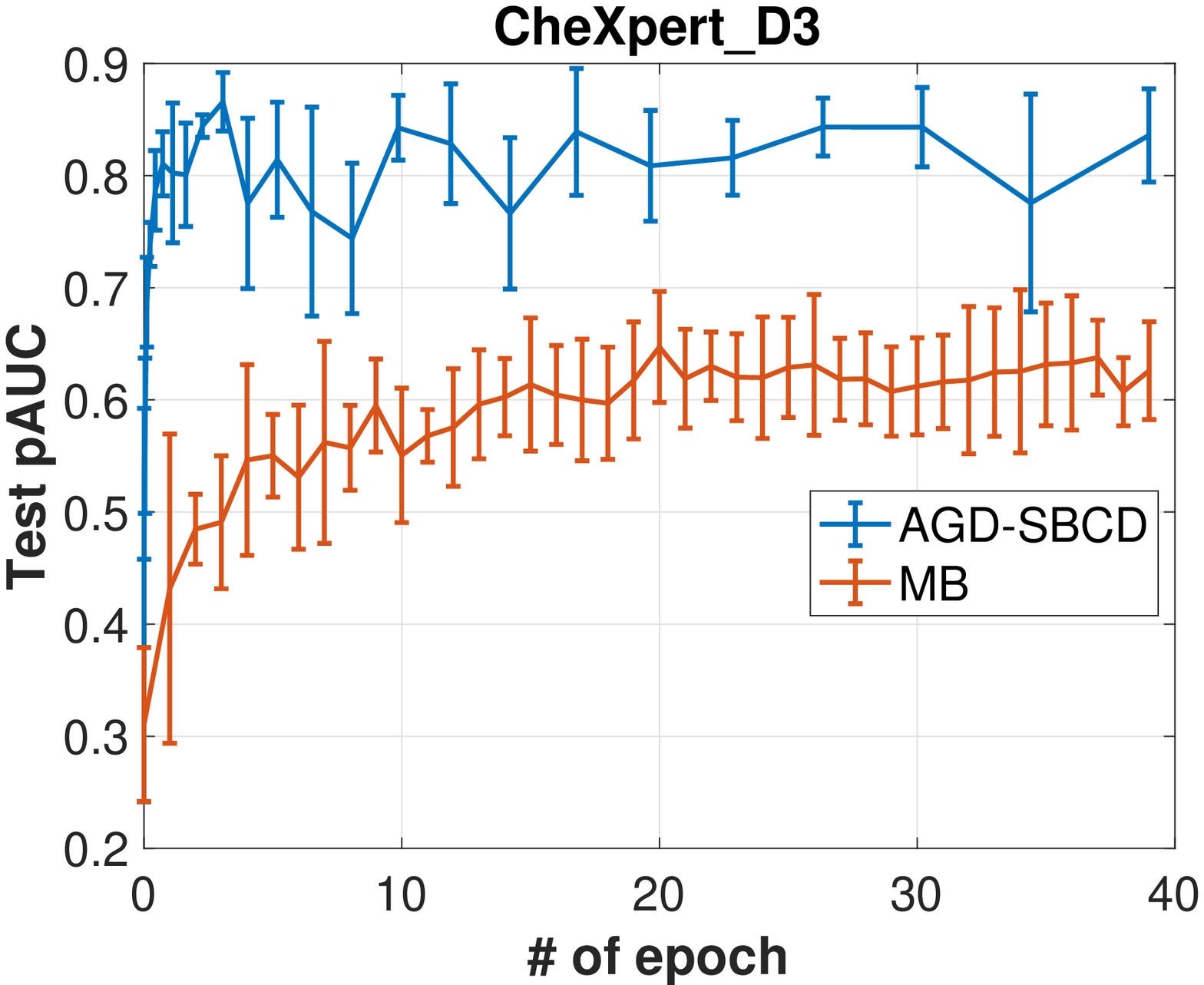}
\includegraphics[width=0.32\linewidth]{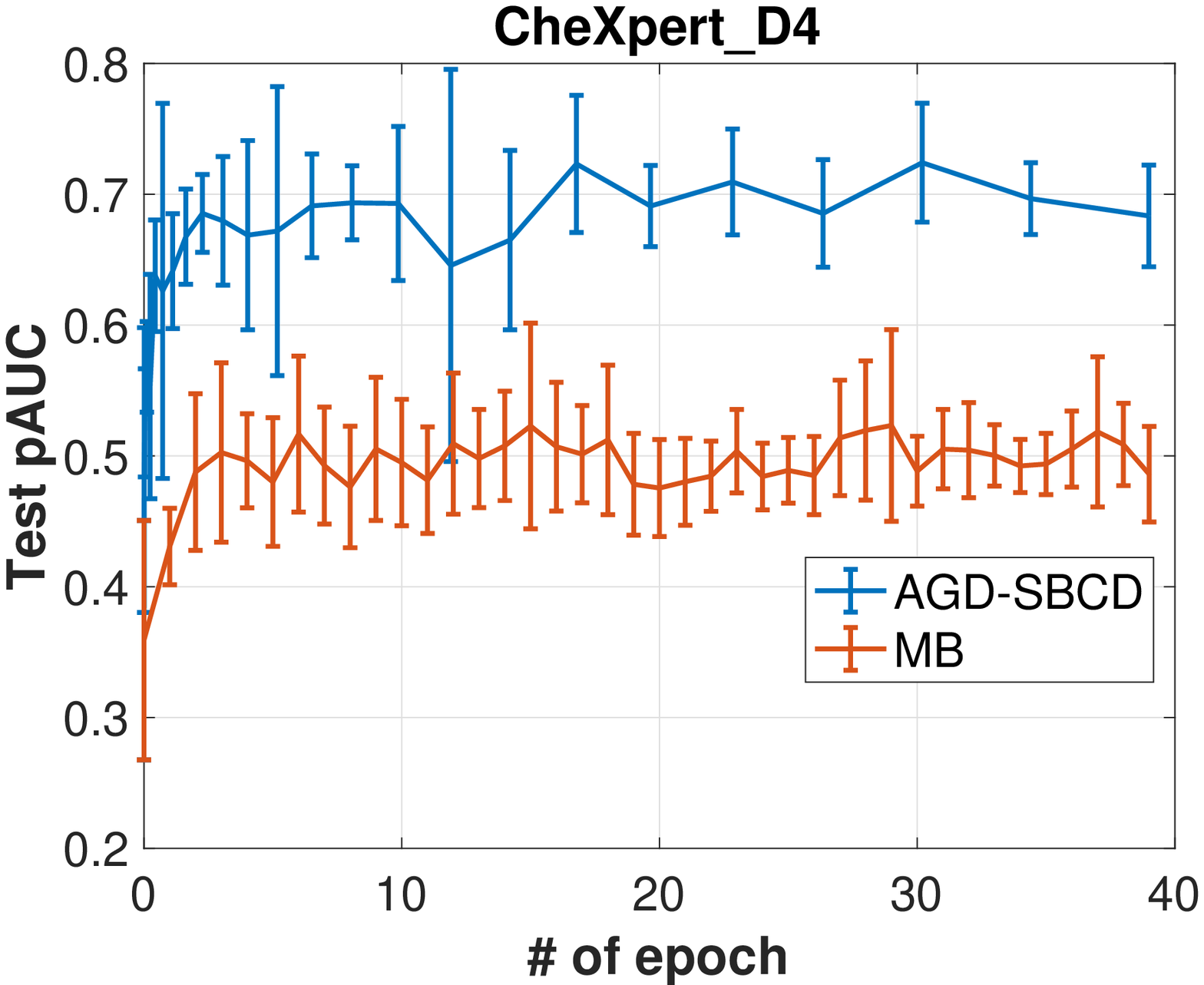}
\includegraphics[width=0.32\linewidth]{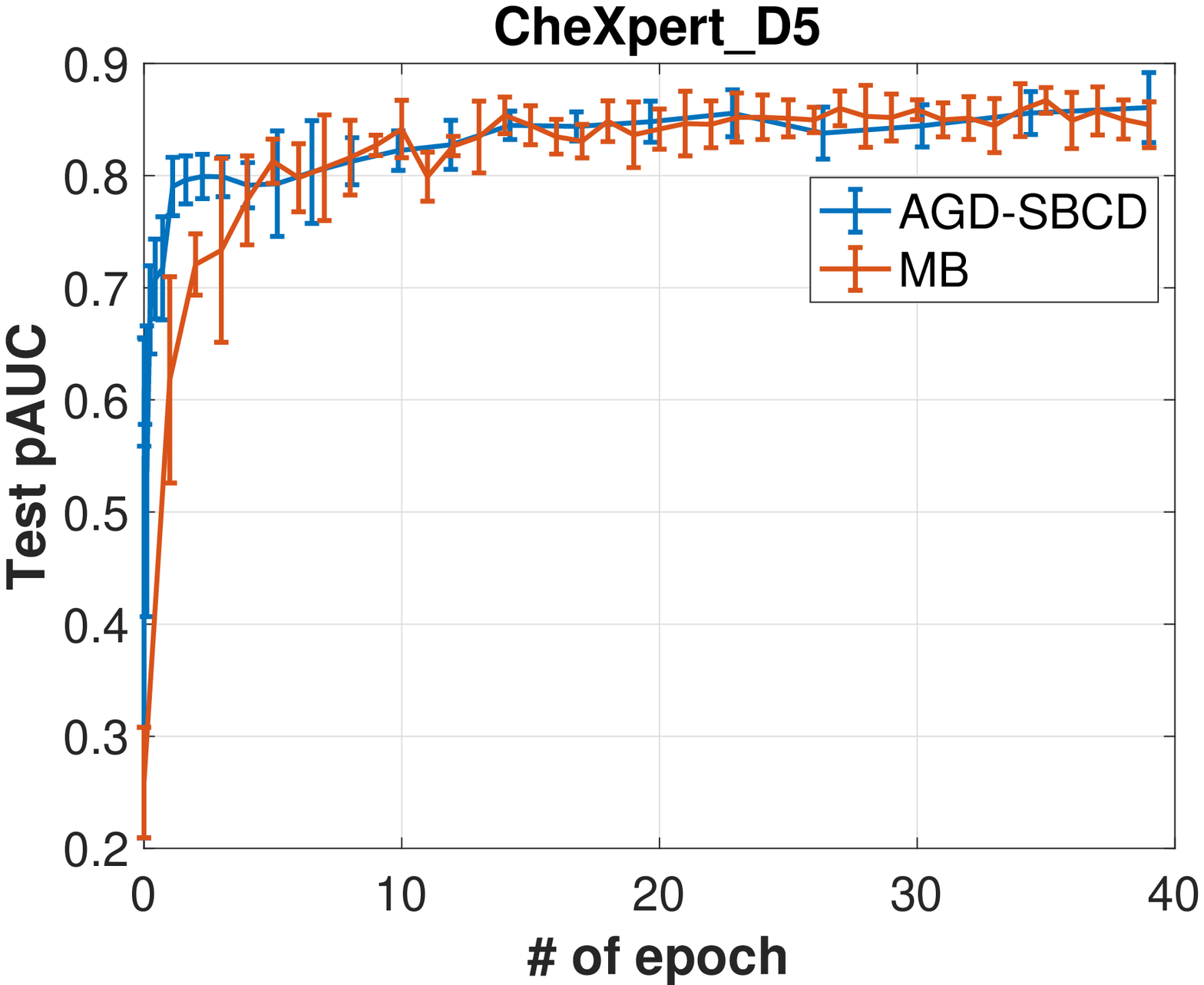}
\caption{Convergence Curves of Partial AUC on the CheXpert Testing Set.}
\label{figure_test_pauc}
\end{figure}

\subsection{DCA and  Algorithm for its Subproblem}
\label{subsec:dca_sub}
Although DCA is originally only studied for SoRR loss minimization in~\citet{hu2020learning}, it can also be applied to pAUC maximization. Hence, we only describe DCA for pAUC maximization which cover SoRR loss minimization as a special case. At the $k$th iteration of DCA, it computes a deterministic subgradient of $f^m(\vw)=\sum_{i=1}^{N_+}\phi_m(S_i(\vw^{(k)}))$ at iterate $\vw^{(k)}$, denoted by $\vxi^{(k)}$.
Then DCA updates $\vw^{(k)}$ by approximately solving the following subproblem using a SBCD method similar to Algorithm~\ref{alg:blockmirror} by sampling indexes $i$ and $j$ (see Algorithm~\ref{alg:sbcd_dca} 
for details). 
\begin{eqnarray}
\label{eq:DCAsubproblem}
(\vw^{(k+1)},\vlam^{(k+1)})&\approx&\argmin_{\vw,\vlam} n\mathbf{1}^{\top}\vlam+\sum_{i=1}^{N_+}\sum_{j=1}^{N_-}[s_{ij}(\vw)-\lambda_i]_+ -\vw^{\top} \vxi^{(k)}
\end{eqnarray}

Algorithm \ref{alg:sbcd_dca} is used to solve the subproblem \eqref{eq:DCAsubproblem} in the $k$th main iteration of DCA. It is similar to the SBCD method in Algorithm~\ref{alg:blockmirror}.

\begin{algorithm}[t]
\caption{Stochastic Block Coordinate Descent for (\ref{eq:DCAsubproblem}): $(\bar\vw,\bar\vlam )=$SBCD$(\vw,\vlam,T,\vxi^{(k)})$}
\label{alg:sbcd_dca}
\begin{algorithmic}[1]
   \State {\bfseries Input:} Initial solution $(\vw,\vlam)$, the number of iterations $T$, sample sizes $I$ and $J$ and a deterministic subgradient $\vxi^{(k)}$.
   \State Set $(\vw^{(0)},\vlam^{(0)})=(\vw,\vlam)$ and choose $(\eta_t,\theta_t)_{t=0}^{T-1}$.
   \For{$t=0$ {\bfseries to} $T-1$}
   \State Sample $\mathcal{I}_t\subset\{1,\dots,N_+\}$ with $|\mathcal{I}_t|=I$.
   \State Sample $\mathcal{J}_t\subset\{1,\dots,N_-\}$ with $|\mathcal{J}_t|=J$.
   \State Compute stochastic subgradient w.r.t. $\vw$:
   \small
   $$
   G_\vw^{(t)}=\frac{N_+N_-}{IJ}\sum\limits_{i\in \mathcal{I}_t}\sum\limits_{j\in\mathcal{J}_t}\nabla s_{ij}(\vw^{(t)})\mathbf{1}\left(s_{ij}(\vw^{(t)})>\lambda_i^{(t)}\right)-\vxi^{(k)}
   $$
   \normalsize
   \State Stochastic subgradient update on $\vw$:
   \small
    \begin{equation*}
    \vw^{(t+1)}= \vw^{(t)} - \eta_tG_\vw^{(t)}
   \end{equation*}
   \normalsize
   \State Compute stochastic subgradient w.r.t. $\lambda_i$ for $i\in \mathcal{I}_t$:
   \small
   $$
   G_{\lambda_i}^{(t)}=n-\frac{N_-}{J}\sum\limits_{j\in\mathcal{J}_t}\mathbf{1}\left(s_{ij}(\vw^{(t)})>\lambda_i^{(t)}\right)~\text{ for }
   i\in \mathcal{I}_t
   $$
   \normalsize
   \State Stochastic block subgradient update on $\lambda_i$ for $i\in \mathcal{I}_t$: 
   \small
   \begin{equation*}
   \lambda_i^{(t+1)}=\left\{
   \begin{array}{cc}
       \lambda_i^{(t)}-\theta_tG_{\lambda_i}^{(t)}& i\in \mathcal{I}_t, \\
        \lambda_i^{(t)}& i\notin \mathcal{I}_t. 
   \end{array}\right.
   \end{equation*}
   \EndFor
   \State {\bfseries Output:} $(\bar\vw ,\bar\vlam )=(\vw^{(T)},\vlam^{(T)})$.
\end{algorithmic}
\end{algorithm}

\subsection{Details about Proximal DCA}\label{subsec:prox_dca_sub}
At the $k$th iteration of proximal DCA, it computes a deterministic subgradient of $f^m(\vw)=\sum_{i=1}^{N_+}\phi_m(S_i(\vw^{(k)}))$ at iterate $\vw^{(k)}$, denoted by $\vxi^{(k)}$.
Then proximal DCA updates $\vw^{(k)}$ by approximately solving the  subproblem 
\begin{eqnarray}
\label{eq:proxDCAsubproblem}
(\vw^{(k+1)},\vlam^{(k+1)})\approx\argmin_{\vw,\vlam} n\mathbf{1}^{\top}\vlam+\sum_{i=1}^{N_+}\sum_{j=1}^{N_-}[s_{ij}(\vw)-\lambda_i]_+ + \frac{L}{2}\|\vw-\vw^{(k)}\|^2 -\vw^{\top} \vxi^{(k)}
\end{eqnarray}
using a SBCD method similar to Algorithm~\ref{alg:blockmirror} by sampling indexes $i$ and $j$. In proximal DCA, $L$ is tuned from $\{10^{-5}\sim 10^0\}$ and other hyper-parameters are tuned from the same range as DCA.


\subsection{Details about CIFAR-10-LT and Tiny-Imagenet-200-LT Datasets}\label{sec:longtail}
\textbf{Binary CIFAR-10-LT Dataset}. To construct a binary classification, we set labels of category 'cats' to 1 and labels of other categories to 0. We split the training data in train/val = 9:1 and use the validation dataset as the testing set. More details are provided in Table \ref{data_lt}.

\textbf{Binary Tiny-Imagenet-200-LT Dataset}. To construct a binary classification, we set labels of category 'birds' to 1 and labels of other categories to 0. We split the training data in train/val = 9:1 and use the validation dataset as the testing set. More details are provided in Table \ref{data_lt}.

\begin{table}[ht]
\caption{Statistics of the Long-Tailed Datesets.}
\label{data_lt}
\centering
\begin{small}
\begin{tabular}{ccccc}
\toprule
Dataset & Pos. Class ID & Pos. Class Name & \# Pos. Samples & \# Neg. Samples\\
\midrule
CIFAR-10-LT & 3 & cats & 1077 & 11329\\
Tiny-Imagenet-200-LT & 11,20,21,22 & birds & 1308 & 20241\\
\bottomrule
\end{tabular}
\end{small}
\end{table}

\subsection{Convergence Curves and Testing Performance of AGD-SBCD on Different $\mu$}
\label{sec:sensitivity}
\begin{figure}
\centering
\includegraphics[width=0.4\linewidth]{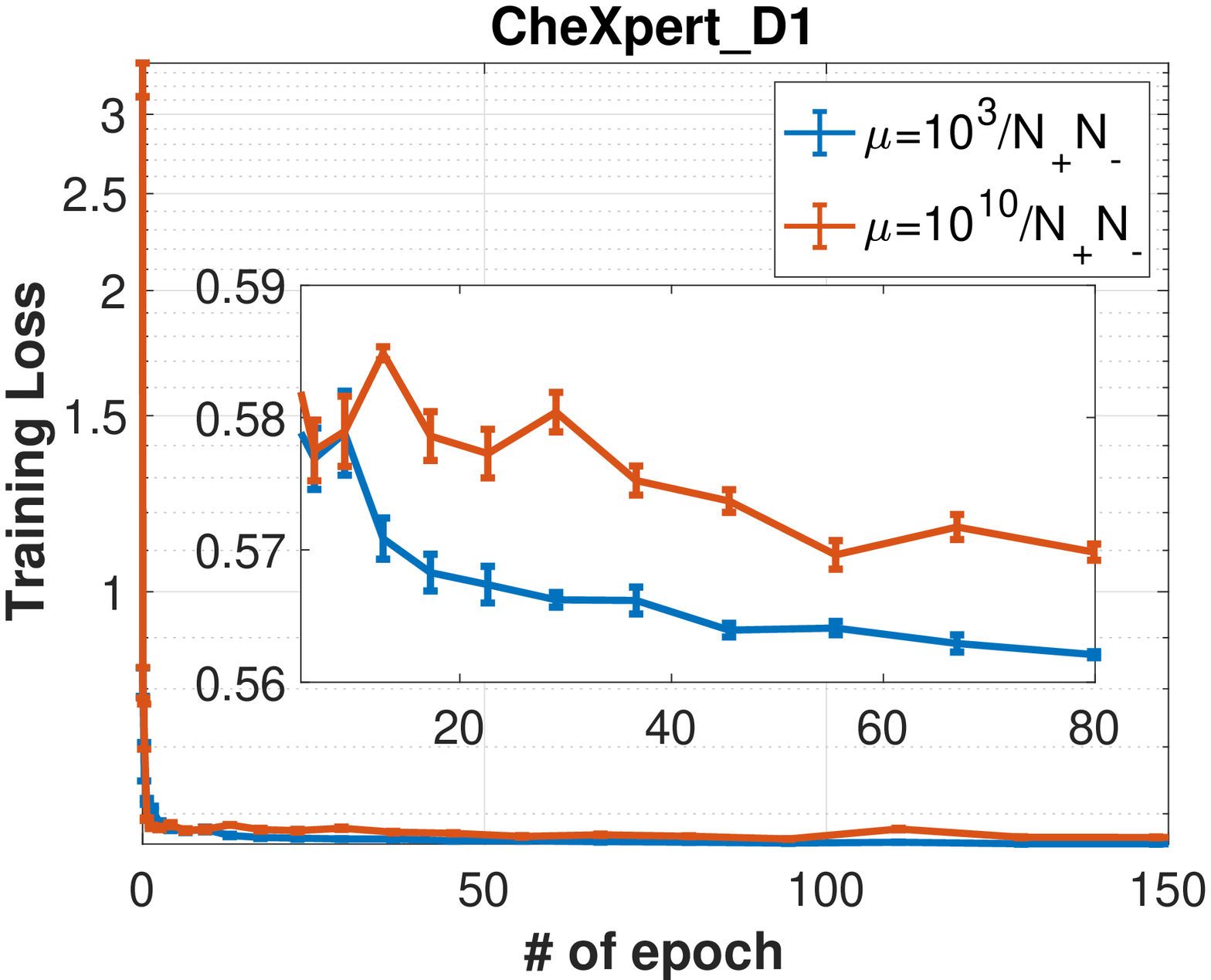}
\includegraphics[width=0.4\linewidth]{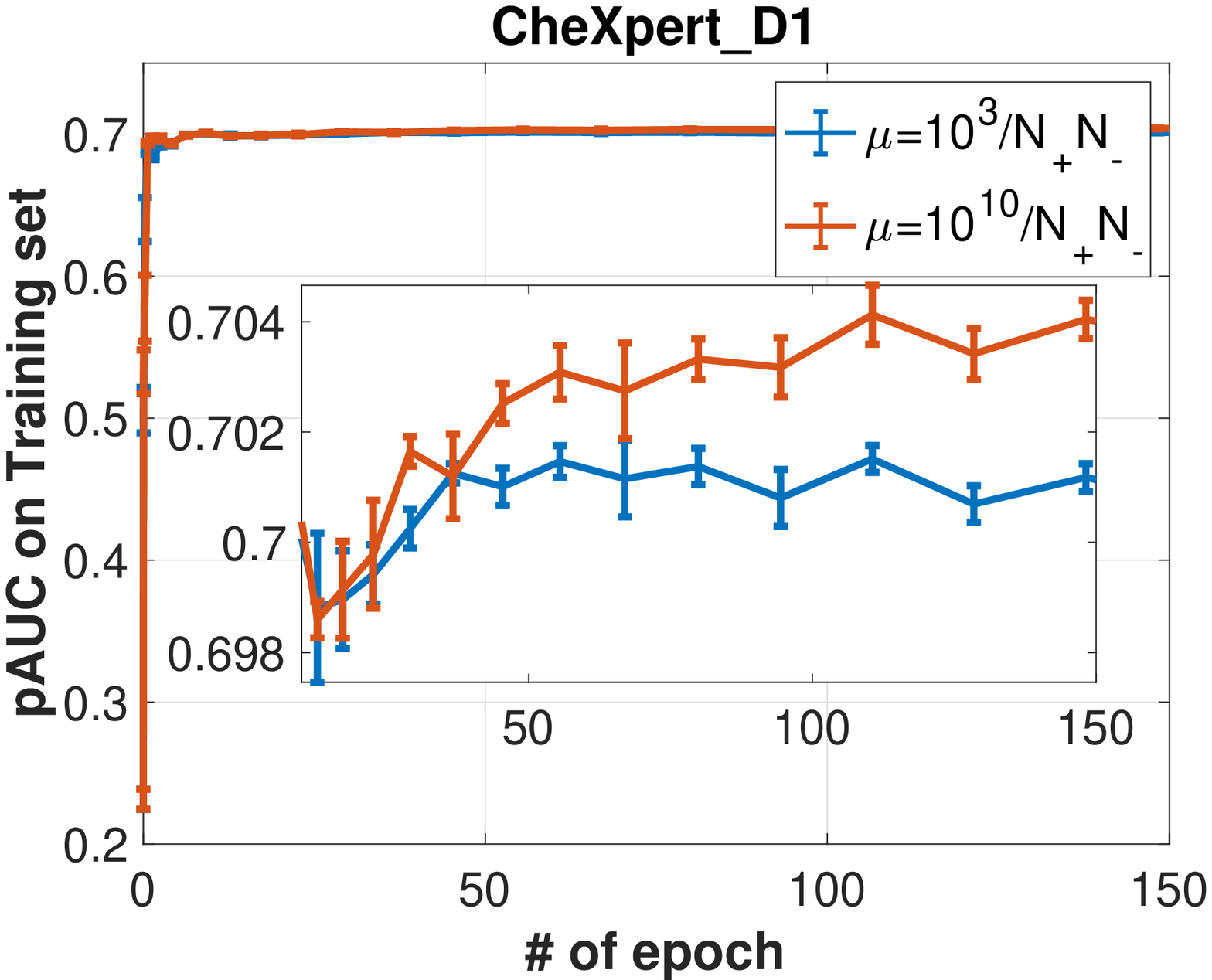}
\includegraphics[width=0.4\linewidth]{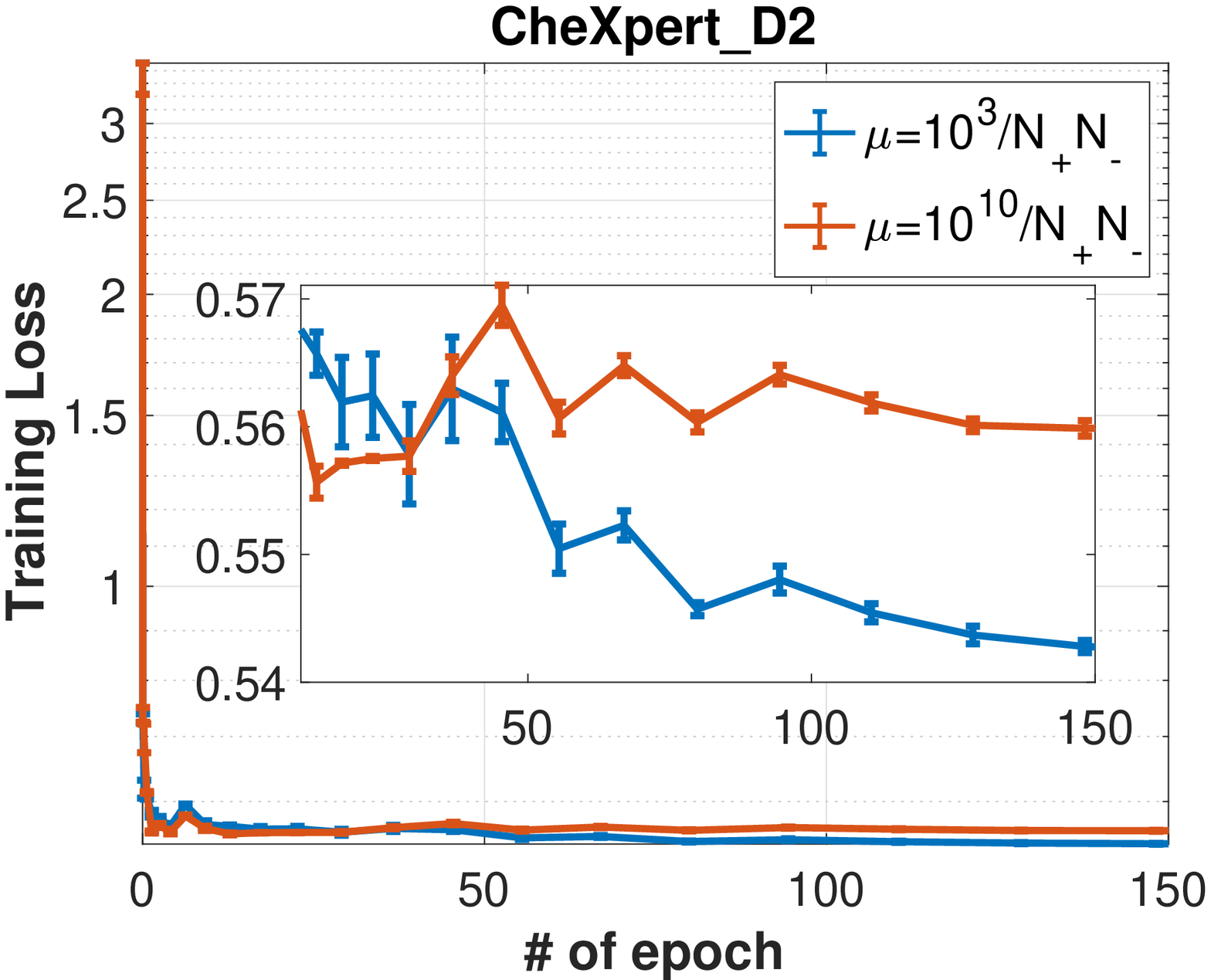}
\includegraphics[width=0.4\linewidth]{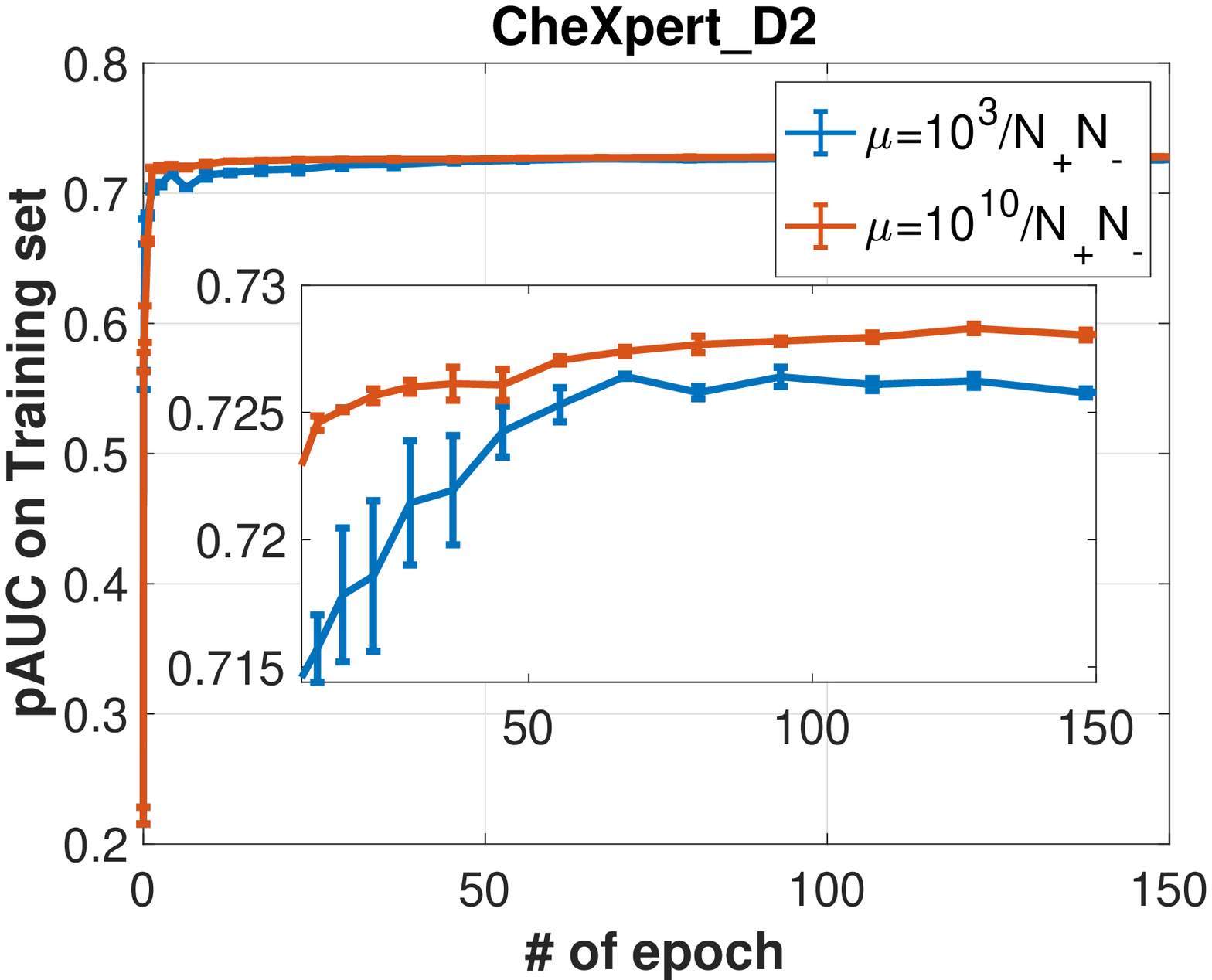}
\includegraphics[width=0.4\linewidth]{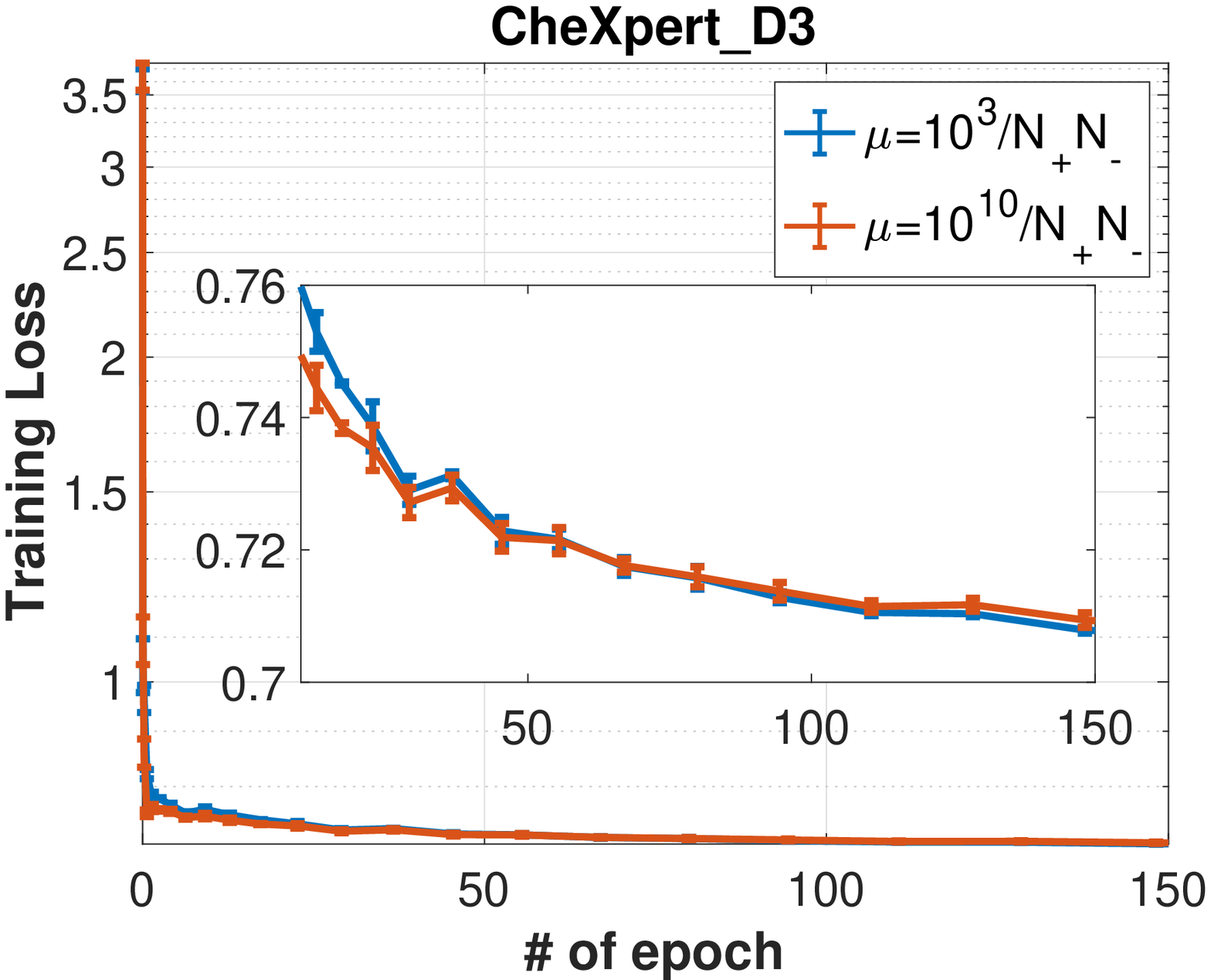}
\includegraphics[width=0.4\linewidth]{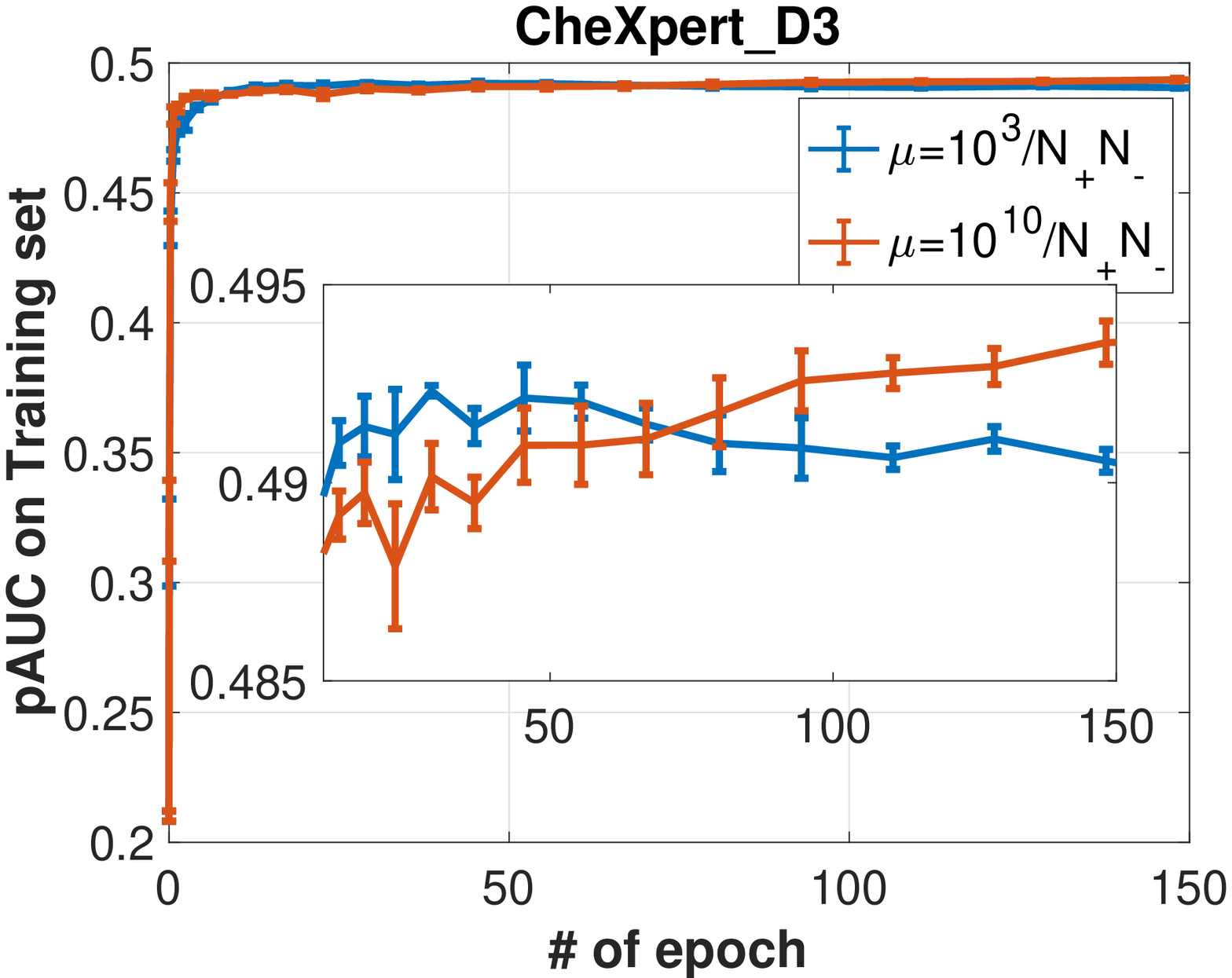}
\includegraphics[width=0.4\linewidth]{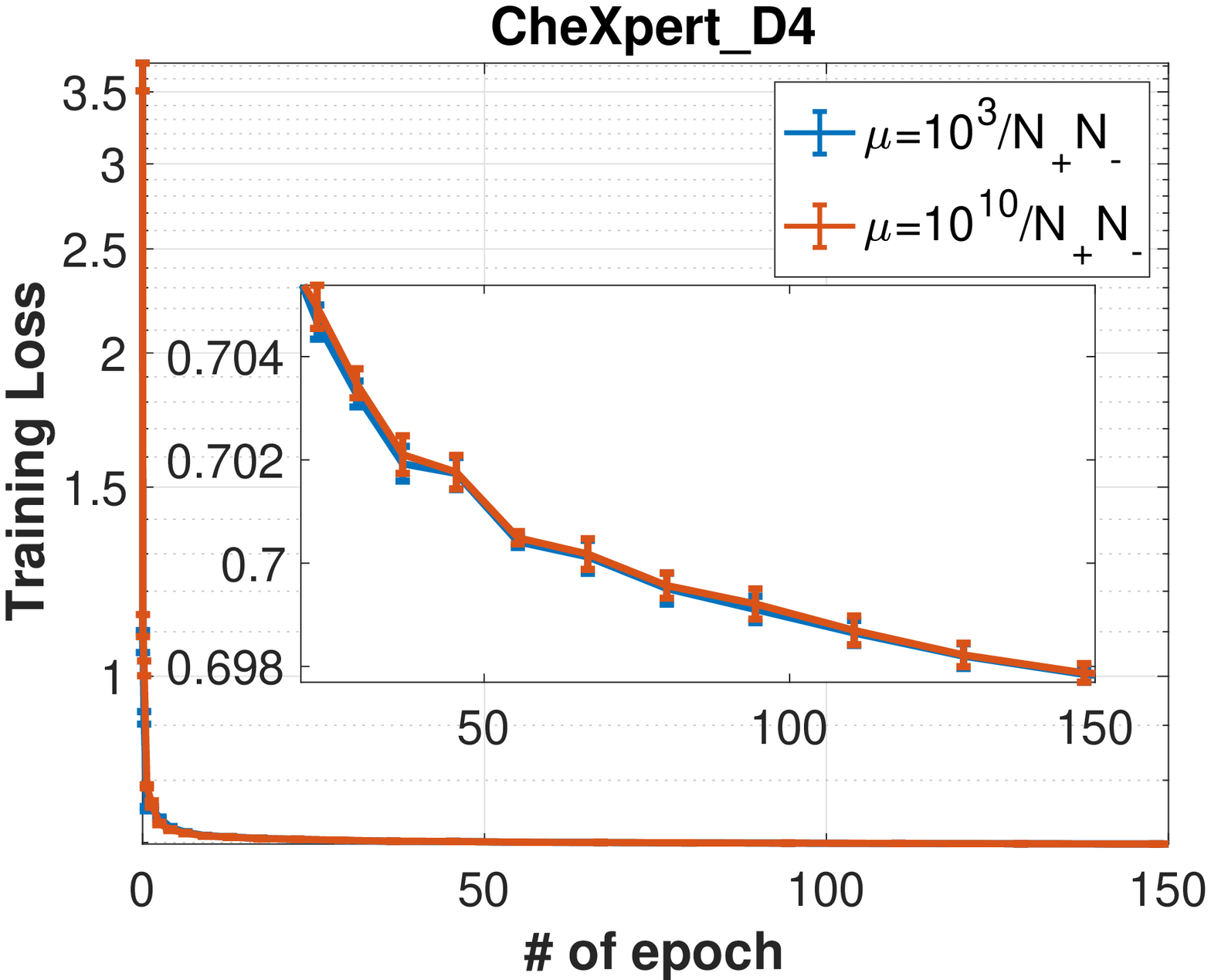}
\includegraphics[width=0.4\linewidth]{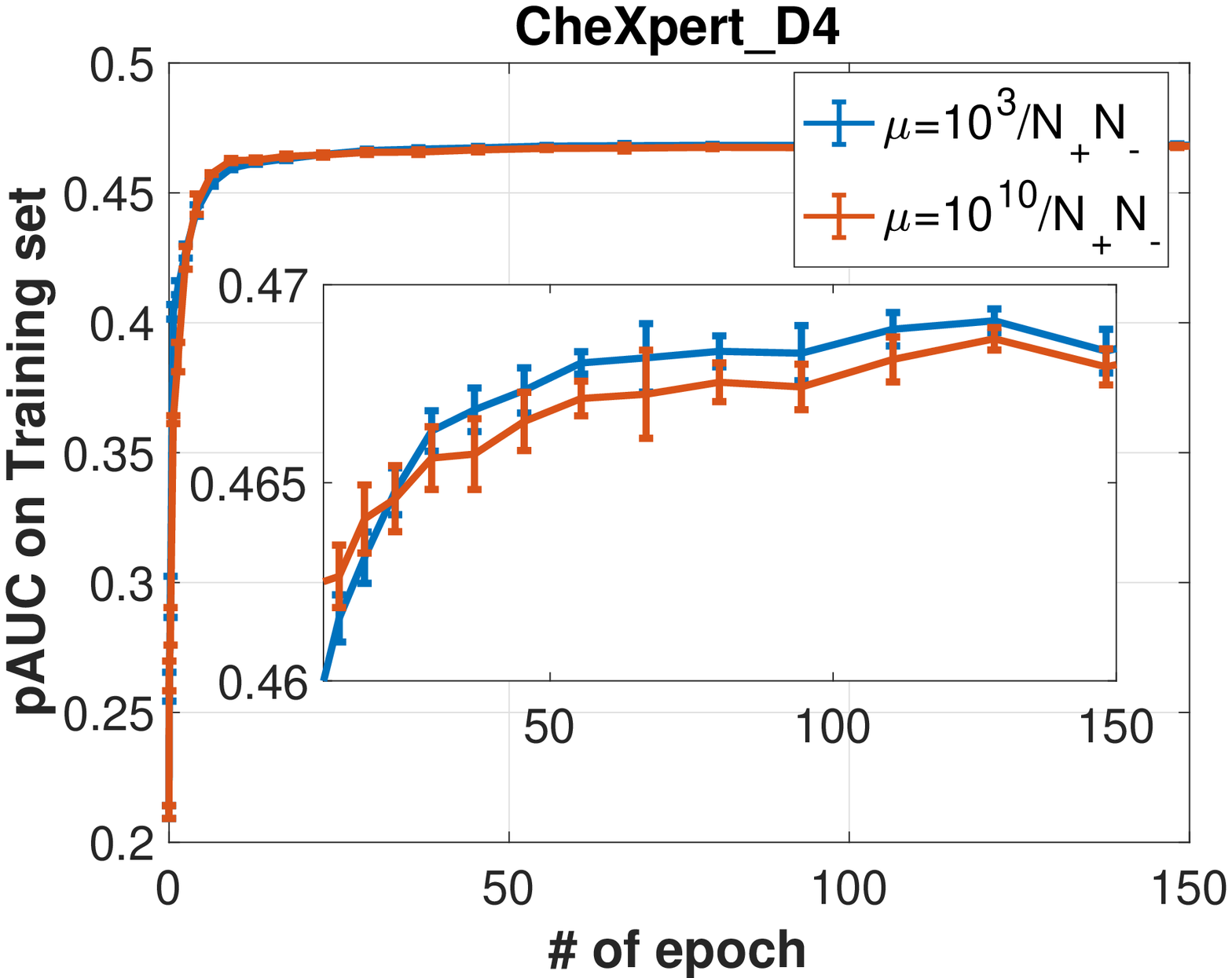}
\includegraphics[width=0.4\linewidth]{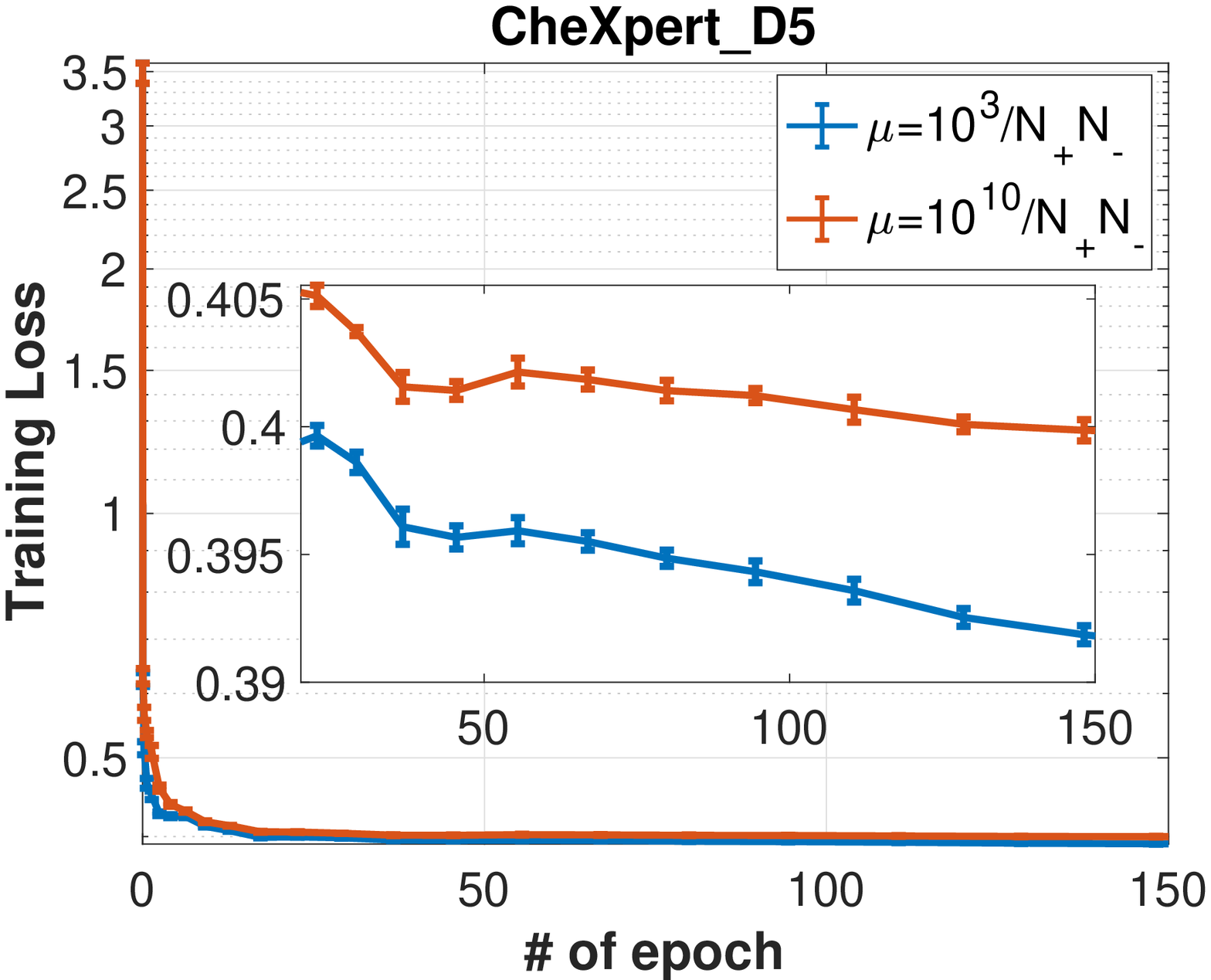}
\includegraphics[width=0.4\linewidth]{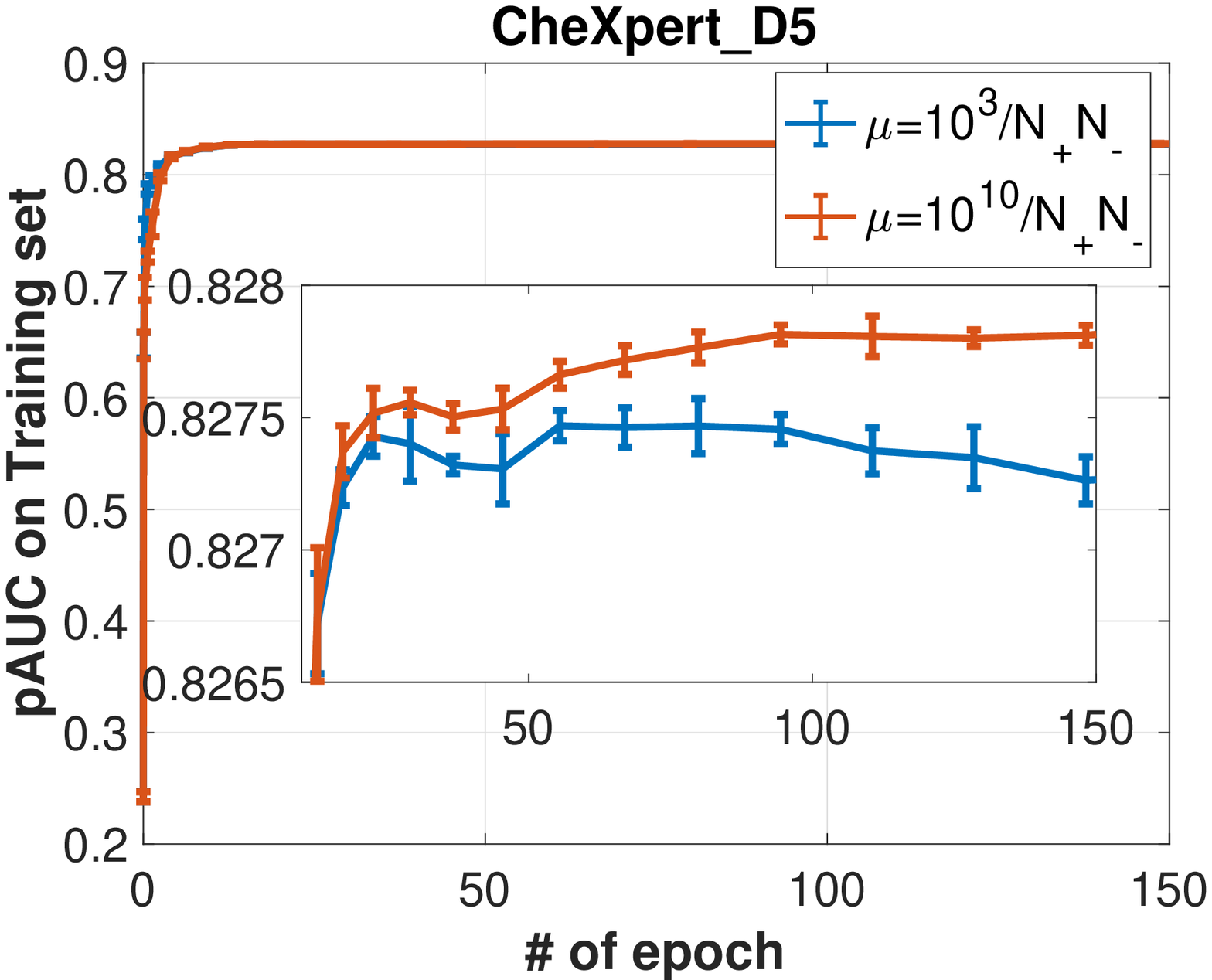}
\caption{Convergence curves of training loss and training pAUC of AGD-SBCD on different $\mu$.}
\label{figure_mu}
\end{figure}

\begin{table}[ht]
\caption{The pAUCs with FPRs between $0.05$ and $0.5$ on the testing sets from the CheXpert data of AGD-SBCD on different $\mu$.}
\label{val_pauc_mu}
\centering
\resizebox{1\textwidth}{!}{
\begin{tabular}{c|c|ccccc}
\toprule
 & $\mu$ & D1 & D2 & D3 & D4 & D5\\
\midrule
\multirow{3}{2.5em}{AGD-SBCD}& $10^3/N_+N_-$ & 0.6721$\pm$0.0081 & 0.8257$\pm$0.0025 &  0.8016$\pm$0.0075 & 0.6340$\pm$0.0165 & 0.8500$\pm$0.0017\\
& $10^8/N_+N_-$ & 0.6617$\pm$0.0073 & 0.8242$\pm$0.0057 & 0.8272$\pm$0.0070 & 0.6323$\pm$0.0028 & 0.8463$\pm$0.0003\\
& $10^{10}/N_+N_-$ & 0.6636$\pm$0.0056 & 0.8242$\pm$0.0057 & 0.8237$\pm$0.0077 & 0.6332$\pm$0.0072 & 0.8463$\pm$0.0002\\
\bottomrule
\end{tabular}
}
\vskip -0.1in
\end{table}

\subsection{Convergence Curves of Training pAUC over GPU Time}\label{sub:time}

\begin{figure}
\centering
\includegraphics[width=0.3\linewidth]{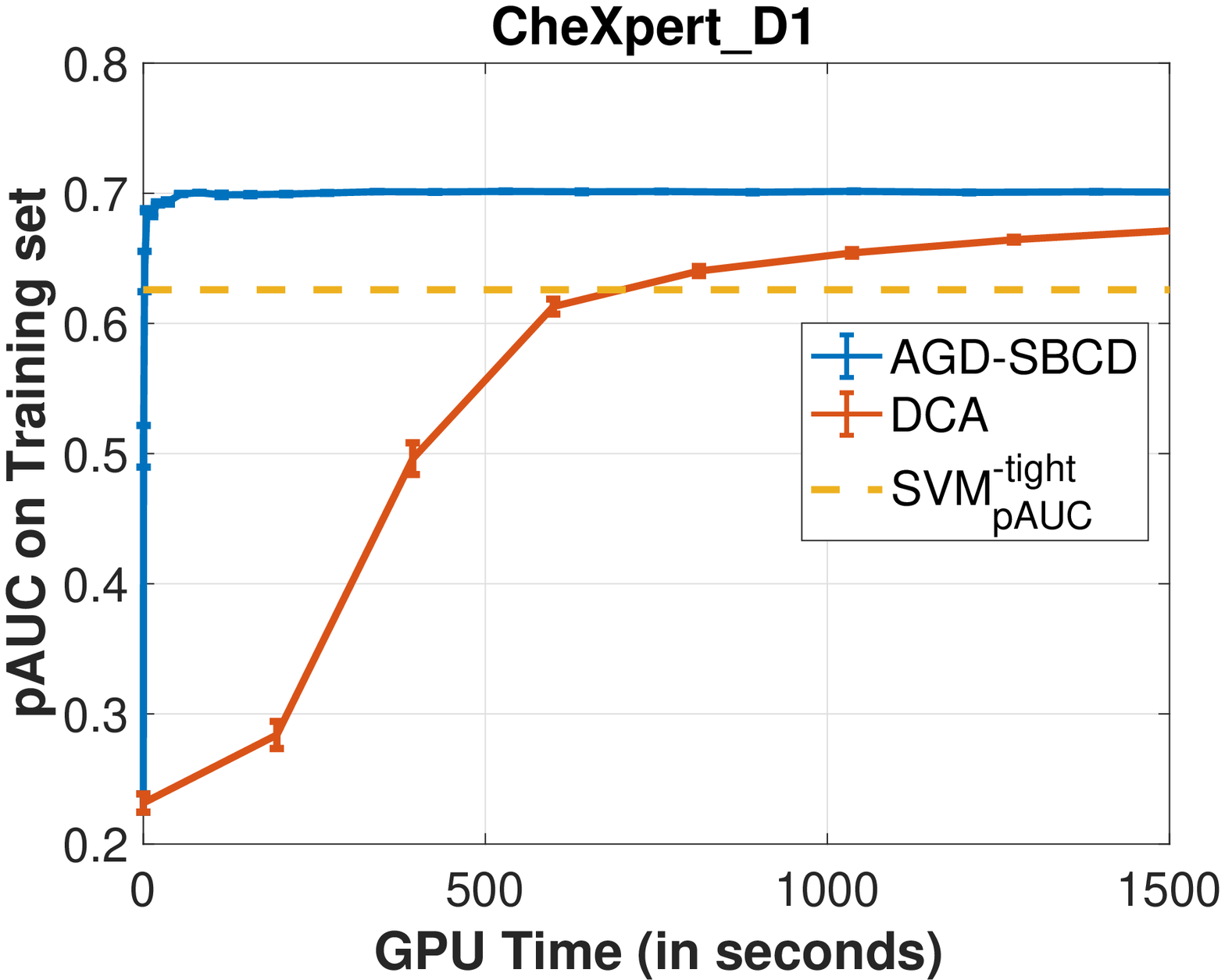}
\includegraphics[width=0.3\linewidth]{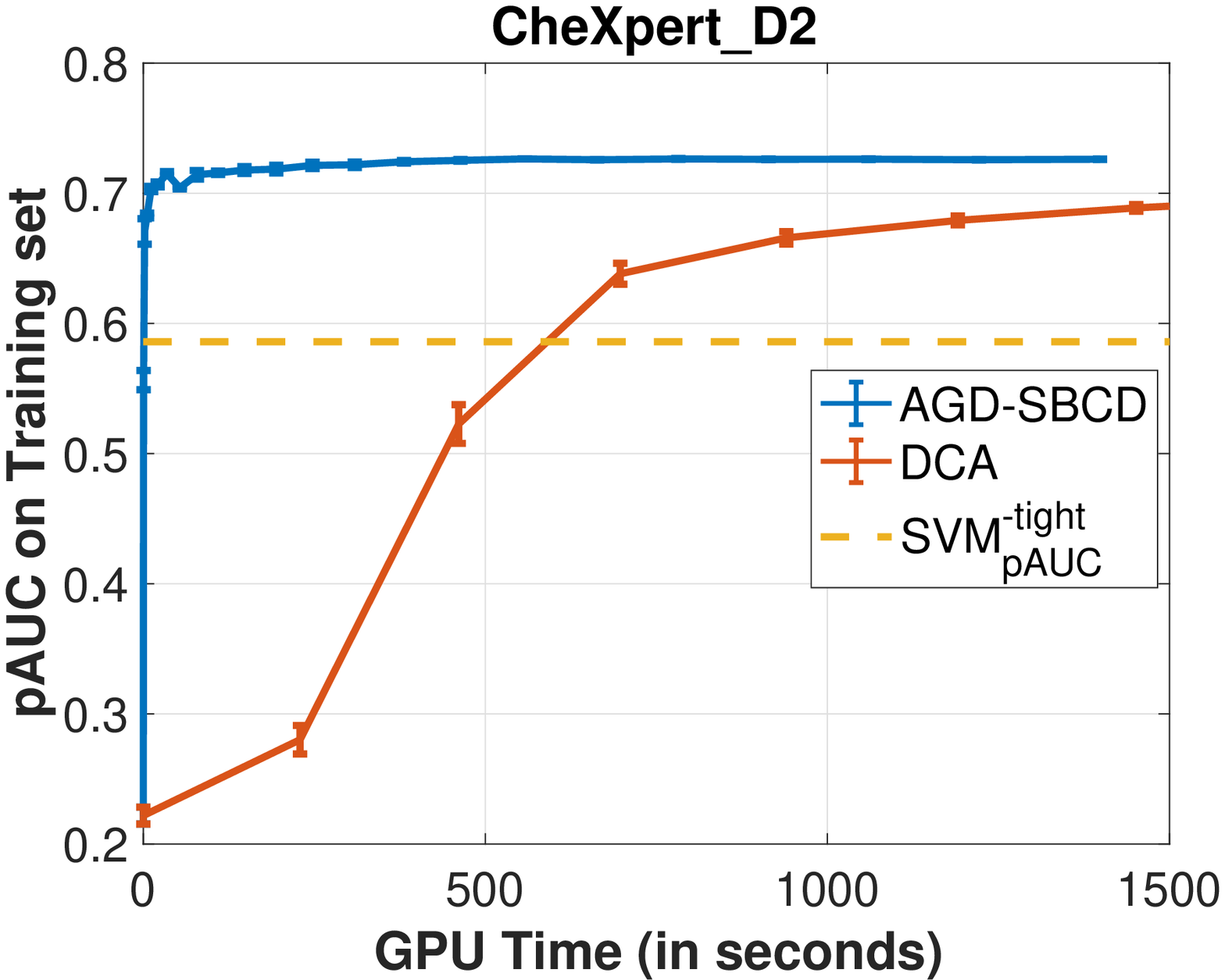}
\includegraphics[width=0.3\linewidth]{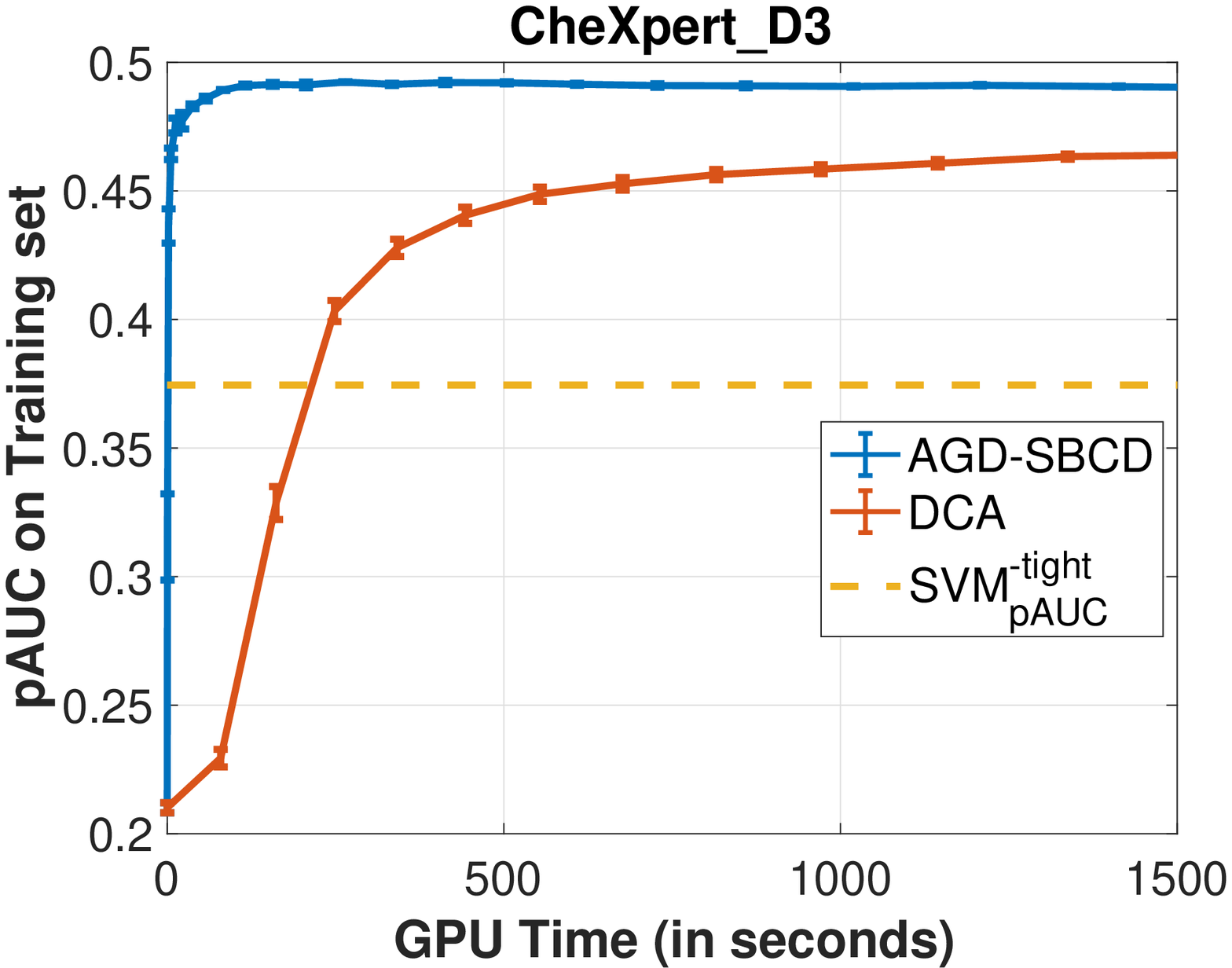}
\includegraphics[width=0.3\linewidth]{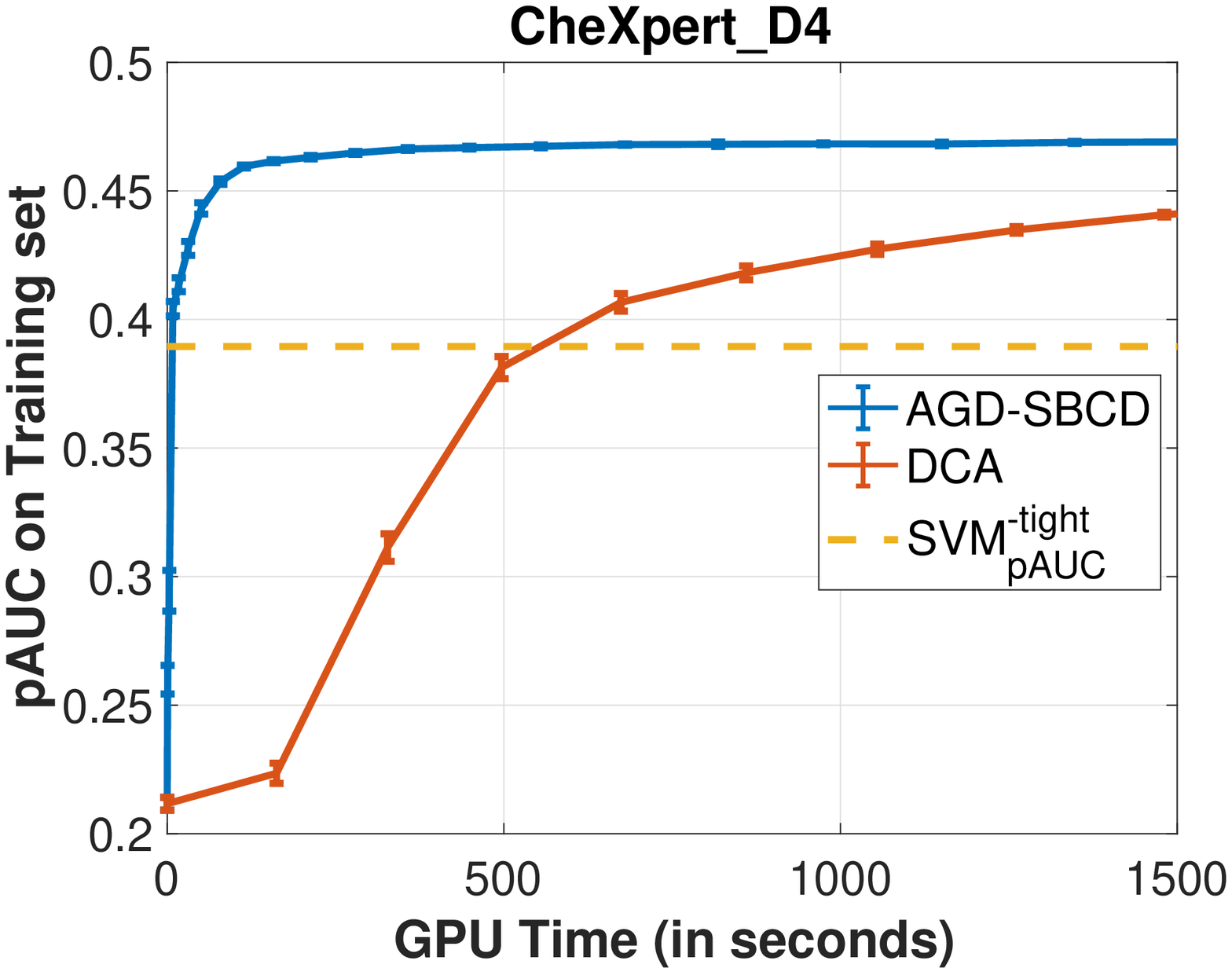}
\includegraphics[width=0.3\linewidth]{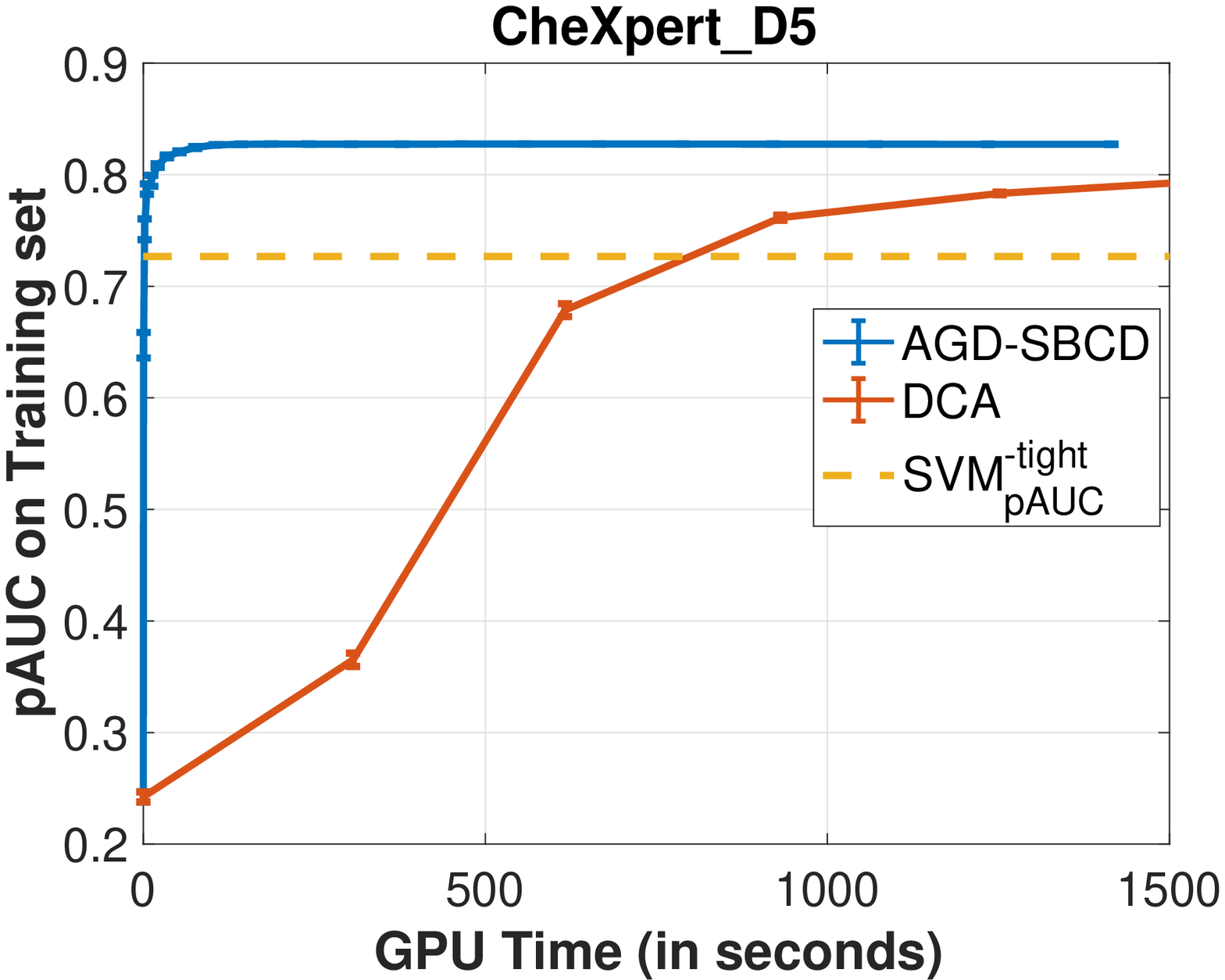}
\caption{Convergence curves of training pAUC over GPU time. (The dashed line of SVM$_{pAUC}$-tight does not reflect its convergence with GPU time. It is only reported for reference since we use the authors’ MATLAB implementation which does not support GPU.)}
\label{figure_time}
\end{figure}

\end{document}